\renewcommand*{\eqref}[1]{%
  \hyperref[{#1}]{\textup{\tagform@{\ref*{#1}}}}%
}
\newtheorem{thm}{Theorem}
\newtheorem{lem}{Lemma}
\newtheorem{cor}{Corollary}
\newtheorem{defn}{Definition}
\newtheorem{claim}{Claim}
\newtheorem{prop}{Proposition}
\newtheorem{remark}{Remark}
\newcommand{\R}{\mathbb{R}}
\DeclareMathOperator*{\argmin}{arg\,min}
\DeclarePairedDelimiterX{\infdivx}[2]{(}{)}{%
  #1\;\delimsize\|\;#2%
}
\title{A Recovery Theory for Diffusion Priors: \\ Deterministic Analysis of the Implicit Prior Algorithm} 
\author{Oscar Leong\thanks{University of California, Los Angeles (email: \texttt{oleong@stat.ucla.edu})}
\and Yann Traonmilin\thanks{University of Bordeaux (email: \texttt{yann.traonmilin@u-bordeaux.fr})}}
\begin{document}

\maketitle

\begin{abstract}
Recovering high-dimensional signals from corrupted measurements is a central challenge in inverse problems. Recent advances in generative diffusion models have shown remarkable empirical success in providing strong data-driven priors, but rigorous recovery guarantees remain limited. In this work, we develop a theoretical framework for analyzing deterministic diffusion-based algorithms for inverse problems, focusing on a deterministic version of the algorithm proposed by Kadkhodaie \& Simoncelli \cite{kadkhodaie2021stochastic}. First, we show that when the underlying data distribution concentrates on a low-dimensional model set, the associated noise-convolved scores can be interpreted as time-varying projections onto such a set. This leads to interpreting previous algorithms using diffusion priors for inverse problems as generalized projected gradient descent methods with varying projections. When the sensing matrix satisfies a restricted isometry property over the model set, we can derive quantitative convergence rates that depend explicitly on the noise schedule. We apply our framework to two instructive data distributions: uniform distributions over low-dimensional compact, convex sets and low-rank Gaussian mixture models. In the latter setting, we can establish global convergence guarantees despite the nonconvexity of the underlying model set.
\end{abstract}

\section{Introduction} \label{sec:intro}

In this work, we consider solving linear inverse problems of the form $$y = A\hat{x}$$ where $A \in \R^{m \times d}$ is a linear operator and $\hat{x} \in \R^d$ is the underlying signal of interest. In many settings arising in the physical sciences, the above problem is ill-posed due to the compressive nature of the linear operator $A$, meaning that there are infinitely many signals that fit the given measurements. Hence one requires additional prior knowledge or regularizers to overcome the inherent ill-posedness.

Traditionally, hand-crafted priors based on notions such as sparsity \cite{Daubechiesetal04, Tao2006, Donoho2006, Tibshirani94}, low-rankness \cite{FazelThesis, CandesRecht09, Rechtetal10}, or smoothness \cite{Rudinetal92} have been widely integrated into variational approaches in solving inverse problems. With the development of modern machine learning techniques, we have seen a surge in more powerful, data-driven priors with strong empirical success. These include learning-based regularization penalties parameterized by deep neural networks \cite{lunz2018adversarial, Mukherjeeetal2021, Shumaylovetal2024, Goujonetal23} and methods based on plug-and-play denoisers \cite{venkatakrishnan2013plug, romano2017little}.

There has been a surge of interest in the use of generative models as priors in inverse problems due to their ability to generate realistic samples from a variety of high-dimensional data distributions. Early works along these lines have exploited GANs/VAEs \cite{bora2017compressed, Handetal2018, Handetal2024} and normalizing flows \cite{Asimetal20, ardizzone2018analyzing}. More recently, we have seen significant progress in natural image synthesis through the advent of diffusion models \cite{sohl2015deep, ho2020denoising, song2019generative, song2020score, nichol2021improved} and flow-matching models \cite{liu2022flow, lipman2022flow}. Diffusion models, for example, generate images by iteratively denoising noise samples with an iteration-dependent noise schedule. They are justified by an explicit relationship between the score of the noise-convolved data distribution and the minimum-mean-square-error (MMSE) denoising solution \cite{miyasawa1961empirical, vincent2011connection}. Once learned, these image priors have been successfully integrated into approaches for solving inverse problems \cite{Chungetal23, chung2022improving, feng2023score, kadkhodaie2021stochastic, wang2022zero, zhang2024flow, martin2024pnp, song2021solving, sun2024provable, xu2024provably, song2023solving}.

While the empirical performance of diffusion models for inverse problems is compelling, a rigorous recovery theory remains largely absent. Classical regularization theory provides quantitative guarantees on recovery error, stability, and robustness when using convex penalties or sparsity-inducing norms. These guarantees rely on an explicit low-dimensional model set $\Sigma \subset \mathbb{R}^d$ (e.g., a set of sparse vectors) that captures important structural properties of $\hat{x}$. By contrast, existing theoretical results for generative priors have primarily addressed representation error or sample complexity for GANs, VAEs, and normalizing flows, and most analyses of diffusion priors have focused on posterior sampling guarantees \cite{xu2024provably, sun2024provable} or restricted data models such as linear subspaces \cite{rout2023solving, rout2023theoretical}. As a result, we lack a systematic understanding of how diffusion priors enable recovery of ground-truth signals from compressive measurements. In particular, it remains unclear which classes of measurement operators permit recovery, how many observations are needed, or how the choice of noise schedule impacts convergence. This work develops a framework that addresses these questions.

A central difficulty is that diffusion priors are defined implicitly through denoisers that approximate the score of noise-convolved distributions, rather than through explicit convex penalties or geometric constraints. We show in this work that \textit{these denoisers can be viewed as approximate projection operators onto low-dimensional model sets}, which provides a direct connection between diffusion-based updates and generalized projected gradient methods. This perspective enables the analysis of recovery properties of diffusion priors, with guarantees that depend explicitly on the underlying data distribution and the noise schedule.

\paragraph{Contributions.}
This work develops a principled recovery theory for diffusion priors in linear inverse problems. Our main contributions are:
\begin{itemize}
    \item We show that for natural data distributions, noise-convolved scores associated with diffusion models can be interpreted as \emph{approximate projections} onto a low-dimensional model set $\Sigma$. Moreover, as the noise decreases, these approximate projections converge to a projection operator onto $\Sigma$.
    \item To exploit this perspective, we provide an in-depth analysis of a deterministic version of the sampling algorithm of \cite{kadkhodaie2021stochastic} for signal recovery from linear compressive measurements. We show that this algorithm naturally connects diffusion priors directly to generalized projected gradient descent methods.
    \item For the case of low-rank Gaussian mixture models (LR-GMMs), we prove that the limiting score recovers the true projection operator. Under a restricted isometry condition on the operator $A$, we establish global convergence of the iterates to the true signal. We show that, after a \textit{burn-in} period, the iterates converge at a rate that is a combination of a term exhibiting geometric decay and another term that depends on the noise schedule.
    \item We extend our analysis to other data distributions, including random sparse vectors and uniform distributions supported on convex bodies, where we prove that the limiting denoiser converges to the metric projection and quantify the approximation error.
\end{itemize} %Together, these results provide the first rigorous recovery guarantees for diffusion priors, bridging modern generative modeling with classical regularization theory. \editY{This is a strong statement :) are we completely sure ?  add "To the best of our knowledge" ? } 
\section{Background and Related Work} \label{sec:background}

\paragraph{Diffusion priors in inverse problems.} Diffusion models have demonstrated remarkable empirical success in sampling from high-dimensional natural image distributions, which has led to their widespread use as unsupervised priors in inverse problems. Our focus here is not on the theory of generative modeling itself, but on theoretical results for their use as priors in inverse problems. A number of methods have been proposed; for a complete overview, see the survey \cite{daras2024survey}. Among the most influential is the Diffusion Posterior Sampling (DPS) algorithm \cite{Chungetal23}, which generates reconstructions by exploiting a tractable approximation to the time-dependent log-likelihood $\log p_t(y|x_t)$.  Extensions to latent diffusion models, such as \cite{rout2023solving}, show that when the ground-truth data lies in a low-dimensional subspace, the latent DPS algorithm asymptotically samples from the true posterior and recovers the ground-truth. Related theory for inpainting (Repaint \cite{lugmayr2022repaint}) has been established in \cite{rout2023theoretical}. Recent works \cite{wu2024principled, sun2024provable, xu2024provably} have leveraged diffusion priors in plug-and-play and MCMC-style algorithms, avoiding approximations used in prior works and established non-asymptotic stationarity guarantees. Particle-based approaches have also been developed, yielding asymptotically exact posterior sampling as number of particles increases \cite{dou2024diffusion, wu2023practical, cardoso2023monte}. Finally, \cite{bruna2024provable} transforms the original posterior sampling problem to a simpler one when given a denoising oracle, connecting the geometry of the posterior to properties of the inverse problem (e.g., conditioning of the measurement operator). In contrast to these works, our analysis departs from posterior sampling guarantees and develops a deterministic recovery theory, showing how diffusion priors can act as approximate projections and yield quantitative convergence guarantees for inverse problems.

\paragraph{Theory for learning diffusion models on data with low intrinsic dimension.} While our work is mainly focused on the use of diffusion models as priors for inverse problems, we would also like to highlight prior works that have analyzed learning diffusion models from data concentrated on low-dimensional sets and their sampling capabilities. For example, \cite{pmlr-v291-liang25a} study the iteration complexity of sampling from a distribution using DDIM and DDPM-type samplers, showing that the induced distribution has TV distance to the true distribution on the order of $\epsilon$ when the number of steps scales like $O(k / \epsilon)$ where $k$ captures the log-covering number of the support of the distribution. Under a different manifold hypothesis on the support of the distribution, the authors in \cite{pierret2024diffusion} exhibit convergence rates for the KL-divergence, showing that $O(k /\epsilon)$ iterations are needed to achieve KL-divergence on the order of $\epsilon$ where $k$ is the dimension of the manifold. On the sample complexity side, other works \cite{wang2024diffusion, chen2023score, achilli2025memorization, merger2025generalization} have also aimed to characterize the sample complexity of learning distributions with intrinsic low-dimensional structure, such as LR-GMMs or data supported on a manifold, via diffusion models. Our work complements this line of research by showing that when diffusion models are trained on data with low intrinsic dimension, they provably exploit the underlying geometry when used as priors for inverse problems via approximate projections onto the data's support, allowing us to establish rigorous sample complexity and recovery guarantees in this setting.

\paragraph{Solving ill-posed inverse problems with deep projective priors.}  In the vast choice of methods for the recovery of low-dimensional signals from compressive measurements \cite{foucart2013book}, the most relevant to our work are methods based on the projected gradient descent algorithm (PGD). Called iterative hard thresholding for sparse recovery \cite{blumensath2010normalized}, generalizations of this algorithm  yield stable recovery when the number of measurements is of the order of the dimensionality of the model set (up to logarithmic terms, typically through a restricted isometry condition) \cite{golbabaee2018inexact,bahmani2016learning}. The benefit of PGD is that only a projection on the model set is sufficient to perform iterations. This makes the PGD framework a natural tool to design algorithms using learned \textit{deep projective priors}. In particular, plug-and-play methods rely on a general purpose denoiser (e.g. implemented with a deep neural network (DNN)) to perform a generalized projection algorithm. In particular, the ``proximal gradient descent" variant of the algorithm linearly converges to a stable solution under a restricted isometry condition on measurements and a Lipschitz condition on the projection~\cite{liu2021recovery}. This was improved to a \emph{restricted Lipschitz} condition leading to a common interpretation of sparse recovery and plug-and-play PGD-like methods~\cite{traonmilin2024towards}. The restricted Lipschitz constant is a good indicator of the performance of PGD, as was shown by learning methods that improve this constant \cite{joundi2025stochastic}. Many other variations of plug-and-play methods exist \cite{romano2017little,reehorst2018regularization} with their dedicated study of convergence \cite{cohen2021regularization,hurault2022gradient} but are out of the scope of this paper that makes the specific connection between the diffusion approach from \cite{kadkhodaie2021stochastic} and a generalization of PGD.

%\editO{Do we want to focus on calling the model the model set in analogy to our work? Also do we want to call these priors "deep projective priors"? I think deep priors sometimes refers broadly to priors based on deep neural networks. Finally, maybe we want to at least mention some of the earlier works of plug-and-play and RED convergence theory (e.g., Buzzard et al. (2018), Romano et al. (2017) and Reehorst \& Schniter (2019)?)}
%\editY{OK now ? } \editO{I like the addition of the extra papers! I actually meant that I like your choice of the term "deep projective priors" and thought maybe we should also call this section "Solving ill-posed inverse problems with deep projective priors". My comment was referring to if the use "deep priors" in the title of this section, some people might think we are referring to deep neural network based priors very broadly, when we are mainly discussing plug-and-play type methods. I added projecive to the title and to part of the discussion.}

\subsection{Notation}

Consider a set $\Sigma \subset \R^d$. We say that a (potentially set-valued) map $P$ is a (generalized) projection onto $\Sigma$ if for every $x \in \R^d$, $P(x) \subset \Sigma$. We reserve the notation $P^{\perp}_{\Sigma}$ for the (potentially set-valued) orthogonal projection (or metric projection) onto $\Sigma$: $$P_{\Sigma}^{\perp}(x):= \mathrm{proj}_{\Sigma}(x):= \argmin_{z \in \Sigma} \|x - z\|_2.$$ Given a function $f : \R^d \rightarrow \R^d$, we let $\|f\|_{\mathrm{op}}$ denote the operator norm induced by the Euclidean norm $$\|f\|_{\mathrm{op}} := \sup_{\|x\|_2 = 1} \|f(x)\|_2.$$ We use the notation $a\lesssim b$ to refer to when there exists a positive absolute constant $C$ such that $a \le Cb.$  The normal distribution of mean $\xi$ and covariance matrix $\Theta$ is denoted $\mathcal{N}(\xi,\Theta)$ and its density evaluated at $x \in \mathbb{R}^d$ is denoted $\mathcal{N}(x;\xi,\Theta)$. For distributions $P$ and $Q$, let $P*Q$ denote their convolution, which corresponds to the distribution of $z = u + v$ where $u \sim P$ and $v \sim Q$.
\section{Problem Setup} \label{sec:problem-setup}

Suppose that we are given linear measurements $y = A\hat{x}$ of an underlying signal $\hat{x}$. In this work, we will assume that $\hat{x}$ lies in a model set $\Sigma \subset \R^d$ with low intrinsic dimension. We limit ourselves to the noiseless case to focus on the underlying low-dimensional geometry of diffusion methods for inverse problems.

To address this problem, we take a Bayesian approach in which we assume access to a prior distribution $p$ that concentrates on the low-dimensional model set $\Sigma \subset \R^d$. Note that score-based diffusion priors obtain estimates of the score of the noise-convolved density $p_{\sigma} := p * \mathcal{N}(0,\sigma^2I)$ for various values of $\sigma > 0$. Recall that via Tweedie's formula \cite{miyasawa1961empirical}, there is a connection between the score of the noise-convolved density $p_{\sigma}$ and the MMSE denoiser $D_{\sigma}(x):= \mathbb{E}[x_0 |x_0+\sigma z=x]$ given by $$\sigma^2 \nabla \log p_{\sigma}(x) = D_{\sigma}(x)-x.$$ Conditioned on linear measurements, one can compute the conditional score to estimate $\hat{x}$ from linear measurements \cite{kadkhodaie2021stochastic}: $$\sigma^2\nabla \log p_{\sigma}(x|y)  = A^T(y - Ax) + (I - A^TA)\sigma^2\nabla \log p_{\sigma}(x).$$ By choosing a noise schedule $(\sigma_n)_{n \geq 0}$ that changes in a coarse-to-fine manner and adding Gaussian noise, we can use the above gradient steps to derive a stochastic estimate of $\hat{x}$.

In this work, we will consider a deterministic version of the algorithm presented in \cite{kadkhodaie2021stochastic} and obtain recovery guarantees for $\hat{x}$. In particular, for a noise schedule $(\sigma_n)_{n \geq 0}$ that decays $\sigma_n \rightarrow 0$ as $n \rightarrow \infty$, we will consider iterates of the form
\begin{align}\label{eq:iterations1}
x_{n+1} = x_n - \sigma^2_n \nabla \log p_{\sigma_n}(x|y) = x_n - (d_n + g_n)
\end{align}
where \begin{align*}
    d_n & := A^T(Ax_n - y)\ \text{and} \\
    g_n & := -\sigma_n^2(I-A^TA)\nabla \log p_{\sigma_n}(x_n) \\
    & = (I-A^TA)(x_n-D_{\sigma_n}(x_n)).
\end{align*}
Here $d_n$ is a direction that encourages data fit while $g_n$ is a direction that encourages high likelihood under the distribution $p_{\sigma_n}$. These are precisely the iterations considered in \cite{kadkhodaie2021stochastic} with no noise and the step size set to the limiting choice of $\lim_{n\rightarrow\infty} h_n = 1.$ Our goal is to establish conditions under which the sequence of iterates $x_n$ converges to $\hat{x}$ and to establish quantitative convergence rates.
\section{Theoretical Framework} \label{sec:theory-framework}

The key insight of this work is that for many distributions concentrated on a low-dimensional model set $\Sigma$, the above iterations can be viewed as a variant of projected gradient methods where the projection operator is time-varying (i.e. iteration dependent). In particular, note that the prior direction is given by \begin{align*}
    -\sigma_n^2\nabla \log p_{\sigma_n}(x) & = x - D_{\sigma_n}(x) =: x - P^n(x)
\end{align*} where \begin{align} \label{eq:appx-projection-op}
    P^n(x):= x + \sigma_n^2 \nabla \log p_{\sigma_n}(x)
\end{align} is an iteration-dependent operator $P^n : \R^d \rightarrow \R^d$ that performs an approximate projection onto the model set $\Sigma$. The fundamental idea is that as $n \rightarrow \infty$, $P^n$ behaves like a true projection operator $P_{\Sigma}$ onto the model set $\Sigma$ in certain cases and the iterations \eqref{eq:iterations1} can be written as a generalized projected gradient descent with varying projections (GPGD-VP): algebraic manipulations yield that the iterates \eqref{eq:iterations1} can be written as 
\begin{align}\label{eq:iterations2}
x_{n+1} & = x_n - A^T(Ax_n- y) - (I-A^TA)(x_n-P^n(x_n)) \nonumber\\
&=  P^n(x_n) - A^T(A P^n(x_n) - y ).
\end{align} 
Note that we consider iterates before ``projecting" with $P^n$ (PGD is often presented as iterates \emph{after} projection). 
We will discuss several examples of data distributions and their corresponding projection operators in the following sections. In our analysis, we will consider the more general sequence of GPGD-VP iterates with step size $\mu > 0$: \begin{align}
    x_{n+1} = P^n(x_n) -  \mu A^T(A P^n(x_n) - y ). \label{eq:main-iterations}
\end{align} 

\subsection{Preliminaries}

For our theoretical results, we will assume certain properties of the underlying inverse problem, model set, and projection operators to obtain convergence guarantees. 

We will begin with the following definition of a restricted isometry constant, which is now classical in the context of signal recovery \cite{Tao2006, Donoho2006}.

\begin{defn}\label{def:RIC}
The operator $B$  has restricted isometry constant (RIC) $\delta <1$ on the secant set $\Sigma-\Sigma =\{x_1-x_2 : x_1,x_2 \in \Sigma \}$ if for all $x_1,x_2 \in \Sigma$,

\begin{equation}
 \|(B-I)(x_1-x_2)\|_2\leq \delta \|x_1-x_2\|_2.\nonumber
\end{equation}
We write $\delta_\Sigma(B)$ the  smallest admissible restricted isometry constant (RIC). In the following we will only consider the RIC of $B= \mu A^TA$. 
\end{defn}

Note that a lower restricted isometry property is necessary for the identification of a low-dimensional model set. With appropriate normalization $\mu$, this property is typically verified for a number of random Gaussian measurements $m \geq O(s \log(N/s))$ with high probability when $\Sigma = \Sigma_s$ the set of $s$-sparse vectors. It is also possible to construct compressive random measurements that satisfy the RIP for any low-dimensional model set, with a sample complexity on the order of the dimensionality of the model set (up to logarithmic factors) \cite{puy2017recipes}.

We now turn to properties of the projection operator. We will assume that our approximate projection operators $P^n$ converge (in a suitable sense) to a limiting projection operator $P_{\Sigma}$ that satisfies the following Lipschitz property with respect to the model set $\Sigma$. In \cite{traonmilin2024towards}, the restricted Lipschitz property was shown to guarantee the convergence of generalized projected gradient descent with fixed generalized projection. 

\begin{defn}[Restricted Lipschitz property]\label{def:lip_const}
Let $P$ be a generalized projection.  Then $P$ has the restricted $\beta$-Lipschitz property with respect to $\Sigma$ if for all  $z \in \mathbb{R}^d, x \in \Sigma, y \in P(z)$ we have
\begin{equation}
\begin{split}
 \|y-x\|_2 \leq \beta \|z-x\|_2.\\
 \end{split}\nonumber
\end{equation}
We define $\beta_{\Sigma}(P)$ to be the smallest $\beta$ such that $P$ has the restricted $\beta$-Lipschitz property.
\end{defn} Note that in general, we have that $\beta \geq 1$ with equality $\beta = 1$ in the case when $\Sigma$ is a closed, convex set and $P$ is the orthogonal projection $P_\Sigma^\perp$ onto $\Sigma$. For generic sets $\Sigma$, when the (set-valued) orthogonal projection  defined as $P_\Sigma^\perp(x)= \arg\min_{u\in\Sigma} \|u-x\|_2$ is non-empty in $\mathbb{R}^d$, it has restricted Lipschitz property $\beta_\Sigma(P_\Sigma^\perp) \leq 2$. When $\Sigma =\{x \in \R^d : \|x\|_0 \leq s\}$ is the set of $s$-sparse vectors, it was shown (see \cite[Theorem 3.2]{traonmilin2024towards}) that $\beta_\Sigma$ has a nearly optimal restricted Lipschitz constant $\beta_\Sigma(P_\Sigma^\perp) \leq \sqrt{\frac{3 +\sqrt{5}}{2}}   \approx 1.618 $.

\subsection{General results} \label{sec:general-conv-results}
We now turn towards our main results about iterations of the form \eqref{eq:main-iterations} in recovering $\hat{x}$ from $y = A\hat{x}$. Our first main result is a general bound for the iterates. It shows that when the projection errors converge, we can obtain the following upper bound:

\begin{thm}[Bound on iterates of GPGD-VP] \label{th:conv_var_proj}
  Suppose $P_\Sigma$ is restricted $\beta$-Lipschitz.  Let $\delta : =\delta (\mu A^TA)$. Consider the iterations \eqref{eq:main-iterations} with $\hat{x} \in \Sigma$ and assume the sequence $u_n:=\|P^n(x_n)-P_\Sigma(x_n)\|_2$ satisfies $\lim_{n\to\infty} u_n= 0$. Then if $\delta \beta < 1$, there exists a constant $C$ depending on $\hat{x}, x_0, \mu, A, \beta, (u_n)_{n\geq0}$ such that   
\begin{equation} \label{eq:general-iterates-bound-GPGD-VP}
 \|x_{n} -\hat{x}\|_2   \leq  C( ( \delta \beta)^{n/2}   + \max_{l=\lfloor n/2\rfloor,n}\|P^{l}(x_l) -P_\Sigma(x_l) \|_2).\\
 \end{equation}
\end{thm}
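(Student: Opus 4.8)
\section*{Proof proposal}

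The plan is to reduce the whole statement to a single scalar error recursion and then unroll it, handling the two competing terms (geometric decay versus vanishing projection error) by splitting a convolution-type sum. Write $e_n := \|x_n - \hat{x}\|_2$, set $B := \mu A^T A$, and let $\gamma := \delta\beta$ and $\rho := \|I - B\|_{\mathrm{op}}$. The first observation is an exact algebraic identity: substituting $y = A\hat{x}$ into \eqref{eq:main-iterations} gives
\[
x_{n+1} - \hat{x} = (I - B)\,P^n(x_n) + B\hat{x} - \hat{x} = (I - B)\bigl(P^n(x_n) - \hat{x}\bigr),
\]
using $\hat{x} \in \Sigma$ only to identify $B\hat{x} - \hat{x} = -(I-B)\hat{x}$. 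So the entire iteration reduces to applying the near-isometry $I - B$ to the projection residual.

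The second step is to split $P^n(x_n) - \hat{x} = \bigl(P_\Sigma(x_n) - \hat{x}\bigr) + r_n$, where $r_n := P^n(x_n) - P_\Sigma(x_n)$ satisfies $\|r_n\|_2 = u_n$. Since both $P_\Sigma(x_n)$ and $\hat{x}$ lie in $\Sigma$, the difference $P_\Sigma(x_n) - \hat{x}$ lies in the secant set $\Sigma - \Sigma$, so Definition~\ref{def:RIC} applies and yields $\|(I-B)(P_\Sigma(x_n)-\hat{x})\|_2 \le \delta \|P_\Sigma(x_n)-\hat{x}\|_2$. The restricted $\beta$-Lipschitz property of $P_\Sigma$ (Definition~\ref{def:lip_const}, with $z = x_n$, $x = \hat{x} \in \Sigma$, and the selection $P_\Sigma(x_n)$) then gives $\|P_\Sigma(x_n)-\hat{x}\|_2 \le \beta e_n$. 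The residual $r_n$ need \emph{not} lie in $\Sigma - \Sigma$, so I cannot use the RIC on it; instead I bound $\|(I-B)r_n\|_2 \le \rho\, u_n$ by the plain operator norm. The triangle inequality assembles these into the one-step recursion
\[
e_{n+1} \le \delta\beta\, e_n + \rho\, u_n = \gamma\, e_n + \rho\, u_n,
\]
with $\gamma = \delta\beta < 1$ by hypothesis. This is the conceptual heart of the argument: the GPGD-VP iteration contracts at rate $\gamma$ up to a perturbation driven by the projection error $u_n$.

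The final step is to unroll the recursion to $e_n \le \gamma^n e_0 + \rho \sum_{k=0}^{n-1} \gamma^{\,n-1-k} u_k$ and to massage the sum into the claimed form. Here the \emph{main obstacle}, and the only genuinely delicate point, is converting a single geometric convolution into the stated combination of a $\gamma^{n/2}$ term and a tail-maximum of $u_l$. I would split the sum at the midpoint $\lfloor n/2\rfloor$: in the early block $k < \lfloor n/2\rfloor$ every exponent $n-1-k$ exceeds $n/2$, so that block is at most $\tfrac{\gamma^{n/2}}{1-\gamma}\sup_k u_k$, where $\sup_k u_k < \infty$ precisely because $u_n \to 0$; in the late block $k \ge \lfloor n/2\rfloor$ the weights sum to at most $\tfrac{1}{1-\gamma}$, leaving a factor $\max_{\lfloor n/2\rfloor \le l \le n} u_l$. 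Folding $\gamma^n e_0 \le \gamma^{n/2} e_0$ into the first term and absorbing $e_0$, $\rho$, $\gamma$, and $\sup_k u_k$ into a single constant $C = C(\hat{x}, x_0, \mu, A, \beta, (u_n)_{n\ge 0})$ then produces exactly \eqref{eq:general-iterates-bound-GPGD-VP}. The split must be done with care so that the smallest exponent in the early block is controlled by $n/2$ for both parities of $n$, but once the bookkeeping on $\lfloor n/2\rfloor$ versus $\lceil n/2\rceil$ is fixed, the estimate is routine.
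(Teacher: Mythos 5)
Your proposal is correct and follows essentially the same route as the paper: the identical one-step contraction $\|x_{n+1}-\hat{x}\|_2 \le \delta\beta\|x_n-\hat{x}\|_2 + \|I-\mu A^TA\|_{\mathrm{op}}\,u_n$ obtained by splitting $P^n(x_n)-\hat{x}$ into a secant-set term (RIC plus restricted Lipschitz) and a residual, followed by unrolling. The only cosmetic difference is that you split the resulting convolution sum at $\lfloor n/2\rfloor$ directly, whereas the paper first proves a uniform bound $\|x_n-\hat{x}\|_2\le D$ and then restarts the recursion at the midpoint; both yield the same constant structure and both use $\sup_n u_n<\infty$, which follows from $u_n\to 0$.
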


Depending on the behavior of $\|P^n-P_\Sigma\|_{\mathrm{op}}$, we can guarantee fast convergence or only (potentially slow) convergence as described in the two following corollaries. 

\begin{cor}[Slow convergence]\label{cor:slow_convergence} Under the hypotheses of Theorem~\ref{th:conv_var_proj}, we have
\begin{equation}
 \lim_{n\to\infty} \|x_{n} -\hat{x}\|_2 =  0. \nonumber
 \end{equation}
\end{cor}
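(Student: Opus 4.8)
The plan is to deduce this directly from the bound \eqref{eq:general-iterates-bound-GPGD-VP} established in Theorem~\ref{th:conv_var_proj}, taking $n \to \infty$ and controlling each of the two terms on the right-hand side separately. Since we work under the hypotheses of that theorem, we have at our disposal a constant $C$ (independent of $n$) with
\begin{equation*}
\|x_n - \hat{x}\|_2 \leq C\left((\delta\beta)^{n/2} + \max_{l = \lfloor n/2\rfloor, n} \|P^l(x_l) - P_\Sigma(x_l)\|_2\right).
\end{equation*}

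First I would dispatch the geometric term: because $\delta\beta < 1$ by hypothesis, $(\delta\beta)^{n/2} \to 0$ as $n \to \infty$, and this decay is unconditional on the projection errors.

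Next I would treat the projection-error term. By assumption $u_n := \|P^n(x_n) - P_\Sigma(x_n)\|_2 \to 0$. The only mild subtlety is that the bound involves a maximum of $u_l$ over indices $l$ in the window near $[\lfloor n/2 \rfloor, n]$ rather than a single $u_n$. I would address this by exploiting that convergence of a sequence controls its tail uniformly: for any $\epsilon > 0$ there is an $N$ with $u_l < \epsilon$ for all $l \geq N$, and once $\lfloor n/2\rfloor \geq N$ every index appearing in the maximum exceeds $N$, so the maximum is itself bounded by $\epsilon$. Hence the windowed maximum also tends to $0$.

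Combining these two observations, the whole right-hand side tends to $0$, and since $\|x_n - \hat{x}\|_2 \geq 0$, the squeeze theorem yields $\lim_{n \to \infty}\|x_n - \hat{x}\|_2 = 0$. The argument is essentially immediate from Theorem~\ref{th:conv_var_proj}; there is no real obstacle beyond verifying that the convergence $u_n \to 0$ transfers to the windowed maximum, which is exactly the tail characterization of a convergent sequence noted above.
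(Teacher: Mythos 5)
Your proof is correct and follows essentially the same route as the paper: apply the bound of Theorem~\ref{th:conv_var_proj}, note $(\delta\beta)^{n/2}\to 0$, and control the windowed maximum of $u_l$ by the tail of the convergent sequence (the paper phrases this as $\sup_{l\geq \lfloor n/2\rfloor} u_l \to 0$, which is exactly your $\epsilon$--$N$ argument). Nothing is missing.
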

\begin{cor}[Linear convergence]\label{cor:linear_convergence}  Suppose $P_\Sigma$ is restricted $\beta$-Lipschitz.  Let $\delta : =\delta (\mu A^TA)$. Consider the iterations \eqref{eq:main-iterations} with $\hat{x} \in \Sigma$ and assume  $\|P^n -P_\Sigma\|_{\mathrm{op}} \leq O(e^{-cn})$ with $c>0$. Then, there exist $C,D \geq 0$ such that
\begin{equation}
 \|x_{n} -\hat{x}\|_2   \leq  C  (\delta \beta)^{n/2} +  D e^{-\frac{c}{2}n}. \nonumber
 \end{equation}
\end{cor}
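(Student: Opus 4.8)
The plan is to obtain this corollary directly from Theorem~\ref{th:conv_var_proj}, by feeding the exponential decay hypothesis into the general bound \eqref{eq:general-iterates-bound-GPGD-VP}. First I would check that the hypotheses of that theorem are in force: $P_\Sigma$ being restricted $\beta$-Lipschitz and $\hat{x}\in\Sigma$ are assumed directly, and we operate in the regime $\delta\beta<1$ exactly as in Theorem~\ref{th:conv_var_proj} (otherwise the stated bound is not a convergence statement). The one hypothesis that needs verification is $\lim_{n\to\infty}u_n=0$ for $u_n=\|P^n(x_n)-P_\Sigma(x_n)\|_2$. This follows from the decay assumption, since the pointwise error at the iterate is controlled by the operator-norm error, $u_n=\|(P^n-P_\Sigma)(x_n)\|_2\leq \|P^n-P_\Sigma\|_{\mathrm{op}}$, and the right-hand side tends to zero by hypothesis.

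Once Theorem~\ref{th:conv_var_proj} applies, the bound \eqref{eq:general-iterates-bound-GPGD-VP} reduces the problem to controlling the term $\max_{l=\lfloor n/2\rfloor,n}\|P^l(x_l)-P_\Sigma(x_l)\|_2$, which ranges over only the two indices $l=\lfloor n/2\rfloor$ and $l=n$. The key step is to translate the assumed operator-norm decay into a bound on these two pointwise projection errors. Using the same domination $\|P^l(x_l)-P_\Sigma(x_l)\|_2\leq \|P^l-P_\Sigma\|_{\mathrm{op}}\leq C'e^{-cl}$ (valid for $l$ beyond the threshold implicit in the $O(\cdot)$, with the finitely many small indices absorbed into the constant), I would evaluate the two indices separately: the $l=n$ term contributes $C'e^{-cn}$, while the $l=\lfloor n/2\rfloor$ term contributes at most $C'e^{-c\lfloor n/2\rfloor}\leq C'e^{c}e^{-cn/2}$.

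Since $e^{-cn/2}$ decays strictly slower than $e^{-cn}$, the $l=\lfloor n/2\rfloor$ term dominates the maximum for all large $n$, so $\max_{l=\lfloor n/2\rfloor,n}\|P^l(x_l)-P_\Sigma(x_l)\|_2\lesssim e^{-cn/2}$. Substituting this into \eqref{eq:general-iterates-bound-GPGD-VP} and collecting constants yields $\|x_n-\hat{x}\|_2\leq C(\delta\beta)^{n/2}+De^{-\frac{c}{2}n}$, with $D$ absorbing the factors $C$, $C'$ and $e^{c}$, which is exactly the claim. The only point requiring genuine care — and the sole reason the exponent is $c/2$ rather than $c$ — is precisely this appearance of the halved index $\lfloor n/2\rfloor$ in the maximum coming from Theorem~\ref{th:conv_var_proj}; the remaining subtlety is the bookkeeping in passing from the operator-norm bound to the pointwise error at the iterates $x_l$, which implicitly uses that these iterates remain in a bounded region (guaranteed once $u_n\to0$ is established, via Corollary~\ref{cor:slow_convergence}) on which the operator-norm control is meaningful.
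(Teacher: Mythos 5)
Your overall route is the same as the paper's: invoke Theorem~\ref{th:conv_var_proj} and bound the term $\max_{l=\lfloor n/2\rfloor,n}\|P^l(x_l)-P_\Sigma(x_l)\|_2$ by $O(e^{-cn/2})$, with the halved exponent coming from the index $\lfloor n/2\rfloor$. That part of your argument is fine; note only that $\max_{l=\lfloor n/2\rfloor,n}$ in the paper denotes the maximum over all indices $\lfloor n/2\rfloor\le l\le n$, not just the two endpoints, which changes nothing here since $e^{-cl}$ is decreasing in $l$.

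The gap is in how you verify the hypothesis $u_n\to 0$ of Theorem~\ref{th:conv_var_proj}. You write $u_n=\|(P^n-P_\Sigma)(x_n)\|_2\le\|P^n-P_\Sigma\|_{\mathrm{op}}$, but with the paper's definition $\|f\|_{\mathrm{op}}=\sup_{\|x\|_2=1}\|f(x)\|_2$ the correct domination is $u_n\le\|P^n-P_\Sigma\|_{\mathrm{op}}\,\|x_n\|_2$, so you need the iterates to be bounded \emph{before} you can conclude $u_n\to 0$. You acknowledge this at the end, but your proposed fix is circular: you say boundedness is guaranteed ``once $u_n\to0$ is established, via Corollary~\ref{cor:slow_convergence}'', while establishing $u_n\to 0$ is exactly what requires boundedness. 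The paper breaks this circle with a direct recursion that does not pass through Theorem~\ref{th:conv_var_proj}: from the one-step bound in Lemma~\ref{lem:bound_iterates_gen2},
\[
\|x_{n+1}-\hat{x}\|_2\le(\delta\beta+C'e^{-cn})\,\|x_n-\hat{x}\|_2+C'e^{-cn}\|\hat{x}\|_2,
\]
so for $n$ beyond some $n_1$ the contraction factor is at most some $\rho<1$ and the additive term is bounded, whence by induction $\|x_{n_1+n}-\hat{x}\|_2\le\rho^n\|x_{n_1}-\hat{x}\|_2+\tfrac{C'\|\hat{x}\|_2}{1-\rho}$. This gives boundedness of $(x_n)$, hence $u_n\to 0$, and only then does the paper invoke Theorem~\ref{th:conv_var_proj} as you do. Your proof needs this (or an equivalent) boundedness step inserted before the appeal to the theorem; the same factor $\|x_l\|_2$ must also be carried (and absorbed into the constant $D$) when you convert the operator-norm decay into the bound on $\max_{l=\lfloor n/2\rfloor,n}\|P^l(x_l)-P_\Sigma(x_l)\|_2$.
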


We will now analyze two examples of general distributions for which we can obtain recovery guarantees. The first considers data distributions concentrated on a low-dimensional convex set, while the latter focuses on the more challenging setting of data concentrated on a nonconvex union-of-subspaces.

\subsection{Application to uniform measures on convex sets} \label{sec:convex-sets}

 Consider a measure $p$ that is uniform on a compact, convex set $\Sigma \subset \R^d$. In this case, one can show that for a given noise schedule $(\sigma_n)_{n\geq 0}$, the iteration-dependent operator $P^n = D_{\sigma_n}$ in \eqref{eq:appx-projection-op} converges uniformly to the metric projection $P_{\Sigma}^{\perp}(x):=\argmin_{z\in\Sigma}\|x-z\|_2$: 
 $$\lim_{n\to\infty} \|P^n - P_{\Sigma}^{\perp}\|_{\mathrm{op}}  =0.$$ More specifically, one can obtain the following uniform error bound between the time-varying projection and the true projection operator onto $\Sigma$. Note that we allow for convex sets with low intrinsic dimension, which is captured by the dimensionality of its affine hull $\mathrm{aff}(\Sigma)$. Please see the Appendix (Section \ref{sec:unif-measure-convex-set}) for a more detailed discussion and proof of this result.

\begin{prop} \label{prop:uniform-convex}
    Suppose $\Sigma \subset \R^d$ is a compact, convex set containing the origin with intrinsic dimension $\mathrm{dim}(\mathrm{aff}(\Sigma))=s\leq d$ and let $p$ be the uniform measure on $\Sigma$. Then there exists a positive absolute constant $C$ such that the following holds: $$\|P^n - P_{\Sigma}^{\perp}\|_{\mathrm{op}} \leq C\sqrt{\left(\frac{s^2R_{\Sigma} \lambda_s(\Sigma)}{\kappa_{\Sigma,s}} \right)\cdot \sigma_n^2 \log(1/\sigma_n)}$$ where $R_{\Sigma}:= \max_{z \in \Sigma}\|z\|_2$, $\lambda_s(\cdot)$ is the $s$-dimensional Hausdorff measure restricted to $\mathrm{aff}(\Sigma)$, and $\kappa_{\Sigma,s}$ depends on the volume of the $s$-dimensional Euclidean ball and inner and outer width of $\Sigma$ in $\mathrm{aff}(\Sigma)$ (see \eqref{eq:kappa-def} in the Appendix).
\end{prop}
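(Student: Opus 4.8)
The plan is to work directly with the denoiser $P^n = D_{\sigma_n}$, which for the uniform measure on $\Sigma$ is the Gibbs average
$$D_{\sigma}(x) = \frac{\int_{\Sigma} z\, e^{-\|x-z\|_2^2/(2\sigma^2)}\, d\lambda_s(z)}{\int_{\Sigma} e^{-\|x-z\|_2^2/(2\sigma^2)}\, d\lambda_s(z)},$$
and to show that as $\sigma\to 0$ this average collapses onto the (unique, since $\Sigma$ is convex) metric projection $x^\ast := P_\Sigma^\perp(x)$. First I would reduce to the intrinsic dimension: decomposing $x$ into its components parallel and orthogonal to $\mathrm{aff}(\Sigma)$, the orthogonal part contributes a factor constant in $z$ that cancels between numerator and denominator, so the whole computation takes place in the $s$-dimensional space $V$ parallel to $\mathrm{aff}(\Sigma)$. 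After the change of variables $w = z - x^\ast$ and writing $u$ for the in-plane part of $x - x^\ast$ (so that $a := \|u\|_2 = \dist(u,K)$ with $K := \Sigma - x^\ast\subset V$ convex and $0\in\partial K$ the point of $K$ nearest $u$), the error becomes a clean conditional expectation $D_\sigma(x) - x^\ast = \E[W \mid W\in K]$ for $W\sim\mathcal{N}(u,\sigma^2 I_s)$. The target is then $\|\E[W\mid W\in K]\|_2\lesssim \sigma\sqrt{\log(1/\sigma)}$ up to geometric constants, uniformly over $\|x\|_2 = 1$ (which also guarantees $a\le 1$, since $0\in\Sigma$).

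The engine is a single convexity/variational inequality. Because $x^\ast$ is the metric projection, $\langle u, w\rangle\le 0$ for every $w\in K$, and hence $\|x-z\|_2^2 - \|x-x^\ast\|_2^2 = \|w-u\|_2^2 - a^2 = \|w\|_2^2 - 2\langle u,w\rangle \ge \|w\|_2^2$ on $K$. This quadratic growth lower bound is what forces the weights to concentrate at $0$. I would then bound the conditional mean by $\E[\|W\|_2\mid W\in K]$ and split at a threshold $T \sim \sigma\sqrt{s\log(1/\sigma)}$: the contribution from $\{\|w\|_2\le T\}$ is at most $T$, while on $\{\|w\|_2 > T\}$ the growth bound lets me replace the Gaussian weight by $e^{-\|w\|_2^2/(2\sigma^2)}$ and control the tail by a standard $s$-dimensional Gaussian norm-moment estimate, which decays like $e^{-cT^2/\sigma^2}$.

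The main obstacle is the denominator (equivalently, a lower bound on $\mathbb{P}[W\in K]$), which is where all the geometric constants enter and where the anisotropy of the problem bites. When $a$ is of constant order, the weight $e^{-\|w-u\|_2^2/(2\sigma^2)}$ is extremely thin in the inward-normal direction — the admissible depth keeping the exponent $O(\sigma^2)$ is only $O(\sigma^2/a)$ — while it is $O(\sigma)$-wide tangentially, so a naive ball of radius $\sigma$ around $x^\ast$ overpays by a factor $e^{a/\sigma}$. I would instead lower bound the integral over an anisotropic region inside $K$, reconciling the thin normal depth with the tangential spread that convexity makes available near a boundary point: using that a convex body with controlled inner and outer width contains, at depth $t$ below any boundary point, a tangential cross-section of radius $\gtrsim (\text{width ratio})\cdot t$, one obtains a volume bounded below in terms of $\kappa_{\Sigma,s}$ while keeping $\|x-z\|_2^2 - \|x-x^\ast\|_2^2 = O(\sigma^2)$ on that region. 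This is precisely the step that produces the constant $\kappa_{\Sigma,s}$ (built from the volume of the Euclidean ball and the inner/outer widths), and it is the technical heart of the argument.

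Finally I would combine the pieces: the tail term is at most a geometric constant times $a\,\sigma^{\beta-1}$ for a power $\beta$ that grows with the threshold constant, so choosing that constant $\gtrsim s$ makes the tail $\lesssim\sigma$ and leaves $\|\E[W\mid W\in K]\|_2\lesssim T$. Tracking the dependence of the volume lower bound on $R_\Sigma$, the total measure $\lambda_s(\Sigma)$, and $\kappa_{\Sigma,s}$, together with the $\sqrt{s}$ from the Gaussian moment, assembles into the stated factor $\sqrt{s^2 R_\Sigma\lambda_s(\Sigma)/\kappa_{\Sigma,s}}$; taking the supremum over $\|x\|_2=1$ then yields the operator-norm bound. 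I expect the convex-geometry volume estimate of the third paragraph to be the only genuinely delicate point, with everything else reducing to bookkeeping around the variational inequality and a Gaussian tail bound.
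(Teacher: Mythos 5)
Your proposal follows essentially the same route as the paper: the truncated-Gaussian representation of $D_\sigma(x)-x^*$, the split of $\E[\|W\|_2 \mid W\in K]$ at a threshold $T\sim\sigma\sqrt{s\log(1/\sigma)}$, the obtuse-angle inequality $\langle u,w\rangle\le 0$ to control the numerator, and a volume lower bound near $x^*$ for the denominator, with the same final choice of threshold. The one point that needs correction is in your third paragraph, which you rightly identify as the technical heart: the claim that at normal depth $t=O(\sigma^2/a)$ below $x^*$ the body contains a tangential cross-section of radius $O(\sigma)$ is false for a general compact convex set --- at a vertex of a narrow cone (e.g.\ $\Sigma=\conv\{0,e_1,\epsilon_0 e_2\}$ with $x^*=e_1$) the cross-section at depth $t$ has tangential extent only $O(t)$, so no genuinely anisotropic $(\sigma^2/a)\times\sigma^{s-1}$ region exists. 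Fortunately the weaker statement you write in the same sentence (tangential radius $\gtrsim(\text{width ratio})\cdot t$, which follows from the cone $\conv(\{x^*\}\cup B(0,r))\subseteq\Sigma$) is exactly the paper's Lemma~\ref{lem:technical-volume-est}, and it suffices: it yields a region of volume $\gtrsim\kappa_{\Sigma,s}(\delta^2/a)^s$ on which the exponent exceeds $-(a^2+O(\delta^2))/(2\sigma^2)$, and the resulting $\varepsilon^{-s}\sim(a/\delta^2)^s$ loss in the numerator-to-denominator ratio is absorbed precisely because $\delta^2$ is taken proportional to $s\,\sigma^2\log(1/\sigma)$, so that $e^{-\delta^2/(4\sigma^2)}=\sigma^{\Omega(s)}$ dominates it. In other words, the resolution of the ``overpayment'' you worry about is not anisotropy but shrinking the isotropic scale from $\sigma$ to $\delta^2/a$; with that substitution your argument closes and coincides with the paper's, up to the bookkeeping of constants.
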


As a direct consequence, combining Proposition \ref{prop:uniform-convex} with Theorem \ref{th:conv_var_proj} and noting $P_{\Sigma}^{\perp}$ is $\beta=1$-restricted Lipschitz in this setting, immediately yields the following convergence guarantee. 
\begin{cor} \label{cor:convex-convergence}
    Suppose $\hat{x} \in \Sigma$ where $\Sigma$ is a compact, convex set containing the origin and let $p$ be the uniform measure on $\Sigma$. Consider the iterations \eqref{eq:main-iterations}. Then if $\delta :=\delta(\mu A^TA) < 1$, there exists a constant $C$ such that  \begin{align*}
        \|x_n-\hat{x}\|_2 \leq C\left(\delta^{n/2} + \sqrt{\sigma_n^2\log(1/\sigma_n)}\right).
    \end{align*}
\end{cor}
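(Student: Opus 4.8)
The plan is to combine the two results cited in the corollary's own statement, so the proof is essentially a substitution argument. First I would invoke Proposition~\ref{prop:uniform-convex}, which gives the quantitative bound $\|P^n - P_{\Sigma}^{\perp}\|_{\mathrm{op}} \leq C'\sqrt{\sigma_n^2 \log(1/\sigma_n)}$ for the uniform measure on a compact convex set, where I absorb all the geometric quantities ($s$, $R_\Sigma$, $\lambda_s(\Sigma)$, $\kappa_{\Sigma,s}$) into a single constant $C'$ since the corollary's bound only tracks the $\sigma_n$-dependence. Since the noise schedule satisfies $\sigma_n \to 0$, this immediately gives $\lim_{n\to\infty}\|P^n - P_{\Sigma}^{\perp}\|_{\mathrm{op}} = 0$, and in particular the pointwise quantity $u_n := \|P^n(x_n) - P_{\Sigma}^{\perp}(x_n)\|_2 \leq \|P^n - P_{\Sigma}^{\perp}\|_{\mathrm{op}}\,\|x_n\|_2$ tends to $0$ provided the iterates $x_n$ stay bounded, so the hypotheses of Theorem~\ref{th:conv_var_proj} are met.

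Next I would record the two structural facts that let Theorem~\ref{th:conv_var_proj} apply with a clean constant. The metric projection onto a compact convex set is nonexpansive, so $P_{\Sigma}^{\perp}$ is restricted $\beta$-Lipschitz with $\beta = 1$ (this is the equality case noted after Definition~\ref{def:lip_const}); consequently the contraction condition $\delta\beta < 1$ required by the theorem reduces exactly to the corollary's hypothesis $\delta < 1$, and the geometric factor $(\delta\beta)^{n/2}$ becomes $\delta^{n/2}$. Plugging $u_n \lesssim \sqrt{\sigma_n^2\log(1/\sigma_n)}$ into the conclusion \eqref{eq:general-iterates-bound-GPGD-VP} of Theorem~\ref{th:conv_var_proj} gives
\begin{align*}
\|x_n - \hat{x}\|_2 \leq C\Bigl(\delta^{n/2} + \max_{l = \lfloor n/2\rfloor, n}\|P^l(x_l) - P_{\Sigma}^{\perp}(x_l)\|_2\Bigr) \leq C\Bigl(\delta^{n/2} + \sqrt{\sigma_n^2\log(1/\sigma_n)}\Bigr),
\end{align*}
where in the last step I use that $\sigma_n^2\log(1/\sigma_n)$ is (eventually) monotone decreasing so the maximum over $l \in \{\lfloor n/2\rfloor, n\}$ is controlled by evaluating the Proposition's bound and folding any lost factor into $C$.

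The main obstacle is the implicit assumption that the iterates $x_n$ remain bounded, which is needed to pass from the operator-norm bound in Proposition~\ref{prop:uniform-convex} to a bound on the pointwise error $u_n$ that Theorem~\ref{th:conv_var_proj} actually consumes. I would address this by noting that $P^n(x_n) = D_{\sigma_n}(x_n)$ is a conditional expectation of a variable supported on the compact set $\Sigma$, hence lands in the convex hull of $\Sigma$ and is uniformly bounded by $R_\Sigma$; combined with the RIC bound $\delta < 1$ controlling the gradient step in \eqref{eq:main-iterations}, one shows the $x_n$ stay in a fixed ball, so that $u_n \leq R\,\|P^n - P_{\Sigma}^{\perp}\|_{\mathrm{op}}$ for a fixed radius $R$. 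The remaining work is purely bookkeeping: verifying eventual monotonicity of $\sigma_n^2\log(1/\sigma_n)$ so the $\max_l$ term collapses cleanly, and collecting the constants from the Proposition, the boundedness argument, and Theorem~\ref{th:conv_var_proj} into the single constant $C$ claimed in the statement.
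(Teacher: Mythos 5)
Your proposal is correct and follows essentially the same route as the paper's proof: invoke Proposition~\ref{prop:uniform-convex}, note that $\beta_\Sigma(P_\Sigma^\perp)=1$ for a compact convex set so that $\delta\beta<1$ reduces to $\delta<1$, and feed the resulting bound on $\|P^l(x_l)-P_\Sigma^\perp(x_l)\|_2$ into Theorem~\ref{th:conv_var_proj}. The only difference is that you additionally justify boundedness of the iterates (via $D_{\sigma_n}(x_n)\in\Sigma$ and the update formula), a point the paper leaves implicit when it passes from the operator-norm bound on the unit sphere to the pointwise errors $u_n$; this is welcome extra care rather than a divergence in method.
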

We observe that a geometric noise schedule is needed if we want to obtain a geometric convergence rate. Indeed, one can show that if $\sigma_n = \sigma_0 q^n$ for some $q \in (0,1)$, the rate is on the order of $$\delta^{n/2} + \sqrt{n}q ^n.$$ Choosing any $r \in (q,1)$, one further has $\sqrt{n}\leq (r/q)^n$ for $n$ sufficiently large so that $q^n\sqrt{n} \leq r^n$. This gives a rate of the following form for all $n$ sufficiently large: \begin{align*}
    \|x_n-\hat{x}\|_2 \leq C' \gamma^n,\ \text{where}\ \gamma:= \max\left\{\sqrt{\delta}, r\right\} < 1.
\end{align*} For nonconvex, low-dimensional sets associated with the LR-GMM in the next Section, we also observe the impact of the noise schedule on the convergence rate of GPGD-VP.

\subsection{Application to Low-Rank Gaussian Mixture Models (LR-GMMs)} \label{sec:LR-GMM-theory}

We now provide an example of a distribution $\hat{x} \sim p$ that is  expressive, concentrates on a nonconvex low-dimensional model set $\Sigma$  and whose noise-corrupted score can be interpreted in our theoretical framework. 
%This distribution concentrates on a union of low-dimensional subspaces. In particular, 
%suppose that 
Consider a distribution $p$  given by a mixture of $K$ low-rank (degenerate) Gaussian distributions $$p = \sum_{k=1}^K \pi_k \mathcal{N}(0,U_kU_k^T)$$ where each $U_k \in \R^{d \times r_k}$ has rank $r_k \leq d$ and $\sum_{k=1}^K \pi_k = 1$ Here, we assume that each $U_k$ has orthonormal columns. In this example, the low-dimensional model set is given by the union of subspaces $E_k := \mathrm{Im}(U_k)$,  $\Sigma := \bigcup_{k=1}^K E_k$.

Gaussian mixture models are an expressive class of distributions that are both theoretically interesting and practically useful. Indeed, it has been shown that Gaussian mixture models can approximate smooth densities~\cite{dalal1983approximating}. In particular,  Low-Rank GMMs (LR-GMM), have been proposed to approximate low-dimensional data~\cite{chen2021learning} and have proved their usefulness in the context of imaging inverse problems~\cite{zoran2011learning,houdard2018high}.

Prior results have shown that one can analytically characterize the noise-convolved score for the LR-GMM. %Note that for our algorithm, we use the index $n \in \mathbb{N}$ to denote iteration while diffusion models use $t \in [0,1]$ to denote time. Consider a strictly decreasing function that maps $\mathbb{N} \ni n \mapsto t_n \in [0,1]$ where $t_1 = 1$ and $t_n \rightarrow 0$ as $n \rightarrow \infty$ (e.g., $t_n := 1/n$). By a slight abuse of notation, let $\sigma_n := \sigma_{t_n}$. 
Using results in \cite{wang2024diffusion} (see, e.g., Proposition 1), one can show that for $\sigma_n>0$, the noise-convolved score is given by
\begin{align*}
       & \sigma_n^2 \nabla \log p_{\sigma_n}(x) \\
       & = \frac{1}{1+\sigma_n^2}\frac{\sum_{k}\pi_{k}\mathcal{N}(x; 0, U_kU_{k}^T + \sigma_n^2I)U_kU_k^Tx}{\sum_{l}\pi_{l}\mathcal{N}(x; 0, U_l U_{l}^T + \sigma_n^2I)} - x \\
       & =: P^n(x) - x.
    \end{align*} Letting $\omega_k(x,t):= \frac{\pi_k \mathcal{N}(x;0,U_kU_k^T+tI)}{\sum_{l}\pi_{l} \mathcal{N}(x;0,U_{l}U_{l}^T+tI)}$ and $P^{\perp}_{E_k}(x):= U_kU_k^Tx$, one can see that the operator $P^n$ is the following weighted sum of orthogonal projections onto each linear subspace $E_k$: 
    
    \begin{equation}\label{def:Pn_GMM}
    P^n(x) = \frac{1}{1+\sigma_n^2}\sum_{k=1}^K\omega_k(x, \sigma_n^2)P^{\perp}_{E_k}(x).\end{equation}

    \paragraph{Computation of the limiting projection.} Note that in simple cases, it is clear that the approximate projections converge uniformly with a decreasing noise schedule $\sigma_n \rightarrow 0$ and overall convergence is easily obtained. For example, in the case of $k = 1$, we have $\Sigma = \mathrm{Im}(U_1)$. Moreover, $P^n(x) = \frac{1}{1 + \sigma_n^2} U_1U_1^Tx = : \frac{1}{1+\sigma_n^2}P_{\Sigma}^{\perp}(x)$. Finally, since $\sigma_n^2 \rightarrow 0$ as $n \rightarrow \infty$ we have  \begin{align*}
        \|P^n - P_{\Sigma}^\perp\|_{\mathrm{op}} & =  \sup_{\|x\|_2=1}\|P^n(x) - P_{\Sigma}^\perp(x)\|_2 \\
        & =\left|\frac{1}{1+\sigma_n^2} - 1\right| \\
        & = O(\sigma_n^2) \xrightarrow[n \to \infty]{} 0.
    \end{align*}  For the more general case in \eqref{def:Pn_GMM}, we first aim to characterize what the value of the limiting projection would be. 
    
\begin{lem}\label{lem:limit_proj_GMM}
Let $\Sigma = \bigcup_{k=1}^K E_k$. Then for $x \in \mathbb{R}^d$, we have  
\begin{itemize}
 \item if $P_\Sigma^\perp(x)$ is single-valued, 
 \begin{equation}
 \lim_{n\to\infty} P^{n}(x)=  P_\Sigma^\perp(x) ; \nonumber
 \end{equation}
 \item if there is $r$ such that $\dim(E_l) = r$, and $x$ is such that $P_\Sigma^\perp(x) = \{u_l: l \in L\}$ is multi-valued (where $L \subset  \{1, \ldots,K\}$), then 

\begin{equation}
    \lim_{n\to\infty} P^{n}(x) = \frac{\sum_{l \in L } \pi_l u_l}{\sum_{l \in L } \pi_l}. \nonumber
\end{equation}
 
 \end{itemize}
\end{lem}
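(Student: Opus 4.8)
The plan is to reduce the entire statement to the asymptotics of the mixture weights $\omega_k(x,t)$ as $t := \sigma_n^2 \to 0$, since in the expression \eqref{def:Pn_GMM} the prefactor $\frac{1}{1+\sigma_n^2}\to 1$ and the projections $P_{E_k}^\perp(x)=U_kU_k^Tx$ do not depend on the noise level. First I would make the Gaussian densities explicit via the eigenstructure of $\Theta_k := U_kU_k^T + tI$. Because $U_k$ has orthonormal columns, $U_kU_k^T$ is the orthogonal projector onto $E_k$, so $\Theta_k$ has eigenvalue $1+t$ on $E_k$ (multiplicity $r_k$) and $t$ on $E_k^\perp$. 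This yields $\det\Theta_k = (1+t)^{r_k}t^{\,d-r_k}$ and $x^T\Theta_k^{-1}x = \frac{p_k}{1+t} + \frac{q_k}{t}$, where $p_k := \|U_kU_k^Tx\|_2^2$ and $q_k := \|x - U_kU_k^Tx\|_2^2 = \dist(x,E_k)^2$, with $p_k+q_k=\|x\|_2^2$. Hence, up to the common constant $(2\pi)^{-d/2}$, each density is proportional to $(1+t)^{-r_k/2}\,t^{-(d-r_k)/2}\exp\!\big(-\frac{p_k}{2(1+t)} - \frac{q_k}{2t}\big)$.

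The governing quantity is the exponential factor $\exp(-q_k/2t)$. I would set $q_* := \min_k q_k = \dist(x,\Sigma)^2$ and $L := \{k : q_k = q_*\}$, the indices of the subspaces realizing the minimal distance; these are exactly the subspaces whose projection $U_kU_k^Tx$ can be a closest point of $\Sigma$ to $x$, so that $P_\Sigma^\perp(x) = \{U_kU_k^Tx : k\in L\}$ as a set. After factoring $\exp(-q_*/2t)$ out of the numerator and denominator of $\omega_k$, every term with $k\notin L$ carries a factor $\exp\!\big(-\frac{q_k-q_*}{2t}\big)$ with $q_k-q_*>0$; since this decays faster than any power of $t$, it dominates the polynomial blow-up $t^{-(d-r_k)/2}$ and forces $\omega_k(x,t)\to 0$ for $k\notin L$. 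Consequently $\sum_{k\in L}\omega_k(x,t)\to 1$.

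For the single-valued case this already suffices: single-valuedness of $P_\Sigma^\perp(x)$ means all candidate closest points $\{U_kU_k^Tx : k\in L\}$ coincide with the unique projection, so $P^n(x)=\frac{1}{1+\sigma_n^2}\sum_k \omega_k P_{E_k}^\perp(x) \to \sum_{k\in L}(\lim\omega_k)\,P_\Sigma^\perp(x) = P_\Sigma^\perp(x)$, irrespective of how the weight redistributes within $L$. For the multi-valued case I would invoke the equal-dimension hypothesis $r_k\equiv r$ crucially: then the factors $(1+t)^{-r/2}t^{-(d-r)/2}$ are identical across all $k$ and cancel in the ratio, and moreover $p_k = \|x\|_2^2 - q_*$ is constant on $L$, so $\exp(-p_k/2(1+t))$ also cancels among $k\in L$. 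What survives is precisely $\omega_k(x,t)\to \pi_k/\sum_{l\in L}\pi_l$ for $k\in L$, giving $\lim_n P^n(x) = \sum_{l\in L}\pi_l u_l / \sum_{l\in L}\pi_l$ with $u_l = U_lU_l^Tx$.

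The main obstacle is the rigorous justification that the exponential decay dominates the polynomial growth in the weight ratios (the step handling $k\notin L$), together with carefully tracking the cancellations. The equal-dimension assumption is exactly what makes the polynomial prefactors cancel in the multi-valued case, and recognizing why it is indispensable is the conceptual crux: without it the factor $t^{-(d-r_k)/2}$ would concentrate all the weight on the lowest-dimensional subspace in $L$ rather than distributing it according to $\pi_k$, and the clean $\pi$-weighted average would fail.
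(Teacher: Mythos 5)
Your proposal is correct and follows essentially the same route as the paper: the paper's Lemma~4 (limit of the weights) performs exactly your eigendecomposition of $U_kU_k^T+tI$, the determinant computation $(1+t)^{r_k}t^{d-r_k}$, the splitting of the quadratic form into on- and off-subspace parts, and the exponential-beats-polynomial argument for the weight ratios, after which the lemma follows by passing to the limit in \eqref{def:Pn_GMM}. Your handling of the single-valued case (noting that only $\sum_{k\in L}\omega_k\to 1$ is needed when all closest points coincide) is a slightly more careful reading than the paper's, but it is not a different argument.
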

While given in the discrete $t_n$ case, this result is valid for continuous $t \to 0$.  In the following, we consider the GMM case with $K$ subspaces $E_1, \ldots, E_K$ with equal dimension $r$ with $t=t_n =\sigma_n^2 \to_{n \to \infty} 0$. 

We note that the restricted Lipschitz constant of the orthogonal projection plays an important role in our convergence results. For a union-of-subspaces, the geometry and relative orientation of the subspaces affect the restricted Lipschitz constant of the orthogonal projection $P_\Sigma^\perp$. Specifically, the orthogonal projection onto a finite union-of-subspaces may admit a restricted Lipschitz constant arbitrarily close to the upper bound $\beta =2$, as established for orthogonal projections onto nonconvex sets (see \cite[Remark 3.1]{traonmilin2024towards}). Thus, the precise value of $\beta$ in our setting reflects the geometry of the underlying subspaces.

\paragraph{Challenge.} As discussed in the previous section, if we have uniform convergence of the projection operators $P^n$ to $P_{\Sigma}$, then convergence of the entire scheme can be guaranteed. However, this may not always be the case for general distributions. In particular, for the LR-GMM setting, uniform control over $\|P^n(x) - P_{\Sigma}^\perp(x)\|_2$ cannot be expected, as the approximation error increases for points $x$ near the \textit{frontier} $F$ between subspaces, i.e., the set of points for which $x$ is equidistant to multiple subspaces. Hence a more fine-grained analysis needs to be done for this special case. Nevertheless, one can show that the iterates are eventually repelled away from such frontiers, which enables global convergence guarantees. At a high-level, we guarantee that after a given \textit{burn-in} time period, the iterates $x_n$ exhibit convergence to $\hat{x} \in \Sigma$ (with the right noise schedule) under the condition that $\delta\beta <1$. The convergence rate is a combination of a geometric convergence rate and an additional term that depends on the choice of noise schedule.

\begin{thm}[Global convergence] \label{th:global_convergence_gmm}
 Let $\delta:= \delta(\mu A^TA)$ and $\beta: = \beta_\Sigma(P_\Sigma^\perp)$.  Let $\hat{x} \in \Sigma  = \bigcup_{k=1}^K E_k $ such that there exists a unique $k \in \{1,\ldots,K\}$, such that $\hat{x} \in E_k$. Suppose $\delta \beta <1$ and $\sigma_n \to 0$. Then, there are $c,C>0$, $n_0$ such that for $n\geq n_0$
 \begin{align*}
 \|x_{n} -\hat{x}\|_2 \leq C\left( (\delta \beta)^{n/2}   + \max_{l=\lfloor n/2\rfloor,n}\exp\left( - \frac{c}{\sigma_l^2} \right)  +\sigma_l^2\right).
 \end{align*}
\end{thm}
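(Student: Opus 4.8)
The plan is to reduce everything to the general bound of Theorem~\ref{th:conv_var_proj}, whose right-hand side is controlled by the projection errors $u_l := \|P^l(x_l) - P_\Sigma^\perp(x_l)\|_2$ evaluated along the iterates. Since uniform control of $\|P^n - P_\Sigma^\perp\|_{\mathrm{op}}$ over $\R^d$ fails near the frontier $F$ (where $P_\Sigma^\perp$ is multi-valued), the strategy is to localize: I would fix the unique subspace $E_k \ni \hat{x}$ and introduce a ``good region'' $\mathcal{R} := B(\hat{x},\rho)$ on which $E_k$ is the strict nearest subspace. Because $\hat{x}\in E_k$ is the unique containing subspace, $\Delta := \tfrac{1}{2}\min_{j\neq k}\dist(\hat{x},E_j)^2 > 0$, and by continuity one can pick $\rho$ small enough that $\dist(x,E_j)^2 - \dist(x,E_k)^2 \geq \Delta$ for all $x \in \mathcal{R}$, $j \neq k$, while $\|x\|_2$ stays bounded on $\mathcal{R}$. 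On $\mathcal{R}$ we have $P_\Sigma^\perp = P^\perp_{E_k}$, which lands in $\Sigma$, so both the restricted isometry of $\mu A^TA$ on $\Sigma-\Sigma$ (Definition~\ref{def:RIC}) and the restricted $\beta$-Lipschitz property of $P_\Sigma^\perp$ (Definition~\ref{def:lip_const}) apply cleanly.

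The first technical step is a pointwise error estimate on $\mathcal{R}$. Starting from the closed form \eqref{def:Pn_GMM} and using the equal-dimension assumption (so the $\sigma_n^{-(d-r)}$ prefactors cancel in the weight ratios), the weights reduce to $\omega_j(x,\sigma_n^2) \propto \pi_j \exp(-\tfrac{1}{2}a_n \dist(x,E_j)^2)$ with $a_n = 1/(\sigma_n^2(1+\sigma_n^2)) \asymp \sigma_n^{-2}$. Writing $\omega_k = 1 - \sum_{j\neq k}\omega_j$ and rearranging yields
$$P^n(x) - P^\perp_{E_k}(x) = \Big(\tfrac{1}{1+\sigma_n^2}-1\Big)P^\perp_{E_k}(x) + \tfrac{1}{1+\sigma_n^2}\sum_{j\neq k}\omega_j\,\big(P^\perp_{E_j} - P^\perp_{E_k}\big)(x).$$
The first term is $O(\sigma_n^2\|x\|_2)$, and on $\mathcal{R}$ the margin bound gives $\omega_j \leq (\pi_j/\pi_k)\exp(-\tfrac{a_n}{2}\Delta) \lesssim \exp(-c/\sigma_n^2)$ for $j\neq k$. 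Hence $u_n \lesssim \sigma_n^2 + \exp(-c/\sigma_n^2)$ uniformly on $\mathcal{R}$, with $c$ governed by the inter-subspace margin $\Delta$; this is precisely the per-iterate error appearing in the claimed bound, and it is consistent with the limit identified in Lemma~\ref{lem:limit_proj_GMM}.

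Next I would establish a burn-in/invariance argument. Using $y = A\hat{x}$ one has the exact recursion $x_{n+1}-\hat{x} = (I-\mu A^TA)(P^n(x_n)-\hat{x})$; splitting $P^n(x_n)-\hat{x} = (P^\perp_{E_k}(x_n)-\hat{x}) + (P^n(x_n)-P^\perp_{E_k}(x_n))$ and bounding the first piece by the RIC $\delta$ and the restricted Lipschitz constant $\beta$, and the second by $\|I-\mu A^TA\|_{\mathrm{op}}\,u_n$, gives, whenever $x_n\in\mathcal{R}$,
$$\|x_{n+1}-\hat{x}\|_2 \leq \delta\beta\,\|x_n-\hat{x}\|_2 + M\big(\sigma_n^2+\exp(-c/\sigma_n^2)\big).$$
Since $\delta\beta<1$ and $\sigma_n\to 0$, for $n$ past some $n_0$ the additive error is $\leq (1-\delta\beta)\rho$, so $\mathcal{R}$ is forward-invariant from $n_0$ on. The genuinely hard part, and the main obstacle, is showing that the iterates \emph{enter} $\mathcal{R}$ in finite time from an arbitrary start, i.e. the global ``repulsion from the frontier.'' Here I would exploit that, as $\sigma_n\to0$, the weights sharpen so that any $x$ with positive margin is mapped by $P^n$ to within $O(u_n)$ of its nearest subspace; one must then argue that the data term together with the RIP (which makes $\hat{x}$ the unique point of $\Sigma$ consistent with $y$) drives these near-$\Sigma$ iterates into the cell of $E_k$ and away from $F$, and rule out iterates stalling in an exponentially thin neighborhood of the frontier. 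Controlling this transient is delicate precisely because near $F$ the estimate above degrades and $P^n(x)$ need not be close to $\Sigma$ at all.

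Finally, once $x_l \in \mathcal{R}$ for all $l \geq n_0$, I would re-apply Theorem~\ref{th:conv_var_proj} with $x_{n_0}$ as the initialization (its hypothesis $u_l\to0$ now holds by the pointwise estimate) and substitute $u_l \lesssim \sigma_l^2 + \exp(-c/\sigma_l^2)$ into \eqref{eq:general-iterates-bound-GPGD-VP}. The geometric factor $(\delta\beta)^{(n-n_0)/2}$ is absorbed into $C$, and the $\max_{l=\lfloor n/2\rfloor,n}$ of the projection error becomes $\max_{l=\lfloor n/2\rfloor,n}\big(\exp(-c/\sigma_l^2)+\sigma_l^2\big)$, giving the stated bound for $n\geq n_0$.
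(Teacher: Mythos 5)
Your local analysis is essentially the paper's: the pointwise estimate $u_n \lesssim \sigma_n^2 + \exp(-c/\sigma_n^2)$ on a region where $E_k$ has a strict margin is Lemma~\ref{lem:bound_op_norm1}, the forward-invariance of $B(\hat{x},\rho)$ is Lemma~\ref{lem:stability_local}, and the final substitution into Theorem~\ref{th:conv_var_proj} is Theorem~\ref{th:local_convergence}. But the step you yourself flag as ``the genuinely hard part'' --- showing the iterates enter this region in finite time from an arbitrary start --- is exactly the content of the theorem, and you do not supply an argument for it. Saying that the data term and the RIP ``drive near-$\Sigma$ iterates into the cell of $E_k$'' is a plausible heuristic, but it is not a proof, and the difficulty is precisely that near the frontier $F$ your margin bound degrades completely: $P^n(x_n)$ is then a nontrivial mixture of projections onto several subspaces and need not be close to $\Sigma$, so neither the RIC on $\Sigma-\Sigma$ nor the restricted Lipschitz property of $P_\Sigma^\perp$ applies to it directly. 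A proof that only invokes Theorem~\ref{th:conv_var_proj} after entry into the basin is circular without an independent mechanism that forces entry.

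The paper closes this gap with Lemma~\ref{lem:bound_iterates_F}, whose idea is absent from your proposal. For a frontier point $v \in F_L$ near $x_n$, every $P_{E_\ell}^\perp(v)$ with $\ell \in L$ belongs to $P_\Sigma^\perp(v)$, so the restricted Lipschitz property can be applied to each branch of the mixture separately; this yields
\begin{equation}
\|x_{n+1}-\hat{x}\|_2 \leq \delta\beta\|x_n-\hat{x}\|_2 + \delta\|v\|_2\sum_{\ell\in L^c}\omega_\ell(x_n,t_n) + \delta(1+\beta)\|x_n-v\|_2 + \delta t_n\|\hat{x}\|_2, \nonumber
\end{equation}
i.e.\ the contraction factor $\delta\beta$ survives \emph{even on the frontier}, at the price of an error controlled by the distance to $F_L$ and the tail weights. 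The global argument is then a dichotomy at each step: either $x_n$ has margin at least of order $t_n\log(1/t_n)$ from all other subspaces, in which case Lemma~\ref{lem:bound_op_norm1} gives an exponentially small error, or it does not, in which case one picks $L_n$ to collect all the nearly-equidistant subspaces and $v_n$ the nearest point of $F_{L_n}$, and a continuity argument (Lemma~\ref{lem:continuity_argument}) shows $\|x_n-v_n\|_2\to 0$. Taking the minimum of the two bounds gives an error sequence $U_n\to 0$ along the \emph{entire} trajectory, which forces $\|x_n-\hat{x}\|_2\to 0$ and hence eventual entry into $B(\hat{x},C)\subset\Omega_\eta$, at which point your local analysis takes over. (The paper also needs a preliminary step, which you omit, showing $\|x_n\|_2 \geq b > 0$ eventually, so that the normalized quantities in these bounds are controlled.) Without some substitute for this frontier lemma and the accompanying dichotomy, your argument establishes only local, not global, convergence.
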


\paragraph{Proof sketch.} The proof relies on establishing two properties. First, we show that away from the frontier between subspaces, the error between projection operators $\|P^n (x_n)- P_{\Sigma}(x_n)\|_2$ decays exponentially in $\eta/\sigma_n^{2}$, where $\eta > 0$ controls the distance to the frontier. Second, we establish that if iterates approach the frontier, the update rule introduces a repulsive effect that eventually drives the iterates into a stable region associated with the true subspace of $\hat{x}$. Within this neighborhood, iterates stay within a ball of the true signal $\hat{x}$ (see the definition of $\Omega_{\eta}$ and Theorem \ref{th:local_convergence} in Section \ref{sec:local-convergence-appx} of the Appendix). Establishing this result is the main technical difficulty of the proof. Combining these two ingredients yields the stated convergence rate. Similar results are typical in the literature of sparse recovery: once the subspace supporting the unknown signal is identified, linear convergence is observed (see e.g., \cite{liang2018local}). A detailed local convergence analysis, which quantifies the basin of attraction and proves stability away from the frontier, is provided in the Appendix.

Note that the requirement that $\hat{x}$ strictly belongs to a single subspace $E_k$ holds with probability $1$ under $p$, since the intersection of two distinct subspaces has measure zero with respect to $p$. Similarly to convergence guarantees with convex model sets, Theorem~\ref{th:global_convergence_gmm} provides a bound on the iterates that depends on both a geometric term and the noise schedule. If the noise schedule follows a geometric rate, the same argument as  in Corollary~\ref{cor:linear_convergence} yields geometric convergence.

\begin{remark}[Random sparse vectors] The LR-GMM framework can also provide an analysis for other classes of signals of interest, such as sparse vectors. In particular, consider the following generative model for random sparse vectors: we say that $x \sim p_s$ if and only if $x$ is generated by taking a subset $S \subset [d]$ of size $|S|=s$ uniformly and setting $x_i \sim \mathcal{N}(0,1)$ for $i \in [S]$ and $x_i = 0$ otherwise. An equivalent way of describing this distribution is with a LR-GMM where the parameters are defined as follows: enumerate all possible subsets of $[d]$ of cardinality $s$ and denote this set by $\mathcal{S} := \{S \subset [d]: |S|=s\}$. Note that there are $K = \binom{d}{s}$ possible subsets. Then $p_s$ is a LR-GMM of the form
\begin{align*}
    p_s = \sum_{S \in \mathcal{S}}  \frac{1}{\binom{d}{s}}\mathcal{N}(0, U_SU_S^T)
\end{align*} where $U_S  := [e_{i_1},\dots,e_{i_s}] \in \R^{d \times s}$ and $S := \{i_1,\dots,i_s\} \subset [d]$. Here, $e_i$ is the $i$-th standard basis vector in $\R^d$. In this case, note that our model set is given by $\Sigma = \bigcup_{S \in \mathcal{S}}\mathrm{Im}(U_S) = \{x \in \R^d : \|x\|_0 \leq s\}.$ Hence for $s$-sparse data $\hat{x}$, the previous convergence guarantees apply.
\end{remark}

\section{Experiments} \label{sec:exps}

We consider numerical experiments to support our theoretical findings. In particular, we will consider synthetic data from a LR-GMM and illustrate the convergence of the algorithm we analyze.

Given a LR-GMM distribution $p = \frac{1}{K}\sum_{k=1}^K\pi_K \mathcal{N}(0,U_kU_k^T)$, we aim to recover signals from a random Gaussian compressed sensing matrix $y = A\hat{x}$ where $A \in \R^{m \times d}$ with $A_{ij} \sim \mathcal{N}(0,1)$ and $\hat{x} \sim p$. Each $U_k \in \R^{d \times r}$ is a random matrix with orthonormal columns. We run our algorithm with various noise schedules $(\sigma_n)$ and a fixed step size of $\mu = 1.9 / \|A\|_2^2$. For our different choices of schedules, setting $N$ equal to the maximum number of iterations of our algorithm, we consider the following set of noise schedules: \begin{align*}
& \text{(geometric)}\  \sigma_n = \sigma_{\max}\left(\frac{\sigma_{\min}}{\sigma_{\max}}\right)^{n/N}, \\
& \text{(linear)}\  \sigma_n^2 = \sigma_{\max}^2 + \frac{n}{N}(\sigma_{\min}^2 - \sigma_{\max}^2), \\
& \text{(cosine)}\ \sigma_n = \sigma_{\min} +  \frac{1}{2}(\sigma_{\max} - \sigma_{\min})(1+\cos(\pi n / N)), \\
& \text{(infinite geometric)}\ \sigma_n = \sigma_{\max}  a^n\ \text{where}\ a \in (0,1).
\end{align*}

\begin{figure}[!h]
    \centering
    \includegraphics[width=0.45\linewidth]{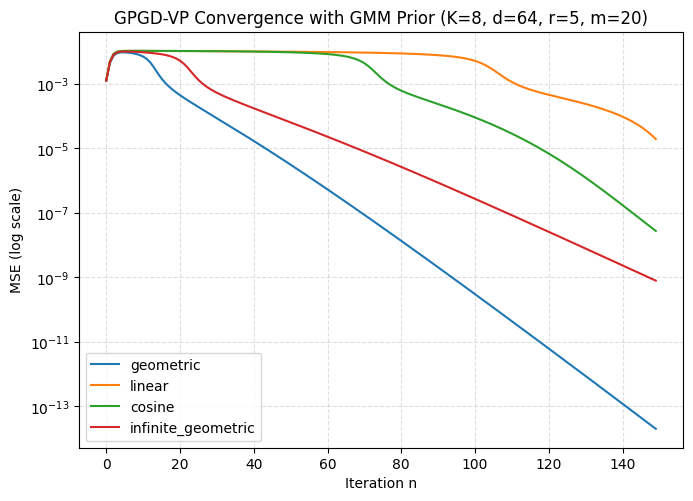}
    \caption{We take $\hat{x} \sim p = \frac{1}{8}\sum_{k=1}^8 \mathcal{N}(0,U_kU_k^T)$ and recover $\hat{x}$ from $y = A\hat{x}$ where $A\in \R^{m \times d}$ has i.i.d. $\mathcal{N}(0,1)$ entries. Here, $d = 64, r=5,$ and $m = 20$. We fix the step size $\mu = 1.9 / \|A\|_2^2$ and consider 4 different noise schedules with $\sigma_{\max} = 0.5$ and $\sigma_{\min} = 10^{-4}$. We show the mean squared error over all iterations with $N = 150$.}
    \label{fig:gmm-prior-convergence}
\end{figure}

Recovery results for our algorithm in the case when $K = 8$, $d = 64$, $r = 5$, and $m = 20$ are shown in Figure \ref{fig:gmm-prior-convergence}. We note that for a geometrically decaying noise schedule, the iterates exhibit linear convergence after an initial \textit{burn-in} period, which is shorter for the geometric schedule as compared to each of the other noise schedules. In Figure \ref{fig:distance-to-subspaces}, we further analyze this by zooming into the first part of the optimization and also display the distance of the iterates to each individual subspace $\mathrm{dist}(x_n,\mathrm{Im}(U_i)):=\|(I-U_iU_i^T)x_n\|_2$. We see that during this burn-in period, the iterates begin to reach the frontier, as their distance to each of the subspaces becomes close. After the correct subspace is identified, linear convergence is observed. 

\begin{figure}[!h]
    \centering
    \includegraphics[width=0.45\linewidth]{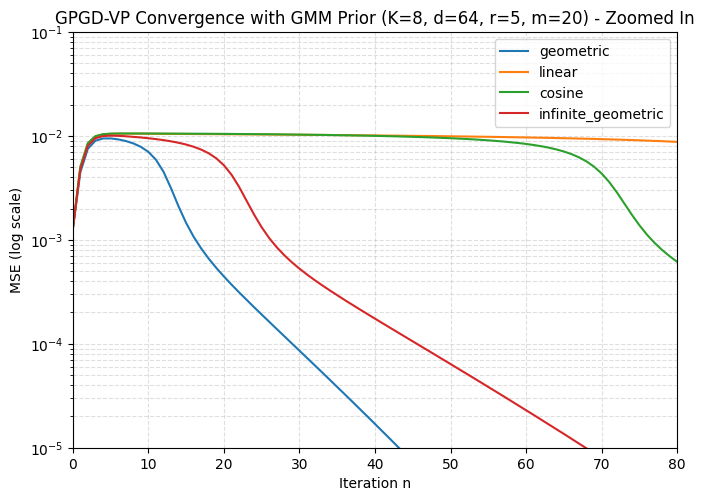}
    \includegraphics[width=0.5\linewidth]{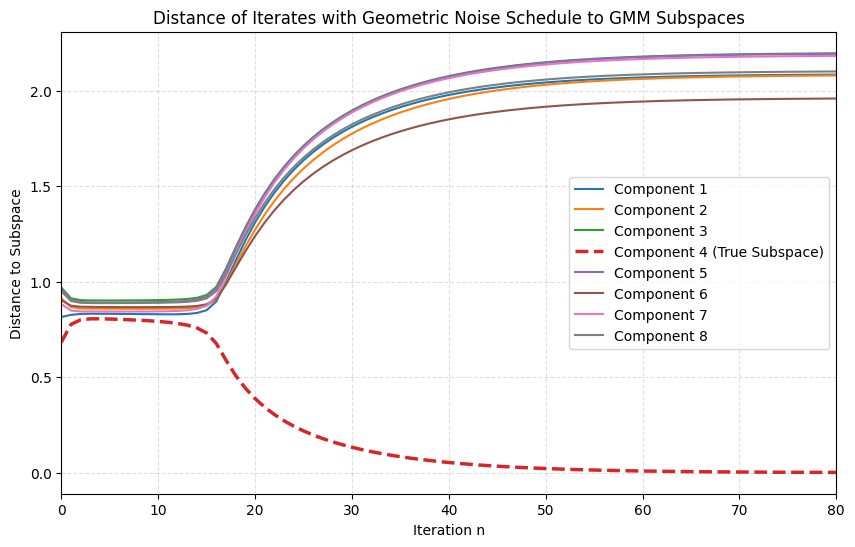}
    \caption{We consider the same experiment as in Figure \ref{fig:gmm-prior-convergence}. Top: a closer look at the first $80$ iterations of the algorithm. Bottom: for the geometric noise schedule, we calculate the distance of the iterates to each subspace, with the correct underlying subspace error in the dashed line. We see that during this burn-in period, the iterates move closer towards the frontier, but are propelled away from it after a short time.}
    \label{fig:distance-to-subspaces}
\end{figure}
\section{Discussion}

In this work, we developed a principled recovery theory for diffusion priors in linear inverse problems. Our analysis exploited the fact that noise-convolved scores can be interpreted as approximate projections onto low-dimensional model sets. This perspective allows us to interpret certain diffusion-based algorithms as generalized projected gradient descent methods with time-varying projections, enabling recovery guarantees across a range of measurement operators and data distributions. Under a restricted isometry condition, we showed that (i) for uniform measures on convex sets, the limiting denoiser converges to the metric projection with an explicit error bound, giving global convergence for recovering signals from this model set. Also, (ii) for low-rank Gaussian mixture models (LR-GMMs), we showed that the limiting operator recovers a projection operator onto a union-of-subspaces and we established global convergence to the true signal. These results naturally cover (iii) recovery of random sparse signals as well. In all cases, we demonstrated that the convergence rate depends explicitly on the choice of noise schedule. These results bridge modern generative priors with the well-studied theory of regularization in inverse problems, leaving several open directions for future work.

First, while our analysis treated specific data distributions, it would be natural to extend the theory to other rich classes of distributions, such as those supported on nonlinear manifolds. Second, integrating our framework into sampling approaches such as DPS \cite{Chungetal23} or diffusion-based plug-and-play methods \cite{feng2023score, xu2024provably} could yield novel sample complexity guarantees for posterior sampling algorithms. Third, many applications involve nonlinear forward operators (e.g., phase retrieval), and extending our analysis to such settings would broaden the impact of this theory. Finally, it would be important to develop robustness guarantees with respect to measurement noise or model mismatch between the learned diffusion prior and the true data distribution.

\bibliographystyle{abbrv}
\bibliography{thebib}
\newpage
\onecolumn
%\aistatstitle{A Recovery Theory for Diffusion Priors \\
%Supplementary Materials}

%\input{sections/appx}

\section{Proofs for Section \ref{sec:general-conv-results}}

We bound the recovery error $  \|x_{n} -\hat{x}\|_2 $ with the following Lemma.
\begin{lem} \label{lem:bound_iterates_gen2}
Suppose $P_\Sigma $  is restricted $\beta$-Lipschitz with respect to $\Sigma$. Let $\delta : =\delta(\mu A^TA)$. Consider $x_n$ resulting from  iterations \eqref{eq:main-iterations}.
We have, for $n\geq 1$,
\begin{equation}\label{eq:final_bound_gen2}
\begin{split}
    \|x_{n} -\hat{x}\|_2 
     &\leq (\delta\beta)^{n} \| x_0 -\hat{x}  \|_2 + \|I-\mu A^TA\|_{\mathrm{op}} \sum_{l=0}^{n-1}(\delta \beta)^{n-l-1}  \|P^l(x_l) -P_\Sigma(x_l)\|_2. \\
\end{split}
\end{equation}
\end{lem}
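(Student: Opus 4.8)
The plan is to first reduce the $(n+1)$-st error to the $n$-th error via a single algebraic identity, and then unroll the resulting one-step recursion by induction. Since $y = A\hat{x}$ and $\hat{x} \in \Sigma$, substituting into \eqref{eq:main-iterations} and subtracting $\hat{x}$ gives
$$x_{n+1} - \hat{x} = P^n(x_n) - \hat{x} - \mu A^TA\bigl(P^n(x_n) - \hat{x}\bigr) = (I - \mu A^TA)\bigl(P^n(x_n) - \hat{x}\bigr),$$
where the measurement term $\mu A^T y = \mu A^TA\hat{x}$ is absorbed cleanly because the problem is noiseless. This identity is the crux: it expresses the error entirely through the operator $I - \mu A^TA$ acting on $P^n(x_n) - \hat{x}$.

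Next I would insert $\pm P_\Sigma(x_n)$ and split by the triangle inequality, writing $P^n(x_n) - \hat{x} = (P_\Sigma(x_n) - \hat{x}) + (P^n(x_n) - P_\Sigma(x_n))$. The first summand lies in the secant set $\Sigma - \Sigma$ because both $P_\Sigma(x_n)$ and $\hat{x}$ belong to $\Sigma$, so the restricted isometry constant of Definition~\ref{def:RIC} applies directly and yields $\|(I - \mu A^TA)(P_\Sigma(x_n) - \hat{x})\|_2 \leq \delta \|P_\Sigma(x_n) - \hat{x}\|_2$. To convert the right-hand side back into the error at step $n$, I would invoke the restricted $\beta$-Lipschitz property of $P_\Sigma$ (Definition~\ref{def:lip_const}) with $z = x_n$ and $x = \hat{x} \in \Sigma$, giving $\|P_\Sigma(x_n) - \hat{x}\|_2 \leq \beta\|x_n - \hat{x}\|_2$. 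The second summand is \emph{not} controlled by the RIC, since $P^n(x_n)$ need not lie in $\Sigma$; here I would simply bound it by the operator norm, $\|(I - \mu A^TA)(P^n(x_n) - P_\Sigma(x_n))\|_2 \leq \|I - \mu A^TA\|_{\mathrm{op}}\|P^n(x_n) - P_\Sigma(x_n)\|_2$.

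Combining these gives the one-step recursion
$$\|x_{n+1} - \hat{x}\|_2 \leq \delta\beta\,\|x_n - \hat{x}\|_2 + \|I - \mu A^TA\|_{\mathrm{op}}\,\|P^n(x_n) - P_\Sigma(x_n)\|_2,$$
which I would unroll by a routine induction on $n$: writing $a_n = \|x_n - \hat{x}\|_2$ and $b_l = \|I - \mu A^TA\|_{\mathrm{op}}\|P^l(x_l) - P_\Sigma(x_l)\|_2$, the inequality $a_{n+1} \leq \delta\beta\, a_n + b_n$ telescopes to $a_n \leq (\delta\beta)^n a_0 + \sum_{l=0}^{n-1}(\delta\beta)^{n-1-l} b_l$, which is exactly \eqref{eq:final_bound_gen2}.

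The main obstacle here is conceptual rather than computational: recognizing that the error splits naturally at $P_\Sigma(x_n)$ so that the RIC, which only controls differences of points lying inside $\Sigma$, can be applied to the ``on-model'' part, while the ``off-model'' discrepancy $P^n(x_n) - P_\Sigma(x_n)$ is handled separately through the cruder operator-norm bound. The interplay between the two projection notions — using $P_\Sigma$ as the intermediate anchor so that both $\beta$-Lipschitzness and the RIC can each act on the quantity they are designed for — is the key idea; everything else, namely the noiseless cancellation of the measurement term and the unrolling of the geometric recursion, is mechanical.
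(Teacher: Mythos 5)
Your proof is correct and follows essentially the same route as the paper: the same one-step identity $x_{n+1}-\hat{x}=(I-\mu A^TA)(P^n(x_n)-\hat{x})$, the same decomposition at $P_\Sigma(x_n)$ so that the RIC and the restricted $\beta$-Lipschitz property act on the on-model part while the operator norm handles the discrepancy $P^n(x_n)-P_\Sigma(x_n)$, and the same inductive unrolling. No gaps.
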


\begin{proof}
 With the definition of the RIC $\delta:= \delta(\mu A^TA)$ and the restricted Lipschitz constant $\beta$ we have
\begin{equation} \label{eq:first_bound_gen2} 
\begin{split}
    \|x_{n+1} -\hat{x}\|_2 &=  \| (I-\mu A^TA) (P^n(x_n) -\hat{x}) \|_2 \\
    &=\| (I-\mu A^TA) (P_\Sigma(x_n)-\hat{x})  + (I-\mu A^TA) (P^n(x_n) -P_\Sigma(x_n)) \|_2 \\
     &\leq \delta \beta  \| x_n -\hat{x}  \|_2 + \|I-\mu A^TA\|_{\mathrm{op}} \|P^n(x_n) -P_\Sigma(x_n)) \|_2\\
\end{split}
\end{equation}

We verify the result by induction. The bound \eqref{eq:first_bound_gen2} for $n=0$ is exactly the case $n=1$ of \eqref{eq:final_bound_gen2}. Suppose \eqref{eq:final_bound_gen2} is verified at step $n$, then  

\begin{equation}
\begin{split}
    \|x_{n+1} -\hat{x}\|_2 \leq& \delta \beta  \left( (\delta \beta)^{n}  \| x_0 -\hat{x}  \|_2 + \|I-\mu A^TA\|_{\mathrm{op}} \sum_{l=0}^{n-1} (\delta \beta)^{n-l-1}  \|P^l(x_l) -P_\Sigma(x_l)\|_2\right)  \\
     &+ \|(I-\mu A^TA)\|_{\mathrm{op}}\|P^n(x_n) -P_\Sigma(x_n)\|_2\\
       =&  (\delta \beta)^{n+1}  \| x_0 -\hat{x}  \|_2 + \|I-\mu A^TA\|_{\mathrm{op}} \sum_{l=0}^{n-1} (\delta \beta)^{n-l} \|P^l(x_l) -P_\Sigma(x_l)\|_2\ \\
     &+ \|I-\mu A^TA\|_{\mathrm{op}} \|P^l(x_l) -P_\Sigma(x_l)\|_2\\
\end{split} \nonumber
\end{equation}
We remark that $\|P^n(x_n) -P_\Sigma(x_n)\|_2= (\delta \beta)^{n-n} \|P^n(x_n) -P_\Sigma(x_n)\|_2 $, which allows to conclude 

\begin{equation}
\begin{split}
    \|x_{n+1} -\hat{x}\|_2 &\leq (\delta \beta)^{n+1}  \| x_0 -\hat{x}  \|_2 + \|I-\mu A^TA\|_{\mathrm{op}} \sum_{l=0}^{n} (\delta \beta)^{n-l}  \|P^l(x_l) -P_\Sigma(x_l)\|_2. \\
\end{split} \nonumber
\end{equation}
\end{proof}

Lemma~\ref{lem:bound_iterates_gen2} bounds the recovery error by a geometric term and a term depending on the $\|P^{l}(x_l) -P_\Sigma(x_l) \|_2$. When these ``projection errors" are small enough we are able to show convergence.

\begin{proof}[Proof of Theorem~\ref{th:conv_var_proj}]
First remark that the sequence $\|x_n- x_0\|_2$ is bounded, i.e. there exists $D\geq 0$ such that $\|x_n- x_0\|_2 \leq D$. Indeed, with the hypothesis, there exists $C'\geq 0$ such that  $u_n =\|P^n(x_n)-P_\Sigma(x_n)\| \leq C'$, and from Lemma~\ref{lem:bound_iterates_gen2},
\begin{equation}
\begin{split}
    \|x_{n} -\hat{x}\|_2 
     &\leq \| x_0 -\hat{x}  \|_2 + \|I-\mu A^TA\|_{\mathrm{op}} C'\sum_{l=0}^{n-1}(\delta \beta)^{n-l-1}  . \\
     &\leq \| x_0 -\hat{x}  \|_2 +  \|I-\mu A^TA\|_{\mathrm{op}}  C'\sum_{l=0}^{\infty}(\delta \beta)^{l} =:D.
\end{split} \nonumber
\end{equation}

We use Lemma~\ref{lem:bound_iterates_gen2} with initialization $x_{n_0}$ for some $n_0 \geq 0$. We have
\begin{equation}
\begin{split}
    \|x_{n+n_0} -\hat{x}\|_2 
     &\leq ( \delta \beta)^{n}  \| x_{n_0} -\hat{x}  \|_2 + \|I-\mu A^TA\|_{\mathrm{op}} \sum_{l=0}^{n-1} \left(\beta\delta \right)^{n-1-l}  \|P^{l+n_0}(x_{l+n_0}) -P_\Sigma \|_2 \\
      &\leq D( \delta \beta)^{n}   + \|I-\mu A^TA\|_{\mathrm{op}} \sum_{l=0}^{n-1} \left(\beta\delta \right)^{n-1-l} \max_{l=0,n-1}\|P^{n_0+l}(x_{n_0+l}) -P_\Sigma(x_{n_0+l})  \|_2 \\
      &\leq  D( \delta \beta)^{n}   + \|I-\mu A^TA\|_{\mathrm{op}} \frac{1}{1 - \beta\delta} \max_{l=0,n-1}\|P^{n_0+l}(x_{n_0+l}) -P_\Sigma(x_{n_0+l})  \|_2 \\
\end{split} \nonumber
\end{equation}
Taking $n=n_0$, we have 

\begin{equation}
\begin{split}
    \|x_{2n} -\hat{x}\|_2 
      &\leq D ( \delta \beta)^{n}  + \frac{\|I-\mu A^TA\|_{\mathrm{op}} }{1 - \beta\delta}  \max_{l=0,n-1}\|P^{n+l}(x_{n+l}) -P_\Sigma(x_{n+l})  \|_2  \\
\end{split} \nonumber
\end{equation}
Setting $C = \max  (D,  \frac{\|I-\mu A^TA\|_{\mathrm{op}} }{1 - \beta\delta}  )$, we have 
\begin{equation}
\begin{split}
    \|x_{2n} -\hat{x}\|_2 
      &\leq  C( ( \delta \beta)^{n}   + \max_{l=0,n-1}\|P^{n+l}(x_{n+l}) -P_\Sigma(x_{n+l})  \|_2 )\\
       &\leq  C (( \delta \beta)^{n}   +  \max_{l=0,n}\|P^{n+l}(x_{n+l}) -P_\Sigma(x_{n+l})  \|_2 ).\\ \nonumber
\end{split}
\end{equation}
Taking $n=n_0-1$, we have 

\begin{equation}
\begin{split}
    \|x_{2n+1} -\hat{x}\|_2 
      &\leq  C(( \delta \beta)^{n}   +   \max_{l=0,n-1}\|P^{n+1+l}(x_{n+1+l}) -P_\Sigma(x_{n+1+l}) \|_2 )\\
      &\leq  C( ( \delta \beta)^{n}   + \max_{l=1,n}\|P^{n+l}(x_{n+l}) -P_\Sigma(x_{n+l}) \|_2 )\\
      &\leq  C ( (\delta \beta)^{n}   + \max_{l=0,n}\|P^{n+l}(x_{n+l}) -P_\Sigma(x_{n+l}) \|_2 ).\\ \nonumber
\end{split}
\end{equation}
Hence for any  $m =  2n +1$ or $m=2n$, we have $n=\lfloor m/2\rfloor$, we deduce 
\begin{equation}
\begin{split}
     \|x_{m} -\hat{x}\|_2 
             &\leq  C( ( \delta \beta)^{\lfloor m/2\rfloor}   + \max_{l=\lfloor m/2\rfloor,\lfloor m/2\rfloor+\lfloor m/2\rfloor}\|P^{l}(x_l) -P_\Sigma(x_l) \|_2)\\
              &\leq  C( ( \sqrt{\delta \beta})^{ m}   + \max_{l=\lfloor m/2\rfloor,m}\|P^{l}(x_l) -P_\Sigma(x_l) \|_2 ).\\ \nonumber
\end{split}
\end{equation}
\end{proof}

\begin{proof}[Proof of Corollary~\ref{cor:slow_convergence}]
Let $u_n := \|P^n(x_n) -P_\Sigma(x_n)\|_2 $. By  hypothesis, $(u_n)$ is a positive sequence that converges to $0$.  Hence $\lim_{n\to \infty} \sup_{l\geq n} u_l = \lim_{n \to \infty} u_n =0$ and with Theorem~\ref{th:conv_var_proj}, 
\begin{align*}
\|x_n-\hat{x}\| & \leq C ((\sqrt{\delta \beta})^n +\max_{l=\lfloor n/2\rfloor,n}\|P^{l}(x_l) -P_\Sigma(x_l) \|_2 ) \\
& \leq  C( (\sqrt{\delta \beta})^n +\sup_{l \geq \lfloor n/2\rfloor}u_l ) \xrightarrow[n \to \infty]{}  0 .
\end{align*}

\end{proof}

\begin{proof}[Proof of Corollary~\ref{cor:linear_convergence}]

From the proof of Lemma~\ref{lem:bound_iterates_gen2}, we have
\begin{equation} 
\begin{split}
    \|x_{n+1} -\hat{x}\|_2 
     &\leq \delta \beta  \| x_n -\hat{x}  \|_2 + \|I-\mu A^TA\|_{\mathrm{op}} \|P^n(x_n) -P_\Sigma(x_n) \|_2\\
     & \leq \delta \beta  \| x_n -\hat{x}  \|_2+\|I-\mu A^TA\|_{\mathrm{op}} \|P^n -P_\Sigma \|_{\mathrm{op}} \|x_n\|_2 \\ \nonumber
     \end{split}
\end{equation}
and, by hypothesis, there exists $C'$ such that
\begin{equation} 
\begin{split}
    \|x_{n+1} -\hat{x}\|_2 
     & \leq \delta \beta  \| x_n -\hat{x}  \|_2+ C'e^{-cn} \|x_n\|_2\\
     &\leq (\delta \beta +C'e^{-cn})  \| x_n -\hat{x}  \|_2+ C'e^{-cn} \|\hat{x}\|_2.\\ \nonumber
     \end{split}
\end{equation}
We deduce that there is $n_1$, $\rho < 1$, such that for $n \geq n_1$
\begin{equation} 
\begin{split}
    \|x_{n+1} -\hat{x}\|_2  &\leq \rho \| x_n -\hat{x}  \|_2+ C'e^{-cn} \|\hat{x}\|_2.\\ \nonumber
     \end{split}
\end{equation}
By induction, 
\begin{equation} 
\begin{split}
    \|x_{n_1+n} -\hat{x}\|_2  &\leq \rho^n \| x_{n_1} -\hat{x}  \|_2+  \sum_{i=0}^{n-1}\rho^{n-i-1} C'e^{-c(n+n_1)} \|\hat{x}\|_2.\\
    & \leq \rho^n \| x_{n_1} -\hat{x}  \|_2+  \sum_{i=0}^{n-1}\rho^{n-i-1}C'\|\hat{x}\|_2.\\ 
    & \leq \rho^n \| x_{n_1} -\hat{x}  \|_2+  \frac{1}{1-\rho}C'\|\hat{x}\|_2.\\ \nonumber
     \end{split}
\end{equation}
We deduce that $(x_n) $ is bounded and that $\lim_{n\to\infty}\|P_n(x_n)-P_\Sigma(x_n)\|_2 =0$.

We conclude by remarking that there exists $D \geq 0$ such that
\begin{align*}
\max_{l=\lfloor n/2\rfloor,n}\|P^{l}(x_l) -P_\Sigma(x_l) \|_{\mathrm{op}} &  \leq D \max_{l=\lfloor n/2\rfloor,n} e^{-cl} \leq D e^{-\frac{c}{2}n}.
\end{align*}
and by applying Theorem~\ref{th:conv_var_proj}.
\end{proof}

\section{{Proofs for Section \ref{sec:convex-sets}}} \label{sec:unif-measure-convex-set}

To prove Proposition \ref{prop:uniform-convex}, we first require some additional notation and topological preliminaries. For a convex, compact set $\Sigma \subset \R^d$, let $\mathrm{aff}(\Sigma )$ denote the affine hull of $\Sigma $, which corresponds to the smallest affine set that contains $\Sigma $: $$\mathrm{aff}(\Sigma) := \left\{\sum_{i=1}^k \lambda_i x_i: k > 0,\ x_i \in \Sigma,\ \lambda_i \in \R,\ \sum_{i=1}^k \lambda_i = 1\right\}.$$ The intrinsic dimension of $\Sigma$ is then given by $s = \mathrm{dim}(\mathrm{aff}(\Sigma))$. Note that when $\Sigma$ is a full-dimensional convex body (compact, convex with non-empty interior), the affine hull coincides with the entire ambient space, i.e., $\mathrm{aff}(\Sigma) = \R^d$. Within $\mathrm{aff}(\Sigma)$, we will consider several sets, such as the relative intrinsic ball in $\mathrm{aff}(\Sigma)$: \begin{align*}
    B_{\mathrm{aff}(\Sigma)}(x,r) := \{z \in \mathrm{aff}(\Sigma): \|z-x\|_2 < r\} = \mathrm{aff}(\Sigma) \cap B(x,r).
\end{align*} We equip $\mathrm{aff}(\Sigma)$ with the subspace topology, meaning that open sets in $\mathrm{aff}(\Sigma)$ are of the form $\mathrm{aff}(\Sigma) \cap U$ with $U$ open in $\R^d$. Then the relative interior of $\Sigma$, $\mathrm{ri}(\Sigma)$, is $$\mathrm{ri}(\Sigma) := \{x \in \Sigma : \exists \epsilon > 0\ \text{s.t.}\ B_{\mathrm{aff}(\Sigma)}(x,\epsilon) \subseteq \Sigma\}.$$ Then for compact $E$, we define the relative boundary of $E$ with respect to $\mathrm{aff}(\Sigma)$ to be $\partial_{\mathrm{aff}(\Sigma)}\Sigma := \Sigma \setminus \mathrm{ri}(\Sigma).$ For our uniform measure, let $\lambda_s$ denote the $s$-dimensional Hausdorff measure restricted to $\mathrm{aff}(\Sigma)$. Then the uniform probability measure on $E$ (relative to $\mathrm{aff}(\Sigma)$) is $$dp(x) = \frac{\mathbf{1}_{\{x \in \Sigma\}}}{\lambda_s(\Sigma)}d\lambda_s(x),\ x \in \mathrm{aff}(\Sigma).$$ We are now ready to state the main result.
%\editY{Use only $P_{\Sigma}^\perp(x)$ or $P_{E}^\perp(x)$ instead of  $P^{\perp}_{\Sigma}(y)$ ?}
\begin{claim}[Proposition \ref{prop:uniform-convex} in the main body] \label{claim:convex-projection}
    Suppose $\Sigma \subset \R^d$ is a compact, convex set containing the origin with non-empty relative interior in $\mathrm{aff}(\Sigma)$. Set $s = \mathrm{dim}(\mathrm{aff}(\Sigma))$ and let $p$ be the uniform measure on $E$. Then we have that there exists a positive absolute constant $C$ such that the following holds for any $y\in \R^d$: \begin{align*}
        \|D_{\sigma}(y) - P^{\perp}_{\Sigma}(y)\|_2^2 \leq C \left(\frac{s R_{\Sigma}^2 \max\{1,\mathrm{dist}(y,E)\}^s\lambda_s(\Sigma)}{\kappa_{\Sigma,s}} \right)\cdot \sigma^2 \log(1/\sigma)
    \end{align*} where $R_{\Sigma}:= \sup_{z \in \Sigma}\|z\|$ and $\kappa_{\Sigma,s}$ is the constant defined in Lemma \ref{lem:technical-volume-est}. For a noise schedule $\sigma_n$, defining $P^n(x):= D_{\sigma_n}(x)$ as in \eqref{eq:appx-projection-op}, then we have for $n$ such that $\sigma_n < 1$, $$\|P^n - P_{\Sigma}^\perp\|_{\mathrm{op}} := \sup_{\|x\|_2=1}\|P^n(x) - P_{\Sigma}^\perp(x)\|_2 \leq C\sqrt{\left(\frac{s R_{\Sigma}^2\lambda_s(\Sigma)}{\kappa_{\Sigma,s}} \right)\cdot \sigma_n^2 \log(1/\sigma_n)}.$$
\end{claim}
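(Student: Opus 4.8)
The plan is to exploit the fact that for a uniform prior on $\Sigma$ the MMSE denoiser is a Gaussian-weighted average over $\Sigma$, and that as $\sigma\to0$ this average concentrates on the (unique, by convexity) metric projection $x^\ast:=P_\Sigma^\perp(y)$. With $x_0\sim p$ and $z\sim\mathcal N(0,I)$, Bayes' rule gives $D_\sigma(y)=\big(\int_\Sigma x\,e^{-\|y-x\|_2^2/2\sigma^2}\,d\lambda_s(x)\big)\big/\big(\int_\Sigma e^{-\|y-x\|_2^2/2\sigma^2}\,d\lambda_s(x)\big)$. First I would reduce to the intrinsic setting: since every $x\in\Sigma$ lies in $\mathrm{aff}(\Sigma)$, the Pythagorean identity $\|y-x\|_2^2=\|y-\bar y\|_2^2+\|\bar y-x\|_2^2$ (with $\bar y$ the orthogonal projection of $y$ onto $\mathrm{aff}(\Sigma)$) lets the constant $\|y-\bar y\|_2^2$ cancel in the ratio, so I may assume $y\in\mathrm{aff}(\Sigma)\cong\R^s$ and replace $\mathrm{dist}(y,\Sigma)$ by the smaller intrinsic distance. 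Introducing the normalized weights $\tilde w(x):=\exp\!\big(-(\|y-x\|_2^2-\|y-x^\ast\|_2^2)/2\sigma^2\big)$, I then have $\|D_\sigma(y)-x^\ast\|_2\le N/Z$ with $N:=\int_\Sigma\|x-x^\ast\|_2\,\tilde w\,d\lambda_s$ and $Z:=\int_\Sigma\tilde w\,d\lambda_s$.

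The numerator is controlled by a global upper bound on the weights together with a truncation, which is where the logarithmic factor appears. The variational characterization of the projection onto a convex set gives $\langle y-x^\ast,\,x-x^\ast\rangle\le0$ for all $x\in\Sigma$, hence $\|y-x\|_2^2-\|y-x^\ast\|_2^2=\|x-x^\ast\|_2^2-2\langle y-x^\ast,x-x^\ast\rangle\ge\|x-x^\ast\|_2^2$, and therefore $\tilde w(x)\le e^{-\|x-x^\ast\|_2^2/2\sigma^2}$. Choosing a threshold $\rho:=\sigma\sqrt{2C_0\log(1/\sigma)}$ and splitting $N$ at $\|x-x^\ast\|_2=\rho$, the near part obeys $N_{\le\rho}\le\rho\,Z$, so its contribution to $N/Z$ is at most $\rho\lesssim\sigma\sqrt{\log(1/\sigma)}$; the far part is bounded, via the Gaussian tail and $\|x-x^\ast\|_2\le 2R_\Sigma$, by $N_{>\rho}\lesssim R_\Sigma\,\lambda_s(\Sigma)\,e^{-\rho^2/4\sigma^2}=R_\Sigma\,\lambda_s(\Sigma)\,\sigma^{C_0/2}$, which becomes negligible relative to $Z$ once $C_0$ is taken large. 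Squaring $N/Z\lesssim\rho$ produces the $\sigma^2\log(1/\sigma)$ scaling.

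The crux is the lower bound on the partition function, which must read $Z\gtrsim \kappa_{\Sigma,s}\,\sigma^s\big/\max\{1,\mathrm{dist}(y,\Sigma)\}^s$. The difficulty is that when $y\notin\Sigma$ the point $x^\ast$ lies on the relative boundary of $\Sigma$, so one cannot place a full intrinsic ball around $x^\ast$ inside $\Sigma$, and for a strictly convex body the directions keeping the weight $\tilde w$ of constant order form only a thin sliver: moving tangentially by $\epsilon$ forces an inward displacement governed by the curvature, which enters the exponent through the penalty $-2\langle y-x^\ast,x-x^\ast\rangle=2\,\mathrm{dist}(y,\Sigma)\,\delta$. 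I would handle this with an inner-width/cone argument: using that $\Sigma$ contains an intrinsic ball $B(x_0,\rho_{\mathrm{in}})$, convexity gives $\mathrm{conv}(\{x^\ast\}\cup B(x_0,\rho_{\mathrm{in}}))\subset\Sigma$, and I restrict the $Z$-integral to the portion of this cone near $x^\ast$. Estimating the $s$-dimensional volume of the sublevel set $\{x:\|y-x\|_2^2-\|y-x^\ast\|_2^2\le c\sigma^2\}$ inside this cone---tangential extent $\sim\sigma$ over $s-1$ directions and normal extent $\sim\sigma^2/\mathrm{dist}(y,\Sigma)$---yields exactly the factor $\kappa_{\Sigma,s}\,\sigma^s/\max\{1,\mathrm{dist}(y,\Sigma)\}^s$, with $\kappa_{\Sigma,s}$ packaging the Euclidean ball volume $\omega_s$ and the inner and outer widths of $\Sigma$. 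Establishing this volume estimate (the content of Lemma~\ref{lem:technical-volume-est}) is the main technical obstacle and the step where the set geometry genuinely enters.

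Combining the numerator truncation with this lower bound gives the pointwise estimate $\|D_\sigma(y)-P_\Sigma^\perp(y)\|_2^2\lesssim \frac{sR_\Sigma^2\max\{1,\mathrm{dist}(y,\Sigma)\}^s\lambda_s(\Sigma)}{\kappa_{\Sigma,s}}\,\sigma^2\log(1/\sigma)$ after collecting the dimension- and geometry-dependent constants. Finally, for the operator-norm statement I specialize to $\|y\|_2=1$: since $\Sigma$ contains the origin and $R_\Sigma=\sup_{z\in\Sigma}\|z\|_2$, one has $\mathrm{dist}(y,\Sigma)\le\|y\|_2+R_\Sigma\le 1+R_\Sigma$, so $\max\{1,\mathrm{dist}(y,\Sigma)\}^s$ is bounded by a constant depending only on $\Sigma$ and is absorbed into $C$. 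Taking square roots and the supremum over the unit sphere (using $\sigma_n<1$ so that $\log(1/\sigma_n)>0$) then yields the stated bound on $\|P^n-P_\Sigma^\perp\|_{\mathrm{op}}$.
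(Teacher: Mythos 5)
Your proposal follows essentially the same route as the paper: the same truncated-Gaussian representation of $D_\sigma(y)-x^*$, the same obtuse-angle inequality $\langle y-x^*,x-x^*\rangle\le 0$ giving $\|y-x\|_2^2-\|y-x^*\|_2^2\ge\|x-x^*\|_2^2$, the same truncation at radius $\sim\sigma\sqrt{\log(1/\sigma)}$, and a lower bound on the partition function via a convexity-based volume estimate near $x^*$ (the paper's Lemma~\ref{lem:technical-volume-est}, which uses an isotropic relative ball of radius $\varepsilon\sim\delta^2/\max\{1,\mathrm{dist}(y,\Sigma)\}$ around the scaled point $(1-\delta)x^*$ rather than your cone). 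The only substantive quibble is your claimed lower bound $Z\gtrsim\kappa_{\Sigma,s}\,\sigma^s/\max\{1,\mathrm{dist}(y,\Sigma)\}^s$: the anisotropic volume you describe (tangential extent $\sim\sigma$ in $s-1$ directions, normal extent $\sim\sigma^2/\mathrm{dist}$) would give $\sigma^{s+1}/\mathrm{dist}$, not $\sigma^s/\mathrm{dist}^s$, and moreover the cone geometry near the apex couples the tangential and normal extents, so neither scaling is immediate. This slip is harmless, since your argument only needs $Z$ to be bounded below by some fixed polynomial in $\sigma$ (so that a sufficiently large truncation constant $C_0$ kills the far part), which is exactly how the paper proceeds as well.
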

\begin{proof}
\textbf{Step 1: Connection between denoiser and truncated Gaussian:} We first show that the denoiser $D_{\sigma}(y)$ can be written as $P_{\Sigma}^\perp(y)=x^*=x^*(y)$ plus an additional term that is the conditional mean of a truncated Gaussian distribution. First, note that the posterior of $x \sim p$ given $y = x + \sigma z$ with $z \sim \mathcal{N}(0,I)$ is proportional to $\exp(-\|y-x\|^2_2/(2\sigma^2)\mathbf{1}_{\{x \in \Sigma\}}$ which gives \begin{align*}
    D_{\sigma}(y) = \mathbb{E}[x|y]= \frac{\int_\Sigma x \exp(-\|y-x\|_2^2/2\sigma^2)d\lambda_s(x)}{\int_{\Sigma}\exp(-\|y-u\|_2^2/2\sigma^2)d\lambda_s(u)}.
\end{align*} %Note that we have $\langle y - x^*, x - x^* \rangle \leq 0$ for all $x \in \Sigma$. Note that this implies \begin{align*}
    %\|x - y\|^2 & = \|x - x^* + x^* - y\|^2 = \|x-x^*\|^2 + \|y-x^*\|^2 - 2\langle x-x^*,y-x^*\rangle\\
    %& \geq \|x-x^*\|^2 + \|y-x^*\|^2.
%\end{align*}  %Observe that for any $x \in \Sigma$, we have that $r(x) \geq 0$ since in fact $r(x) = -2\langle x - x^*, y - x^*\rangle \geq 0$ for $x \in \Sigma$. Then we have $$\|x-y\|^2 - \|y-x^*\|^2 = r(x) + \|x-x^*\|^2.$$ 
%Set $$r(x):=\|x-y\|_2^2 - \|x-x^*\|_2^2 - \|y-x^*\|_2^2.$$ We will aim to write $D_{\sigma}(y)$ in terms of $x^*$. 
%Through algebraic manipulations and 

Using the substitutions $z = x-x^*$ and  $z = u-x^*$ with $\Sigma_* := \Sigma-x^*$, we have that 
\begin{align*}
    D_{\sigma}(y) = x^*+ \frac{\int_{\Sigma_*} z \exp(-\|y-z-x^*\|_2^2/2\sigma^2)d\lambda_s(z)}{\int_{\Sigma_*} \exp(-\|y-z-x^*)\|_2^2/2\sigma^2)d\lambda_s(z)}.
\end{align*}
%Factoring out $\exp(\|y-x^*\|^2/2\sigma^2)$ in the numerator and denominator, we get that \begin{align*}
 %   D_{\sigma}(y) & = \frac{\int_E x \exp(-\|y-x\|^2/2\sigma^2)dx}{\int_{E}\exp(-\|y-u\|^2/2\sigma^2)du} \\
 %   & = \frac{\int_E x \exp\left(-\frac{\|y-x\|^2 - \|y-x^*\|^2}{2\sigma^2}\right)dx}{\int_{E}\exp\left(-\frac{\|y-u\|^2-\|y-x^*\|^2}{2\sigma^2}\right)du} \\
 %   & =\frac{\int_E x \exp\left(-\frac{r(x) +\|x-x^*\|^2}{2\sigma^2}\right)dx}{\int_{E}\exp\left(-\frac{r(u) + \|u-x^*\|^2}{2\sigma^2}\right)du} \\
 %   & = \frac{\int_E x \exp\left(-\frac{r(x)}{2\sigma^2}\right)\exp\left(-\frac{\|x-x^*\|^2}{2\sigma^2}\right)dx}{\int_{E}\exp\left(-\frac{r(u)}{2\sigma^2}\right)\exp\left(-\frac{\|u-x^*\|^2}{2\sigma^2}\right)du}.
%\end{align*} Now, consider the substitution $z = x- x^*$ and define $\Sigma_* := E - x^*.$ Then $x = z + x^*$ and $dz = dx$. Then we have 
% \begin{align*}
%     D_{\sigma}(y) & = \frac{\int_{\Sigma_*} (z+x^*)\exp\left(-\frac{r(z+x^*)}{2\sigma^2}\right)\exp\left(-\frac{\|z\|_2^2}{2\sigma^2}\right)d\lambda_s(z)}{\int_{\Sigma_*}\exp\left(-\frac{r(z+x^*)}{2\sigma^2}\right)\exp\left(-\frac{\|z\|_2^2}{2\sigma^2}\right)d\lambda_s(z)} \\ 
%     & = x^* + \frac{\int_{\Sigma_*}z \exp\left(-\frac{r(z+x^*)}{2\sigma^2}\right)\exp\left(-\frac{\|z\|_2^2}{2\sigma^2}\right)d\lambda_s(z)}{\int_{\Sigma_*} \exp\left(-\frac{r(z+x^*)}{2\sigma^2}\right)\exp\left(-\frac{\|z\|_2^2}{2\sigma^2}\right)d\lambda_s(z)}.
% \end{align*} 
Thus, we get the following representation of the denoiser:
\begin{align*}
     D_{\sigma}(y) - x^* 
     %= \frac{\int_{\Sigma_*}z \exp\left(-\frac{r(z+x^*)}{2\sigma^2}\right)\exp\left(-\frac{\|z\|_2^2}{2\sigma^2}\right)d\lambda_s(z)}{\int_{\Sigma_*} \exp\left(-\frac{r(z+x^*)}{2\sigma^2}\right)\exp\left(-\frac{\|z\|_2^2}{2\sigma^2}\right)d\lambda_s(z)} 
     = \frac{\int_{\Sigma_*}z \exp\left(-\frac{\|z - (y-x^*)\|_2^2}{2\sigma^2}\right)d\lambda_s(z)}{\int_{\Sigma_*} \exp\left(-\frac{\|z - (y-x^*)\|_2^2}{2\sigma^2}\right)d\lambda_s(z)}.
 \end{align*}
 \color{black}

Hence this quantity has the interpretation as the mean of a degenerate, truncated Gaussian $\mathcal{N}(y-x^*,\sigma^2 I; \Sigma_*)$ which has measure $dp_{\sigma}(z):= \mathbf{1}_{\{z \in \Sigma_*\}} \exp(-\|z-(y-x^*)\|_2^2/2\sigma^2)d\lambda_s(z) / Z_{\sigma}$ with $Z_{\sigma} = \int_{\Sigma_*}\exp(-\|z-(y-x^*)\|_2^2/2\sigma^2)d\lambda_s(z)$, i.e., $D_{\sigma}(y) - x^*= \mathbb{E}_{z \sim \mathcal{N}(y-x^*,\sigma^2 I; \Sigma_*)}[z]$. The next step decomposes an upper bound of $ \|D_{\sigma}(y) - x^*\|_2^2$ depending on an arbitrary parameter $\delta > 0$.

\textbf{Step 2: General upper bound for $\delta > 0$:} Note that by Jensen's inequality, it suffices to bound \begin{align} \label{eq:bound_res1}
    \|D_{\sigma}(y) - x^*\|_2^2= \left\|\mathbb{E}_{z \sim \mathcal{N}(y-x^*,\sigma^2 I; \Sigma_*)}[z]\right\|_2^2 \leq \mathbb{E}_{z \sim \mathcal{N}(y-x^*,\sigma^2 I; \Sigma_*)}[\|z\|_2^2].
\end{align} Fix any $\delta \in (0,1)$ and consider the events $\mathbf{1}_{\|z\|_2>\delta}$ and $\mathbf{1}_{\|z\|_2 \leq \delta}$. Set $R:= \sup_{z \in \Sigma_*}\|z\|_2 < \infty$, which is finite since $\Sigma$ is compact. Then we note that \begin{align*}
    \mathbb{E}_{z \sim \mathcal{N}(y-x^*,\sigma^2 I; \Sigma_*)}[\|z\|_2^2]  & = \mathbb{E}_{z \sim \mathcal{N}(y-x^*,\sigma^2 I; \Sigma_*)}[\|z\|_2^2\mathbf{1}_{\|z\|_2>\delta}] + \mathbb{E}_{z \sim \mathcal{N}(y-x^*,\sigma^2 I; \Sigma_*)}[\|z\|_2^2\mathbf{1}_{\|z\|_2\leq\delta}] \\
    & \leq R^2 \mathbb{P}_{z \sim \mathcal{N}(y-x^*,\sigma^2 I; \Sigma_*)}(\|z\|_2 > \delta) + \delta^2.
\end{align*} We will estimate $\mathbb{P}(\|z\|_2>\delta).$ Observe that \begin{align*}
    \mathbb{P}_{z \sim \mathcal{N}(y-x^*,\sigma^2 I; \Sigma_*)}(\|z\|_2>\delta) & = \frac{\int_{z\in \Sigma_{*},\|z\|_2>\delta } \exp\left(-\frac{\|z - (y-x^*)\|_2^2}{2\sigma^2}\right)d\lambda_s(z)}{\int_{\Sigma_*} \exp\left(-\frac{\|z - (y-x^*)\|_2^2}{2\sigma^2}\right)d\lambda_s(z)}.
\end{align*} First, note that for $z \in \Sigma_*$, we have by convexity of $\Sigma_*$ that $\langle z, y-x^*\rangle \leq 0$. This implies $$\|z-(y-x^*)\|_2^2=\|z\|_2^2 + \|y-x^*\|_2^2 - 2\langle z,y-x^*\rangle \geq \|z\|_2^2 + \|y-x^*\|_2^2.$$ Hence we have the following bound on the numerator: 
\begin{align*}
    \int_{z\in \Sigma_{*},\|z\|_2>\delta  } \exp\left(-\frac{\|z - (y-x^*)\|_2^2}{2\sigma^2}\right)d\lambda_s(z) & \leq \int_{z\in \Sigma_{*},\|z\|_2>\delta  }\exp\left(- \frac{\delta^2 + \|y-x^*\|_2^2}{2\sigma^2}\right)d\lambda_s(z) \\
    &\leq \lambda_s(\Sigma_*)\exp\left(-\frac{\delta^2+\|y-x^*\|_2^2}{2\sigma^2}\right).
\end{align*} For the denominator, note that $0 \in \Sigma_*$ since $x^* \in \Sigma$ so $x^*-x^* =0\in \Sigma_*$. Consider a sufficiently small parameter $\varepsilon$ such that $\varepsilon < \frac{\delta^2}{4\max\{1,\|y-x^*\|_2\}}$. 
%In the set 
For $z \in B_{\mathrm{aff}(\Sigma)}(0,\varepsilon) \cap \Sigma_*$, we have that $\|z-(y-x^*)\| \leq \varepsilon+\|y-x^*\|$. Then we have that \begin{align*}
    \int_{\Sigma_*} \exp\left(-\frac{\|z-(y-x^*)\|_2^2}{2\sigma^2}\right)dz & \geq \int_{B_{\mathrm{aff}(\Sigma)}(0,\varepsilon)\cap \Sigma_*} \exp\left(-\frac{\varepsilon^2+2\varepsilon\|y-x^*\|_2+\|y-x^*\|_2^2}{2\sigma^2}\right)d\lambda_s(z) \\
    & = \exp\left(-\frac{\varepsilon^2+2\varepsilon\|y-x^*\|_2+\|y-x^*\|_2^2}{2\sigma^2}\right) \lambda_s(B_{\mathrm{aff}(\Sigma)}(0,\varepsilon)\cap \Sigma_*).
\end{align*} Combining both bounds, we have that \begin{align*}
    \mathbb{P}_{z \sim \mathcal{N}(y-x^*,\sigma^2 I; \Sigma_*)}(\|z\|_2>\delta) & \leq \frac{\lambda_s(\Sigma_*)}{\lambda_s(B_{\mathrm{aff}(\Sigma)}(0,\varepsilon) \cap \Sigma_*)}\exp\left(-\frac{\delta^2+\|y-x^*\|_2^2}{2\sigma^2}+\frac{\varepsilon^2+2\varepsilon\|y-x^*\|+\|y-x^*\|_2^2}{2\sigma^2}\right) \\
    & = \frac{\lambda_s(\Sigma_*)}{\lambda_s(B_{\mathrm{aff}(\Sigma)}(0,\varepsilon) \cap \Sigma_*)}\exp\left(\frac{-\delta^2+\varepsilon^2+2\varepsilon\|y-x^*\|_2}{2\sigma^2}\right) \\
    & \leq \frac{\lambda_s(\Sigma_*)}{\lambda_s(B_{\mathrm{aff}(\Sigma)}(0,\varepsilon) \cap \Sigma_*)}\exp\left(\frac{-\frac{\delta^2}{2}+\varepsilon^2}{2\sigma^2}\right)
\end{align*} where we used the fact that $\varepsilon < \delta^2/(4\|y-x^*\|_2)$ in the last inequality. 
%By our choice of $\varepsilon$
By setting $\varepsilon := \frac{\delta^2}{8\max\{1,\|y-x^*\|_2\}}$, we have that $-\delta^2/2+\varepsilon^2 < 0$. This implies that we have positive constants $C(\delta,y,x^*)$ and $c(\delta,y,x^*)$ that do not depend on $\sigma$ such that 
\begin{align*}
    \mathbb{P}_{z \sim \mathcal{N}(y-x^*,\sigma^2 I; \Sigma_*)}(\|z\|_2>\delta) \leq C(\delta,y,x^*) \exp\left(- \frac{c(\delta,y,x^*)}{2\sigma^2}\right).
\end{align*} Hence, for any fixed $\delta > 0$ we have that 
\begin{align} \label{eq:bound_exp1}
    \mathbb{E}_{z \sim \mathcal{N}(y-x^*,\sigma^2 I; \Sigma_*)}[\|z\|_2^2] \leq \delta^2 + R^2 C(\delta,y,x^*) \exp\left(- \frac{c(\delta,y,x^*)}{2\sigma^2}\right) 
\end{align} where $R := \sup_{z \in \Sigma_*}\|z\|_2$, $c(\delta) = \delta^2/2 - \varepsilon$ and $C(\delta):= \frac{\lambda_s(\Sigma_*)}{\lambda_s(B_{\mathrm{aff}(\Sigma)}(0,\varepsilon) \cap \Sigma_*)}$.
%with $\varepsilon := \frac{\delta^2}{8\max\{1,\|y-x^*\|_2\}}.$

\textbf{Step 3: Simplification for specific choice of $\delta(\sigma)$:} Since the above bound holds for any $\delta > 0$, we will choose a specific value to minimize the above quantity. First, note that since $x^* \in \Sigma$, then we have $\lambda_s(B_{\mathrm{aff}(\Sigma)}(0,\varepsilon)\cap\Sigma_*)=\lambda_s(B_{\mathrm{aff}(\Sigma)}(x^*,\varepsilon)\cap \Sigma))$. Note that by Lemma \ref{lem:technical-volume-est}, there exists an explicit $\kappa$, defined in \eqref{eq:kappa-def}, such that 
$$C(\delta,y,x^*) \leq \frac{\lambda_s(\Sigma_*)}{\kappa \varepsilon^s} = \frac{\lambda_s(\Sigma_*)8^s\max\{1,\|y-x^*\|_2\}^s}{\kappa \delta^{2s}} =: C_1\delta^{-2s}.$$ 
We also have that 
\begin{align} \label{eq:def_C1_convex_case}
        c(\delta,y,x^*) = \frac{\delta^2}{2} -\frac{\delta^2}{8\max\{1,\|y-x^*\|_2\}} \geq \frac{\delta^2}{4}
    \end{align} 
    so that, with~\eqref{eq:bound_res1} and \eqref{eq:bound_exp1}, we get the bound 
    \begin{align*}
        \|D_{\sigma}(y) - x^*\|_2^2 \leq \delta^2 + R^2C_1 \delta^{-2s} \exp\left(- \frac{\delta^2}{8\sigma^2}\right), \forall \delta > 0.
    \end{align*} Observe that the term on the right can be simplified as \begin{align*}
        (\delta^{2})^{-s}\exp\left(-\frac{\delta^2}{8\sigma^2}\right) &= \exp\left(-s\log(\delta^2) - \frac{\delta^2}{8\sigma^2}\right)
    \end{align*} Note that if we choose $$\delta^2 = 8 (\alpha + 2s)\sigma^2 \log (1/\sigma),\ \alpha > 0$$ we obtain \begin{align*}
        \exp\left(-s\log(\delta^2) - \frac{\delta^2}{8\sigma^2}\right) & = \exp\left[-s\left(\log 8 + \log (\alpha + 2s) + \log (\sigma^2) + \log\log(1/\sigma)\right)\right]\exp\left[-(\alpha + 2s)\log(1/\sigma)\right]\\
        & = 8^{-s}(\alpha+2s)^{-s}(\sigma^2)^{-s}[\log(1/\sigma)]^{-s}\cdot\sigma^{\alpha + 2s}\\
        & = (8(\alpha + 2s))^{-s}\sigma^{\alpha}[\log(1/\sigma)]^{-s}.
    \end{align*}

This gives the bound \begin{align*}
         \|D_{\sigma}(y) - x^*\|_2^2 & \leq 8 (\alpha + 2s)\sigma^2 \log (1/\sigma) +  R^2 C_1 (8(\alpha + 2s))^{-s}\sigma^{\alpha}[\log(1/\sigma)]^{-s}.
    \end{align*} Setting $\alpha = 2$ and using the definition of $C_1$ from~\eqref{eq:def_C1_convex_case} gives the following upper bound: \begin{align*}
        \|D_{\sigma}(y) - x^*(y)\|_2^2 &  \leq 8(2 + 2s)\left(\frac{s R^2 \lambda_s(\Sigma)\max\{1,\mathrm{dist}(y,\Sigma)\}^s}{\kappa}\right)\cdot\sigma^2 \log (1/\sigma).
    \end{align*} The final operator bound in the result uses the fact that for $\|x\|_2=1,$ we have that $\mathrm{dist}(x,\Sigma)=\min_{z \in \Sigma}\|x-z\|_2 \leq \|x-0\|_2=\|x\|_2=1$ since $0 \in \Sigma.$
\end{proof}

%\begin{lem} \label{lem:convex-denoiser-representation} Consider the setting of Claim \ref{claim:convex-projection}.
    
%\end{lem}

\begin{lem} \label{lem:technical-volume-est}
    Let $\Sigma \subset \R^d$ be a compact, convex set containing the origin with non-empty relative interior in $\mathrm{aff}(\Sigma)$ and set $s = \mathrm{dim}(\mathrm{aff}(\Sigma))$. Then there exists a constant $\kappa > 0$ that depends on $\Sigma$ such that for any $\varepsilon \in (0,1)$, \begin{align*}
        \lambda_s(B_{\mathrm{aff}(\Sigma)}(x, \varepsilon) \cap E) \geq \kappa \epsilon^s\ \forall x \in \partial_{\mathrm{aff}(\Sigma)} \Sigma.
    \end{align*} A specific value for $\kappa$ can be taken as follows: if $\kappa_s$ denotes the volume of the unit Euclidean ball in $\R^s$ and we set $r := \sup\{t >0 : B_{\mathrm{aff}(\Sigma)}(0,t) \subseteq E\}$ and $R := \inf\{t > 0 : \Sigma \subseteq B_{\mathrm{aff}(\Sigma)}(0,t)\}$, then \begin{align}
        \kappa := \kappa_s \cdot \left(\frac{\min\{r,1\}}{2\max\{R+r,1\}}\right)^s. \label{eq:kappa-def}
    \end{align}
\end{lem}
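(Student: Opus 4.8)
The plan is to prove the estimate by an elementary inscribed-cone argument, reducing everything to a single Euclidean volume computation. Since $0 \in \Sigma$, the affine hull $\mathrm{aff}(\Sigma)$ is actually an $s$-dimensional \emph{linear} subspace through the origin, so I would first identify it isometrically with $\R^s$, treating $\lambda_s$ as ordinary Lebesgue measure and each $B_{\mathrm{aff}(\Sigma)}(\cdot,\cdot)$ as an ordinary $s$-dimensional Euclidean ball. By the definitions of $r$ and $R$ one has $B_{\mathrm{aff}(\Sigma)}(0,r) \subseteq \Sigma \subseteq B_{\mathrm{aff}(\Sigma)}(0,R)$, so in particular $\|x\|_2 \leq R$ for every $x \in \Sigma$. (Here $r>0$ is exactly the inscribed-ball content of the relative-interior hypothesis, applied at the origin; this is the only place that hypothesis enters.) The key is then to exhibit a full Euclidean ball sitting inside $B_{\mathrm{aff}(\Sigma)}(x,\varepsilon)\cap\Sigma$ whose radius is proportional to $\varepsilon$, uniformly over $x \in \partial_{\mathrm{aff}(\Sigma)}\Sigma$.

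The core construction is a cone with apex $x$ over the inscribed base $B_{\mathrm{aff}(\Sigma)}(0,r)$. Concretely, for each $t\in[0,1]$ and each $z \in B_{\mathrm{aff}(\Sigma)}(0,r)$ the convex combination $(1-t)x + tz$ lies in $\Sigma$, so for fixed $t$ the whole scaled-translated ball $(1-t)x + t\,B_{\mathrm{aff}(\Sigma)}(0,r) = B_{\mathrm{aff}(\Sigma)}((1-t)x,\,tr)$ is contained in $\Sigma$. This ball has radius $tr$ and center at distance $t\|x\|_2$ from $x$, so each of its points lies within distance $t(\|x\|_2+r)$ of $x$. Choosing the level $t := \min\{1,\ \varepsilon/(\|x\|_2+r)\}$ forces $t(\|x\|_2+r)\leq\varepsilon$, whence $B_{\mathrm{aff}(\Sigma)}((1-t)x,tr) \subseteq B_{\mathrm{aff}(\Sigma)}(x,\varepsilon)\cap\Sigma$, and monotonicity of $\lambda_s$ gives $\lambda_s\!\left(B_{\mathrm{aff}(\Sigma)}(x,\varepsilon)\cap\Sigma\right)\geq \kappa_s (tr)^s$.

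It remains to bound the fitted radius $tr$ below by a fixed multiple of $\varepsilon$, which is the only delicate point, since the definition of $t$ splits into two regimes. When $t=\varepsilon/(\|x\|_2+r)$ I would use $\|x\|_2\leq R$ and $\max\{R+r,1\}\geq R+r$ to get $tr \geq \varepsilon r/(R+r) \geq \frac{\min\{r,1\}}{2\max\{R+r,1\}}\varepsilon$; when $t=1$ (the case $\varepsilon\geq\|x\|_2+r$) I would use $\varepsilon<1$ and $\max\{R+r,1\}\geq1$, $\min\{r,1\}\leq r$ to get $tr=r\geq\frac{r}{2}\geq\frac{\min\{r,1\}}{2\max\{R+r,1\}}\varepsilon$. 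A short case check confirms the bound $tr\geq\frac{\min\{r,1\}}{2\max\{R+r,1\}}\varepsilon$ holds in both regimes simultaneously; the factor $2$ and the truncations at $1$ are precisely what is needed to absorb small $r$, small $R+r$, and $r>1$ under a single constant. Raising to the $s$-th power and multiplying by $\kappa_s$ then yields $\lambda_s\!\left(B_{\mathrm{aff}(\Sigma)}(x,\varepsilon)\cap\Sigma\right)\geq\kappa\,\varepsilon^s$ with $\kappa$ as in \eqref{eq:kappa-def}. I expect this final reconciliation of the two regimes into one clean constant — rather than the geometry — to be the main bookkeeping obstacle, as the cone construction itself is immediate from convexity.
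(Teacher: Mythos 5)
Your proposal is correct and takes essentially the same approach as the paper: both arguments inscribe a small ball inside $B_{\mathrm{aff}(\Sigma)}(x,\varepsilon)\cap\Sigma$ by taking a convex combination of the boundary point $x$ with the inscribed ball $B_{\mathrm{aff}(\Sigma)}(0,r)$, then compute its $\lambda_s$-volume. The only difference is bookkeeping — the paper fixes the shrinkage parameter $\delta=\varepsilon/(2\max\{R+r,1\})$ and ball radius $\delta\min\{r,1\}$ up front, whereas you choose $t=\min\{1,\varepsilon/(\|x\|_2+r)\}$ and reconcile the two regimes afterward — and both share the same implicit assumption that the origin lies in the relative interior so that $r>0$.
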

\begin{proof}
    Fix $\varepsilon \in (0,1)$ and $x \in \partial_{\mathrm{aff}(\Sigma)} \Sigma$. Since $\Sigma$ is compact, convex with non-empty (relative) interior in $\mathrm{aff}(\Sigma)$, there exists $0 < r < R < \infty$ such that $B_{\mathrm{aff}(\Sigma)}(0,r) \subseteq \Sigma \subseteq B_{\mathrm{aff}(\Sigma)}(0,R)$. Now, consider $$\delta :=  \frac{\varepsilon}{2\max\{R + r,1\}}.$$ Note that by construction we have that $\delta \leq \varepsilon / 2 < 1/2$. Set $x_{\delta} := (1-\delta)x \in E$ and consider the ball $B_{\mathrm{aff}(\Sigma)}(x_{\delta}, \delta \min\{r,1\})$. We claim that $B_{\mathrm{aff}(\Sigma)}(x_{\delta},\delta \min\{r,1\}) \subseteq B_{\mathrm{aff}(\Sigma)}(x,\varepsilon) \cap \Sigma$. Indeed, let $y \in B_{\mathrm{aff}(\Sigma)}(x_{\delta},\delta \min\{r,1\})$. To see that $y \in E$, note that we can write $$y = (1-\delta)x + \delta z\ \text{where}\ z \in B_{\mathrm{aff}(\Sigma)}(0,\min\{r,1\}) \subseteq B_{\mathrm{aff}(\Sigma)}(0,r) \subseteq \Sigma.$$ Since $z,x \in \Sigma$ and $\Sigma$ is convex, we have that the linear combination $y = (1-\delta)x + \delta z \in \Sigma$. Moreover, we have that $y \in B_{\mathrm{aff}(\Sigma)}(x,\varepsilon)$ since $y \in \mathrm{aff}(\Sigma)$ and \begin{align*}
        \|y - x\|_2 & \leq \|y - x_{\delta}\|_2 + \|x_{\delta} - x\|_2 \\
        & \leq \delta \min\{r,1\} +\delta\|x\|_2 \\
        & \leq \delta(1+R) \\
        & \leq \frac{\varepsilon}{2}\cdot2 =\varepsilon.
    \end{align*} Hence we have that $B_{\mathrm{aff}(\Sigma)}(x_{\delta},\delta \min\{r,1\}) \subseteq B_{\mathrm{aff}(\Sigma)}(x,\varepsilon) \cap \Sigma$. Let $\kappa_s := \mathrm{vol}(\{z \in \R^s : \|z\|_2 < 1\})$ be the $s$-dimensional volume of the unit Euclidean ball in $\R^s$. Using our above result, we have that \begin{align*}
        \lambda_s(B_{\mathrm{aff}(\Sigma)}(x,\varepsilon) \cap \Sigma) & \geq \lambda_s(B_{\mathrm{aff}(\Sigma)}(x_{\delta},\delta \min\{r,1\})) \\
        & = \mathrm{vol}(\{z \in \R^s : \|z\|_2 \leq\delta \min\{r,1\}\}) \\
        & = \kappa_s \cdot \delta^s \cdot \min\{r,1\}^s \\
        & = \kappa_s \cdot \left(\frac{\min\{r,1\}}{2\max\{R+r,1\}}\right)^s\varepsilon^s \\
        & = : \kappa \varepsilon^s.
    \end{align*}
\end{proof}

We now give a short proof of the corresponding convergence guarantee when the data distribution is uniform on a convex set. 
\begin{proof}[Proof of Corollary \ref{cor:convex-convergence}]
By Proposition \ref{prop:uniform-convex}, we have that the iteration-dependent operator $P^n$ converges uniformly to the metric projection $P^{\perp}_{\Sigma}$ at the rate $\|P^n - P^{\perp}_{\Sigma}\|_{\mathrm{op}} \lesssim \sigma_n \sqrt{\log 1/\sigma_n}.$ Since $\beta = \beta_{\Sigma}(P^{\perp}_{\Sigma}) = 1$ and $\delta = \delta(\mu A^TA)< 1$, the assumptions of Theorem \ref{th:conv_var_proj} are satisfied and we can apply to the operator norm bound to \eqref{eq:general-iterates-bound-GPGD-VP} to get \begin{align*}
    \|x_n -\hat{x}\|_2 \leq C \left(\delta^{n/2} + \sigma_n \sqrt{\log (1/\sigma_n)}\right).
\end{align*}    
\end{proof}

\section{Proofs for Section \ref{sec:LR-GMM-theory}}
In the following, to make notations lighter, we use $t= t_n =\sigma_n^2$.

\subsection{Computation of limiting projection for LR-GMM}
For LR-GMM, we look at the limiting projection of 
\begin{equation} \label{eq:expr_P_n}
\begin{split}
    P^n(x)& \;= \frac{1}{1+t}
\frac{\sum_{k=1}^K 
       \pi_k\,\mathcal{N}\!\bigl(x;\,0,\;U_kU_k^T + tI\bigr)\,
       U_kU_k^T\,x}{
      \sum_{\ell=1}^K 
       \pi_\ell\,\mathcal{N}\!\bigl(x;\,0,\;U_\ell U_\ell^T + tI\bigr)} \\
       & =  \frac{1}{1+t} \sum_{k=1}^K \omega_k(x,t) P_{E_k}^\perp(x)
       \end{split} \nonumber
\end{equation}
where  $ P_{E_k}^\perp = U_kU_k^T $ and where we defined  
\begin{equation}
\begin{split}
   \omega_k(x,t)  & := \frac{
       \pi_k\,\mathcal{N}\!\bigl(x;\,0,\;U_kU_k^T + tI\bigr)}{
      \sum_{\ell=1}^K 
       \pi_\ell\,\mathcal{N}\!\bigl(x;\,0,\;U_\ell U_\ell^T + tI\bigr)}. \\
       \end{split} \nonumber
\end{equation}

Note that these quantities are not obviously defined at $t=0$  because writing $ \mathcal{N}(x;0,U_{\ell}U_{\ell}^T)$ makes the assumption that the probability measure associated with a low-dimensional Gaussian has a density in $\mathbb{R}^d$, but this is not the case. 

 However, the quantities $ \omega_k(x,t)$ have a limit in $t=0$. 
\begin{lem}\label{lem:limit_omega}
Consider a low-rank GMM supported on $K$ subspaces $(E_\ell)_{\ell=1}^K$. We have for all $x \in \mathbb{R}^d$
\begin{equation}
\lim_{t \to 0} \omega_k(x,t) =
\begin{cases}
1 & \text{if } \forall \ell \neq k, \|P_{E_k}^{\perp}(x)\|_2^2 > \|P_{E_\ell}^{\perp}(x)\|_2^2, \\
0 & \text{if } \exists \ell \neq k, \|P_{E_k}^{\perp}(x)\|_2^2 < \|P_{E_\ell}^{\perp}(x)\|_2^2.
\end{cases} \nonumber
\end{equation}

Now assume that $\dim (E_k) = r$ for all $k$. Given $x \in \mathbb{R}^d$, define  $L = \{ \ell :\|P_{E_\ell}^{\perp}(x)\|_2^2 = \|P_{\Sigma}^{\perp}(x)\|_2^2 \}$. Then
we have 
\begin{equation}
\lim_{t \to 0} \omega_k(x,t) =
\begin{cases}
\frac{\pi_k}{\sum_{\ell \in L} \pi_\ell} & \text{if } k \in L, \\
0 & \text{if } k \in L^c.
\end{cases} \nonumber
\end{equation}

where $L^c = \{1,\ldots,K\} \setminus L$.

We consequently define $\omega_k(x,0):= \lim_{t\to 0}\omega_k(x,t) $.
\end{lem}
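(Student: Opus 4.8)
The plan is to compute the Gaussian density $\mathcal{N}(x;0,U_kU_k^T+tI)$ explicitly from the spectral decomposition of its covariance, substitute it into the definition of $\omega_k(x,t)$ so that all the $k$-independent factors cancel and only a single subspace-dependent term survives in the exponent, and then read off the two limits by comparing exponential growth rates. The whole argument is essentially one algebraic reduction followed by an elementary limit.

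First I would diagonalize $\Theta_k := U_kU_k^T + tI$. Since $U_k$ has orthonormal columns, $P_{E_k}^\perp = U_kU_k^T$ is the orthogonal projector onto $E_k$, so $\Theta_k$ acts as multiplication by $1+t$ on $E_k$ (dimension $r_k$) and by $t$ on $E_k^\perp$ (dimension $d-r_k$). Hence $\det\Theta_k = (1+t)^{r_k}t^{d-r_k}$ and $\Theta_k^{-1} = (1+t)^{-1}P_{E_k}^\perp + t^{-1}(I-P_{E_k}^\perp)$. Writing $a_k := \|P_{E_k}^\perp(x)\|_2^2$ and using $\|(I-P_{E_k}^\perp)x\|_2^2 = \|x\|_2^2 - a_k$, the quadratic form becomes $x^\top\Theta_k^{-1}x = \tfrac{a_k}{1+t} + \tfrac{\|x\|_2^2 - a_k}{t}$, which I would rearrange into $\tfrac{\|x\|_2^2}{t} - \tfrac{a_k}{t(1+t)}$.

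Next I substitute into $\omega_k$. The prefactor $(2\pi)^{d/2}$ and the common exponential $\exp(-\|x\|_2^2/(2t))$ cancel between numerator and denominator, and after factoring out $t^{-d/2}$ the weight reduces to the key identity
\[
\omega_k(x,t) = \frac{\pi_k\,(1+t)^{-r_k/2}\,t^{\,r_k/2}\,\exp\!\big(\tfrac{a_k}{2t(1+t)}\big)}{\sum_{\ell}\pi_\ell\,(1+t)^{-r_\ell/2}\,t^{\,r_\ell/2}\,\exp\!\big(\tfrac{a_\ell}{2t(1+t)}\big)}.
\]
For the first claim (arbitrary subspace dimensions), if $a_k > a_\ell$ for all $\ell\neq k$ I divide top and bottom by the $k$-th summand; each competing term then carries a factor $t^{(r_\ell-r_k)/2}\exp\!\big(\tfrac{a_\ell-a_k}{2t(1+t)}\big)$, which tends to $0$ as $t\to 0^+$, so $\omega_k\to 1$. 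Symmetrically, if some $a_\ell > a_k$, dividing by a maximizing summand makes the $k$-th numerator vanish while the denominator stays $\geq 1$, giving $\omega_k\to 0$. For the second claim I use that all $r_k=r$, so the powers $t^{r/2}$ and the factors $(1+t)^{-r/2}$ cancel identically; dividing by $\exp\!\big(\tfrac{a^*}{2t(1+t)}\big)$ with $a^* = \|P_\Sigma^\perp(x)\|_2^2 = \max_\ell a_\ell$ and $L=\{\ell: a_\ell=a^*\}$ leaves $\pi_\ell$ for each $\ell\in L$ and a vanishing exponential for each $\ell\notin L$, yielding $\omega_k\to \pi_k/\sum_{\ell\in L}\pi_\ell$ for $k\in L$ and $0$ otherwise.

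The only genuine subtlety — and the step I would write out most carefully — is the dominance argument in the unequal-dimension case: the polynomial prefactor $t^{(r_\ell-r_k)/2}$ can blow up when $r_\ell < r_k$, so I must justify that the strictly negative exponent $\tfrac{a_\ell-a_k}{2t(1+t)}\to-\infty$ overwhelms any power of $t$. This follows from the elementary fact that $t^{-p}\exp(-c/t)\to 0$ as $t\to 0^+$ for all $p,c>0$; everything else is bookkeeping. I would also observe that although the limit is taken along the discrete schedule $t_n=\sigma_n^2$, the computation depends only on $t\to 0^+$ and so is identical in the continuous case.
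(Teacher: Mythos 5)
Your proposal is correct and follows essentially the same route as the paper: diagonalize $U_kU_k^T+tI$ to get $\det = (1+t)^{r_k}t^{d-r_k}$ and the quadratic form $\tfrac{1}{1+t}\|P_{E_k}^\perp x\|_2^2+\tfrac{1}{t}\|x-P_{E_k}^\perp x\|_2^2$, cancel the common factors, and conclude by comparing the exponents $\tfrac{a_\ell-a_k}{2t(1+t)}$ against the polynomial prefactors $t^{(r_\ell-r_k)/2}$ (the paper organizes this via the ratios $\nu_\ell/\nu_k$ and $\omega_k=(1+\sum_{\ell\neq k}\nu_\ell/\nu_k)^{-1}$, which is the same computation). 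Your explicit remark that $t^{-p}e^{-c/t}\to 0$ handles the only delicate point, which the paper treats more tersely.
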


\begin{proof}
We have, for $x \in \mathbb{R}^d$ and $t > 0$, 
\begin{equation}
\begin{split}
\omega_k(x,t)&:= \frac{\nu_k(x,t)}{\sum_{\ell=1}^K \nu_l(x,t)} \\
\end{split} \nonumber
\end{equation}
with 
\begin{equation}
\nu_k(x,t) =  \frac{\pi_k}{(2\pi)^{d/2}\det^{1/2}(U_kU_k^T + tI)} \cdot \exp\left(-\frac{1}{2}x^T(U_kU_k^T + tI)^{-1}
x\right) . \nonumber
\end{equation} 
Let $r_k = \dim (E_k)$. We have $\det(U_kU_k^T + tI) =(1+t)^{r_k}t^{d-r_k}.$

Let $x= u +u^\perp$ with $u  =P_{E_k}^\perp(x)$. Consider $V_k$ an orthonormal completion of $U_k$. We have 
\begin{equation}
 x^T(U_kU_k^T + t I)^{-1}x =  x^T (V_k D_tV_k^T)^{-1}x \nonumber
\end{equation}
where $D_t$ is a diagonal matrix where $D_t(i,i) = (1+t) $ for $i \leq r_k$ and $D_t(i,i) = t $ for $i> r_k$. We deduce that (as $V_k$ is orthogonal)
\begin{equation}
 \begin{split}
 x^T(U_kU_k^T + t I)^{-1}x &=  x^T V_k^T D_t ^{-1}V_kx= \frac{1}{1+t}u^TU_kU_k^Tu +  \frac{1}{t}(u^\perp)^TV_k^TV_ku^\perp\\
 & =\frac{1}{1+t}\|u\|_2^2 +  \frac{1}{t}\|u^\perp\|_2^2.
 \end{split} \nonumber
\end{equation}

We have 
 \begin{equation}
\omega_k(x,t) =  \frac{1}{1 + \sum_{\ell\neq k} \frac{\nu_\ell(x,t)}{\nu_k(x,t)}} \nonumber
\end{equation}
and, using the Pythagorean identity $\|x\|_2^2 = \|P_{E_k}^\perp(x) \|_2^2+ \| P_{E_k}^\perp(x) -x\|_2^2$,

\begin{equation} \label{eq:convergence_score}
\begin{split}
\frac{\nu_\ell(x,t)}{\nu_k(x,t)}=&  \frac{\pi_\ell\sqrt{(1+t)^{r_k}t^{d-r_k}}}{\pi_k\sqrt{(1+t)^{r_\ell}t^{d-r_\ell}}} \\
&\times \exp\left(-\frac{1}{2}(\frac{1}{1+t}\|P_{E_\ell}^{\perp}(x)\|_2^2+\frac{1}{t}\|P_{E_\ell}^{\perp}(x)-x\|_2^2 - \frac{1}{1+t}\|P_{E_k}^{\perp}(x)\|_2^2-\frac{1}{t}\|P_{E_k}^{\perp}(x)-x\|_2^2) \right) \\
=& \frac{\pi_\ell\sqrt{(1+t)^{r_k}t^{d-r_k}}}{\pi_k\sqrt{(1+t)^{r_\ell}t^{d-r_\ell}}}  \exp\left(-\frac{1}{2}(\frac{1}{t} - \frac{1}{1+t})(\|P_{E_\ell}^{\perp}(x)-x\|_2^2 -\|P_{E_k}^{\perp}(x)-x\|_2^2) \right) \\
&= \frac{\pi_\ell\sqrt{(1+t)^{r_k}t^{d-r_k}}}{\pi_k\sqrt{(1+t)^{r_\ell}t^{d-r_\ell}}}  \exp\left(-\frac{1}{2}(\frac{1}{t(1+t)})(\|P_{E_\ell}^{\perp}(x)-x\|_2^2 -\|P_{E_k}^{\perp}(x)-x\|_2^2) \right) \\
\end{split}
\end{equation}

We deduce, as it is the ratio of (potentially) exponentially decreasing function over the square root of a  polynomial function.

\begin{equation}
\lim_{t \to 0} \frac{\nu_\ell(x,t)}{\nu_k(x,t)} =
\begin{cases}
0 & \text{if } \|P_{E_\ell}^{\perp}(x)-x\|_2^2 > \|P_{E_k}^{\perp}(x)-x\|_2^2, \\
\infty & \text{if } \|P_{E_\ell}^{\perp}(x)-x\|_2^2 < \|P_{E_k}^{\perp}(x)-x\|_2^2.
\end{cases} \nonumber
\end{equation}
Using again the fact that $ \|P_{E_k}^{\perp}(x)-x\|_2^2 = \|x\|_2^2 -  \|P_{E_k}^{\perp}(x)\|_2^2$,  we have equivalently
\begin{equation}\label{eq:conv_proj1}
\lim_{t \to 0} \frac{\nu_\ell(x,t)}{\nu_k(x,t)} =
\begin{cases}
0 & \text{if } \|P_{E_\ell}^{\perp}(x)\|_2^2 < \|P_{E_k}^{\perp}(x)\|_2^2, \\
\infty & \text{if } \|P_{E_\ell}^{\perp}(x)\|_2^2 > \|P_{E_k}^{\perp}(x)\|_2^2.
\end{cases}
\end{equation}
We deduce
\begin{equation}
\lim_{t \to 0} \omega_k(x,t) =
\begin{cases}
1 & \text{if } \forall \ell \neq k, \|P_{E_k}^{\perp}(x)\|_2^2 > \|P_{E_\ell}^{\perp}(x)\|_2^2, \\
0 & \text{if } \exists \ell \neq k, \|P_{E_k}^{\perp}(x)\|_2^2 < \|P_{E_\ell}^{\perp}(x)\|_2^2.
\end{cases} \nonumber
\end{equation}

Now suppose $r_\ell = r$. Let $k,\ell \in L$, we have 
\begin{equation}
\begin{split}
\frac{\nu_\ell(x,t)}{\nu_k(x,t)} 
= \frac{\pi_\ell\sqrt{(1+t)^{r_k}t^{d-r_k}}}{\pi_k\sqrt{(1+t)^{r_\ell}t^{d-r_\ell}}} = \frac{\pi_\ell}{\pi_k}\\
\end{split} \nonumber
\end{equation}
and 
\begin{equation}
\begin{split}
\lim_{t\to 0} \omega_k(x,t)= \frac{1}{ \sum_{\ell \in L} \pi_\ell/\pi_k }=\frac{\pi_k}{\sum_{\ell \in L} \pi_\ell}. \\
\end{split} \nonumber
\end{equation}
Let $k\in L^c$. It implies that there exists $\ell \in L$ such that $\|P_{E_k}^{\perp}(x)\|_2^2 < \|P_{E_\ell}^{\perp}(x)\|_2^2 $ and, as in~\eqref{eq:conv_proj1}, $\lim_{t\to 0} \frac{\nu_\ell(x,t)}{\nu_k(x,t)} =\infty $, and $\lim_{t\to 0}  \omega_k(x,t) = 0$.

\end{proof}

%Let us define the quantity $$R(x,t) := \frac{1}{1+t} \frac{\sum_{k=1}^K \nu_k(t)U_k(U_k)^Tx}{Z(t)}.$$ 
Based on our understanding of $\omega(x,t)$, we can show the convergence to the limiting projection $P_\Sigma^\perp$.

\begin{proof}[Proof of Lemma~\ref{lem:limit_proj_GMM}]
Note that as $t_n\rightarrow 0$, $P^{n}(x) = \frac{1}{1+t_n} \sum_{\ell=1}^K \omega_\ell(x,t_n)P_{E_\ell}^\perp(x)  \to_{n\to\infty}  \sum_{\ell=1}^K \omega_\ell(x,0)P_{E_\ell}^\perp(x) $. If $x$ is strictly closer to  one  subspace $E_k$ (with respect to the orthogonal projection), the previous Lemma shows that $\lim_{n\rightarrow\infty} P^{n}(x) = P_{E_k}^\perp(x)$. This is exactly the definition of the orthogonal projection onto the union-of-subspaces $ \Sigma =\bigcup_\ell E_\ell$, i.e. $ P_{E_k}^\perp(x)= P_\Sigma^\perp(x)$

When the dimensions of all subspaces are equal and $x$ is equidistant to subspaces $(E_\ell)_{\ell\in L}$, by Lemma~\ref{lem:limit_omega}, we have that 
\begin{equation}
\lim_{n\to\infty}  P^{n}(x) =\tilde{P}_\Sigma(x) := \sum_{k\in L}\frac{\pi_k}{\sum_{\ell\in L} \pi_\ell} P_{E_k}^\perp(x). \nonumber
\end{equation} 

\end{proof}

Note that this is a point-wise convergence result. Uniform convergence (for the operator norm) is not verified in general. Indeed, on the frontier between subspaces, $\tilde{P}_\Sigma(x)$ averages the projection over the different subspaces instead of selecting a subspace as $P_\Sigma^\perp$ does.

\subsection{Convergence for the LR-GMM}

We will first prove a local convergence result when iterates lie in a neighborhood of $\hat{x}$ away from the frontier between subspaces, and then obtain global convergence  by showing that the iterates escape the frontier.

\subsubsection{Local convergence} \label{sec:local-convergence-appx}
To discuss our local convergence result, we will focus on regions that lie away from the frontier $F$, which will be defined shortly. For $x \in \mathbb{R}^d$, we denote by $k_x$ an index such that $P_{E_{k_x}}^\perp(x) \in P_\Sigma^\perp(x)$. We define the frontier between subspaces $F$ at $x$ to be 
\begin{align}
    F := \left\{x : \exists \ell \neq k_x,\ \|P_{E_\ell}^{\perp}(x)-x\|_2^2 = \|P_{E_{k_x}}^{\perp}(x)-x\|_2^2\right\} = \left\{x : \#P_\Sigma^\perp(x)> 1 \right\}. \nonumber
\end{align} 
where $\# P_\Sigma^\perp(x)$ denotes the number of elements in $P_\Sigma^\perp(x)$.
When $k_x$ is unique, we define the set $\Omega_{\eta}$ to be \begin{align*}
    \Omega_{\eta} &:=\left\{x : \left| \|P_{E_\ell}^{\perp}(x)-x\|_2^2 - \|P_{E_{k_x}}^{\perp}(x)-x\|_2^2\right| \geq \eta\ \text{for all}\ \ell \neq k_x\right\}\\
    &= \left\{x : \left| \|P_{E_\ell}^{\perp}(x)\|_2^2 - \|P_\Sigma^{\perp}(x)\|_2^2\right| \geq \eta\ \text{for all}\ \ell \neq k_x\right\} .
\end{align*} 
This a set of points that lie away from the frontier $F$: when $\eta=0$, $\Omega_{\eta}$ is the whole space, while when $\eta$ increases, elements $x$ of $\Omega_{\eta}$ are closer to $E_{k_x}$ than other subspaces $E_{\ell}$.
A visual representation of $\Omega_\eta$ and $F$ can be found in  Figure~\ref{fig:omega}.
We  bound the rate of convergence of the operator norm of $P^n-P_\Sigma^\perp$ restricted to $\Omega_{\eta}$ denoted by $\|P^n-P_\Sigma^\perp\|_{\mathrm{op},\Omega_\eta}$.

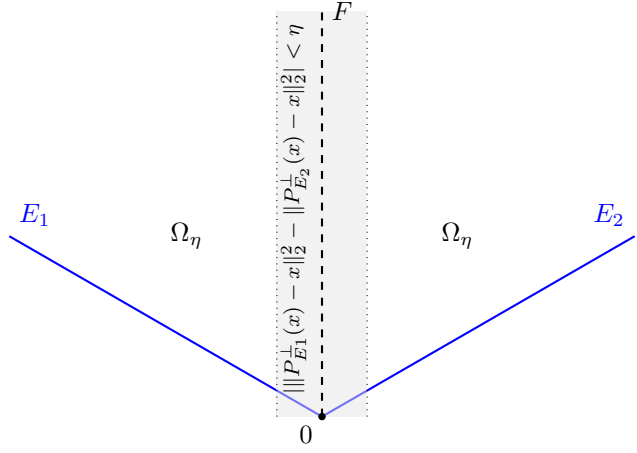
\begin{figure}[!h] \label{fig:omega}
\begin{center}
\begin{tikzpicture}[scale=1.2]
 %subspaces
  \draw[blue, thick, rotate around={30:(0,0)}] (0,0) -- (4,0) node[anchor=south east] {\(E_2\)};
  \draw[blue, thick, rotate around={-30:(0,0)}] (-4,0)  node[anchor=south west] {\(E_1\)} -- (0,0);

  % $\Omega_\eta^c$
  \fill[gray!20, opacity=0.5] (-0.5,0) rectangle (0.5,4.5);
  \draw[dotted] (-0.5,0) -- (-0.5,4.5);
  \draw[dotted] (0.5,0) -- (0.5,4.5);
  \node[rotate=90,font=\footnotesize] at (-0.3,2.3) {\(|\|P_{E1}^{\perp}(x)-x\|_2^2 - \|P_{E_{2}}^{\perp}(x)-x\|_2^2|< \eta\)};

  % F
  \draw[black, thick, dashed] (0,0) -- (0,4.5) node[anchor=west] {\(F\)};
  
  %  Omega_eta 
  \node at (-1.5,2) {\(\Omega_\eta\)};
  \node at (1.5,2) {\(\Omega_\eta\)};
  
  % Origin
  \filldraw (0,0) circle (1pt) node[below left] {$0$};
\end{tikzpicture}
\end{center}
\caption{A representation of the set $\Omega_\eta$. Note that boundary between $\Omega_\eta$ and $\Omega_\eta^c$ might not be linear.}
\end{figure}

\begin{lem} \label{lem:bound_op_norm1}
Consider $\Sigma = \bigcup_{\ell=1}^K E_\ell$ with $\dim(E_\ell) =r$ for all $\ell$. Consider $P^n$ defined by~\eqref{def:Pn_GMM}. Let $\eta >0$ and consider $x \in \Omega_{\eta}$. Then we have 
\begin{equation}
\begin{split}
&\frac{\|P^n(x) -P_\Sigma^\perp(x)\|_2}{\|x\|_2} \leq  \|P^n-P_\Sigma^\perp\|_{\mathrm{op},\Omega_\eta} \leq 2  \sum_{\ell=1, \ell \neq k_x}^K \frac{\pi_\ell}{\pi_{k_x}}  \exp\left(-\frac{1}{2}\frac{\eta}{t_n(1+t_n)}\right) + t_n. \\
\end{split} \nonumber
\end{equation} As a consequence, for $t_n = \sigma_n^2$, \begin{equation}
\begin{split}
 \|P^n-P_\Sigma^\perp\|_{\mathrm{op},\Omega_\eta} &= O (e^{-\frac{\eta}{4\sigma_n^2} } +\sigma_n^2  ) .
\end{split} \nonumber
\end{equation}
\end{lem}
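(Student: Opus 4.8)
The plan is to reduce the operator-norm bound on $\Omega_\eta$ to a pointwise estimate that is then supremized, exploiting the fact that on $\Omega_\eta$ the metric projection collapses onto a single subspace. First I would observe that for $\eta>0$ and $x\in\Omega_\eta$ the nearest index $k_x$ is unique, so $P_\Sigma^\perp(x)=P_{E_{k_x}}^\perp(x)$; hence it suffices to control $\|P^n(x)-P_{E_{k_x}}^\perp(x)\|_2/\|x\|_2$ and take a supremum over $x\in\Omega_\eta$ to recover $\|P^n-P_\Sigma^\perp\|_{\mathrm{op},\Omega_\eta}$ (the first inequality in the statement being immediate from the definition of the restricted operator norm).

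The core is an algebraic decomposition. Using $\sum_{k}\omega_k(x,t_n)=1$ and isolating the $k_x$ term in $P^n(x)=\tfrac{1}{1+t_n}\sum_k\omega_k P_{E_k}^\perp(x)$, I would rearrange to obtain
\[
P^n(x)-P_{E_{k_x}}^\perp(x)=\frac{1}{1+t_n}\Big[\sum_{\ell\neq k_x}\omega_\ell\big(P_{E_\ell}^\perp(x)-P_{E_{k_x}}^\perp(x)\big)-t_n\,P_{E_{k_x}}^\perp(x)\Big].
\]
Taking norms, using $\tfrac{1}{1+t_n}\le 1$, the triangle inequality, and the contractivity $\|P_{E_k}^\perp(x)\|_2\le\|x\|_2$ (so that $\|P_{E_\ell}^\perp(x)-P_{E_{k_x}}^\perp(x)\|_2\le 2\|x\|_2$), yields the clean intermediate bound
\[
\frac{\|P^n(x)-P_{E_{k_x}}^\perp(x)\|_2}{\|x\|_2}\le 2\sum_{\ell\neq k_x}\omega_\ell(x,t_n)+t_n .
\]

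It then remains to bound each off-diagonal weight. I would use $\omega_\ell\le \nu_\ell/\nu_{k_x}$ together with the explicit ratio already computed in \eqref{eq:convergence_score}: since all subspaces share the common dimension $r$, the determinant prefactor $\sqrt{(1+t)^{r_{k_x}}t^{\,d-r_{k_x}}}/\sqrt{(1+t)^{r_\ell}t^{\,d-r_\ell}}$ reduces exactly to $\pi_\ell/\pi_{k_x}$, leaving
\[
\omega_\ell(x,t_n)\le \frac{\pi_\ell}{\pi_{k_x}}\exp\!\Big(-\tfrac{1}{2}\tfrac{1}{t_n(1+t_n)}\big(\|P_{E_\ell}^\perp(x)-x\|_2^2-\|P_{E_{k_x}}^\perp(x)-x\|_2^2\big)\Big).
\]
On $\Omega_\eta$ the exponent gap is at least $\eta$ (because $k_x$ is the nearest subspace, the absolute value in the definition of $\Omega_\eta$ equals the signed, nonnegative difference), which gives the claimed bound $2\sum_{\ell\neq k_x}\tfrac{\pi_\ell}{\pi_{k_x}}\exp(-\tfrac{1}{2}\tfrac{\eta}{t_n(1+t_n)})+t_n$. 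For the asymptotic consequence I would use $t_n=\sigma_n^2\le 1\Rightarrow t_n(1+t_n)\le 2\sigma_n^2$, so the exponential is $\le e^{-\eta/(4\sigma_n^2)}$, and absorb the finite constant $2\max_k\sum_{\ell\neq k}\pi_\ell/\pi_k$ into the $O(\cdot)$.

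The main obstacle — and the one place the hypotheses are truly used — is the cancellation of that polynomial/determinant prefactor in the weight ratio, which is exactly $1$ only because $r_\ell=r$ for every $\ell$; with unequal dimensions a surviving factor $t^{(r_{k_x}-r_\ell)/2}$ could blow up as $t_n\to0$ and destroy uniformity, so the equal-dimension assumption is essential. A secondary point to handle carefully is that the constant $\sum_{\ell\neq k_x}\pi_\ell/\pi_{k_x}$ depends on $x$ through $k_x$; since the operator norm is a supremum over $x\in\Omega_\eta$, I would replace it by the $x$-independent quantity $\max_k\sum_{\ell\neq k}\pi_\ell/\pi_k$ to obtain a genuine uniform bound.
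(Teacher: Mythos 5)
Your proposal is correct and follows essentially the same route as the paper's proof: isolate the unique nearest subspace $E_{k_x}$ on $\Omega_\eta$, bound the off-diagonal weights via $\omega_\ell \leq \nu_\ell/\nu_{k_x} \leq \tfrac{\pi_\ell}{\pi_{k_x}}\exp(-\tfrac{\eta}{2t(1+t)})$ using the equal-dimension cancellation in \eqref{eq:convergence_score}, and collect a factor $2\sum_{\ell\neq k_x}\omega_\ell + t_n$ from the decomposition of $P^n(x)-P_{E_{k_x}}^\perp(x)$ (your rearrangement via $1-\omega_{k_x}=\sum_{\ell\neq k_x}\omega_\ell$ is an algebraically equivalent reorganization of the paper's term-by-term bound). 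Your remark that the $x$-dependent constant $\sum_{\ell\neq k_x}\pi_\ell/\pi_{k_x}$ should be replaced by $\max_k\sum_{\ell\neq k}\pi_\ell/\pi_k$ to state a genuine uniform operator bound is a small but valid refinement of the paper's statement.
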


\begin{proof}
 Consider $t=t_n \in [0,1]$. Let $x \in \Omega_\eta$ and consider the unique $k \in \{1,\ldots,K\}$ such that $P_\Sigma(x) = P_{E_k}^\perp(x)$. We have, for $\ell \neq k$, $\|P_{E_\ell}^{\perp}(x)-x\|_2^2 -\|P_{E_k}^{\perp}(x)-x\|_2^2 \geq \eta$ and, using \eqref{eq:convergence_score},
\begin{equation} \label{eq:bound_op_norm1_eq1}
\begin{split}
\frac{\nu_\ell(x,t)}{\nu_k(x,t)} &= \frac{\pi_\ell}{\pi_k} \exp\left(-\frac{1}{2}\frac{1}{t(1+t)}(\|P_{E_\ell}^{\perp}(x)-x\|_2^2 -\|P_{E_k}^{\perp}(x)-x\|_2^2) \right) \\
& \leq \frac{\pi_\ell}{\pi_k} \exp\left(-\frac{1}{2}\frac{\eta}{t(1+t)} \right). \\
\end{split}
\end{equation}
 Using the expression of $P^n$ from \eqref{eq:expr_P_n}, with the triangle inequality, the fact that $\|P_{E_\ell}^\perp(x)\|_2 \leq \|x\|_2$ for any $x$ and $\frac{1}{1+t} \leq 1$, we deduce that
\begin{equation}
\begin{split}
\|P^n(x) -P_\Sigma^\perp(x)\|_2 &= \big\|(\frac{1}{1+t}\omega_k(x,t)-1) P_{E_k}^\perp(x) + \frac{1}{1+t}\sum_{\ell=1;\ell \neq k}^K \omega_\ell(x,t) P_{E_\ell}^\perp(x)\big\|_2\\  
&\leq \|x\|_2\left(\left|\frac{1}{1+t}\omega_k(x,t)-1\right| + \frac{1}{1+t}\sum_{\ell=1; \ell \neq k}^K \omega_\ell(x,t)\right) \\
&\leq  \|x\|_2 \left(\left|\frac{\omega_k(x,t)-1 }{1+t}\right| + \frac{t}{1+t} + \frac{1}{1+t}\sum_{\ell=1 ;\ell \neq k}^K \omega_\ell(x,t)\right) \\
&\leq  \|x\|_2 \left(\left|\omega_k(x,t)-1 \right| + t + \sum_{\ell=1; \ell \neq k}^K \omega_\ell(x,t)\right). \\
\end{split} \nonumber
\end{equation}
We have, with~\eqref{eq:bound_op_norm1_eq1},
\begin{equation}
\begin{split}
|\omega_k(x,t)-1|  &= \left| \frac{\nu_k(x,t)}{\sum_{\ell=1 }^K\nu_\ell(x,t) } -1\right|\\
&=\left|\frac{\sum_{ \ell=1 ;\ell \neq k}^K\nu_\ell(x,t) }{\sum_{\ell=1 }^K\nu_\ell(x,t)}\right| \leq\sum_{\ell=1 ;\ell \neq k}^K \frac{\nu_\ell(x,t)}{\nu_k(x,t)} \leq   \sum_{\ell=1 ;\ell \neq k}^K \frac{\pi_\ell}{\pi_k}  \exp\left(-\frac{1}{2}\frac{\eta}{t(1+t)} \right).
\end{split} \nonumber
\end{equation}

For $\ell \neq k$, we have 
\begin{equation}
\begin{split}
\omega_\ell(x,t)  &=  \frac{\nu_\ell(x,t)}{\sum_{\ell'=1 }^K\nu_{\ell'}(x,t) }\leq \frac{\nu_\ell(x,t)}{\nu_{k}(x,t) } \leq \frac{\pi_\ell}{\pi_k}  \exp\left(-\frac{1}{2}\frac{\eta}{t(1+t)} \right)  .\\
\end{split} \nonumber
\end{equation}

We deduce overall 
\begin{equation}
\begin{split}
\frac{\|P^n(x) -P_\Sigma^\perp(x)\|_2}{\|x\|_2} &\leq 2  \sum_{\ell=1,;0\ell \neq k}^K \frac{\pi_\ell}{\pi_k}  \exp\left(-\frac{1}{2}\frac{\eta}{t(1+t)}\right)   + t.  \\
\end{split} \nonumber
\end{equation}

\end{proof}
Using this bound, we establish local convergence guarantees: there exists a neighborhood around $\hat{x}$ such that any iterate entering this region remains within it and converges at the following rate.

\begin{lem} \label{lem:stability_local}
Consider the hypotheses of Lemma~\ref{lem:bound_op_norm1}. Suppose $\delta \beta < 1$ with $\delta =\delta(\mu A^T  A), \beta = \beta_\Sigma(P_\Sigma^\perp)$. Let $C>0$ and $x \in B(\hat{x},C) $.  Suppose $B(\hat{x},C) \subset \Omega_{\eta}$.  Then there is $n_0$ such that for $n \geq n_0$, defining $z = P^n(x) - \mu A^TA  (P^n(x) -\hat{x})$, we have $z \in B(\hat{x},C)$ .
\end{lem}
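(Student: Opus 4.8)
The plan is to show directly that the one-step update map keeps the distance to $\hat{x}$ below $C$ by combining a contraction coming from the restricted isometry and restricted Lipschitz properties with the fact that, on $\Omega_\eta$, the approximate projection $P^n$ is uniformly (exponentially) close to the true metric projection $P_\Sigma^\perp$. In effect this is the per-step version of the bound already derived in Lemma~\ref{lem:bound_iterates_gen2}, now localized to points of $\Omega_\eta$.

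First I would rewrite the update in error form. Since $y = A\hat{x}$, the definition of $z$ gives
\begin{equation}
z - \hat{x} = P^n(x) - \hat{x} - \mu A^TA\,(P^n(x) - \hat{x}) = (I - \mu A^TA)(P^n(x) - \hat{x}). \nonumber
\end{equation}
I would then split $P^n(x) - \hat{x} = (P_\Sigma^\perp(x) - \hat{x}) + (P^n(x) - P_\Sigma^\perp(x))$ and apply the triangle inequality. Here it is used that $x \in \Omega_\eta$ with $\eta > 0$, so the closest index $k_x$ is unique and $P_\Sigma^\perp(x) = P_{E_{k_x}}^\perp(x)$ is single-valued.

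For the first (\emph{ideal projection}) term, both $P_\Sigma^\perp(x)$ and the model point $\hat{x} \in \Sigma$ lie in $\Sigma$, so $P_\Sigma^\perp(x) - \hat{x}$ is a secant and the restricted isometry constant of $B = \mu A^TA$ (Definition~\ref{def:RIC}) gives $\|(I - \mu A^TA)(P_\Sigma^\perp(x) - \hat{x})\|_2 \le \delta\,\|P_\Sigma^\perp(x) - \hat{x}\|_2$. Chaining with the restricted $\beta$-Lipschitz property of $P_\Sigma^\perp$ (Definition~\ref{def:lip_const}, applied at the point $x$, the model point $\hat{x}$, and the output $P_\Sigma^\perp(x)$) yields $\|P_\Sigma^\perp(x) - \hat{x}\|_2 \le \beta\|x - \hat{x}\|_2 \le \beta C$, so this term is at most $\delta\beta C$. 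For the second (\emph{approximation error}) term I would use $\|(I - \mu A^TA)(P^n(x) - P_\Sigma^\perp(x))\|_2 \le \|I - \mu A^TA\|_{\mathrm{op}}\,\|P^n(x) - P_\Sigma^\perp(x)\|_2$, and then invoke Lemma~\ref{lem:bound_op_norm1}: since $x \in B(\hat{x},C) \subset \Omega_\eta$,
\begin{equation}
\|P^n(x) - P_\Sigma^\perp(x)\|_2 \le \|x\|_2\,\|P^n - P_\Sigma^\perp\|_{\mathrm{op},\Omega_\eta} \le (\|\hat{x}\|_2 + C)\,\varepsilon_n, \nonumber
\end{equation}
where $\varepsilon_n = O(e^{-\eta/(4\sigma_n^2)} + \sigma_n^2) \to 0$ as $n \to \infty$.

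Combining the two estimates gives $\|z - \hat{x}\|_2 \le \delta\beta C + \|I - \mu A^TA\|_{\mathrm{op}}(\|\hat{x}\|_2 + C)\,\varepsilon_n$. The crucial structural point is that $\delta\beta < 1$ leaves a strictly positive margin $(1 - \delta\beta)C$ between the contracted radius $\delta\beta C$ and the target radius $C$; since $\varepsilon_n \to 0$ and does not depend on the particular $x$, one can pick $n_0$ so that $\|I - \mu A^TA\|_{\mathrm{op}}(\|\hat{x}\|_2 + C)\,\varepsilon_n \le (1 - \delta\beta)C$ for all $n \ge n_0$, which forces $\|z - \hat{x}\|_2 \le C$, i.e. $z \in B(\hat{x},C)$. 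The one point I would be careful about — and the only place requiring a remark rather than a hard estimate — is that the bound of Lemma~\ref{lem:bound_op_norm1} is uniform over the whole ball: it depends on $x$ only through the factor $\|x\|_2$, which is controlled by $\|\hat{x}\|_2 + C$, so a single $n_0$ works for every starting point in $B(\hat{x},C)$. There is no genuinely difficult inequality here, as the exponential separation of subspaces away from the frontier was already absorbed into Lemma~\ref{lem:bound_op_norm1}.
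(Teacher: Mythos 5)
Your proposal is correct and follows essentially the same route as the paper: the same decomposition of $z-\hat{x}$ into a contracted secant term (bounded by $\delta\beta C$ via the RIC and restricted Lipschitz property) plus an approximation-error term controlled uniformly on $\Omega_\eta$ by Lemma~\ref{lem:bound_op_norm1}, concluded by the margin $(1-\delta\beta)C>0$ and $\sigma_n\to 0$. The only difference is presentational: the paper invokes the per-step bound \eqref{eq:first_bound_gen2} from Lemma~\ref{lem:bound_iterates_gen2} directly, whereas you re-derive it inline, and you make explicit the (correct) observation that the bound is uniform over the ball since it depends on $x$ only through $\|x\|_2\le\|\hat{x}\|_2+C$.
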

\begin{proof}
We have
\begin{equation} 
\begin{split}
 \|z-\hat{x}\|_2 &\leq  \delta \beta  \| x -\hat{x}  \|_2 + \|I-\mu A^TA\|_{\mathrm{op}} \|P^n(x) -P_\Sigma^\perp(x)) \|_2 \\
 &\leq  \delta \beta  C + \|I-\mu A^TA\|_{\mathrm{op}} \left( 2C'  \sum_{\ell=1, \ell \neq K}^K \frac{\pi_\ell}{\pi_k}  \exp\left(-\frac{1}{2}\frac{c}{t(1+t)}\right)   + C'\frac{t}{1+t}  \right)\\
 \end{split} \nonumber
\end{equation}
Hence, as $\delta\beta <1$, there is $t = \sigma_n^2$ sufficiently small such that $\|z - \hat{x}\|_2 \leq C$.  
\end{proof}
We have just guaranteed the stability of iterates in $ B(\hat{x},C)\subset \Omega_\eta$. Using the previous results, we can now provide a convergence rate if the iterates fall in a basin of attraction of $\hat{x}$. Note that we require $\hat{x}$ to belong to only one subspace $E_k \subset \Sigma$. In practice this is not a problem since the set of intersections of subspaces is of measure $0$ (being of dimension strictly lower than $r = \dim (E_k) $, $k\in \{1,\ldots,K \}$).
\begin{thm}\label{th:local_convergence}
Let $\hat{x} \in \Sigma$ such that there exists a unique $k\in \{1,\ldots,K \}$, such that $\hat{x} \in E_k$. Suppose $\sigma_n^2$ decreases to $0$. There exists $c,C,C_1,C_2$, $n_0$ such that, if there exists $n_1 \geq n_0$  such that  $x_{n_1} \in B(\hat{x},C) $, then, for all $n  \geq 0$
\begin{align*}
\|x_{n_1+n}-\hat{x}\|_2 \leq\ C_1(\sqrt{\delta \beta})^n & \|x_{n_1}-\hat{x}\|_2  + C_2 ( e^{-\frac{c}{\sigma_{n/2}^2} } +\sigma_{n/2}^2  ).
\end{align*}
\end{thm}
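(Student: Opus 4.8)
The plan is to show that $\hat{x}$ sits at the centre of a ball that is simultaneously separated from the frontier and forward-invariant under the dynamics, and then to run the contraction estimate from Lemma~\ref{lem:bound_iterates_gen2} fed by the in-region projection-error decay of Lemma~\ref{lem:bound_op_norm1}.

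First (geometric setup), since $\hat{x}$ lies in a unique $E_k$ it is strictly closer to $E_k$ than to any other subspace: $\|P^\perp_{E_k}(\hat{x})\|_2^2-\|P^\perp_{E_\ell}(\hat{x})\|_2^2=\|\hat{x}\|_2^2-\|P^\perp_{E_\ell}(\hat{x})\|_2^2>0$ for every $\ell\neq k$. The maps $x\mapsto\|P^\perp_{E_\ell}(x)\|_2$ are continuous, so on a small enough ball $B(\hat{x},C)$ the nearest-subspace index stays equal to $k$ and the gap $\min_{\ell\neq k}\big|\|P^\perp_{E_\ell}(x)\|_2^2-\|P^\perp_\Sigma(x)\|_2^2\big|$ remains bounded below by some $\eta>0$. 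This produces $\eta,C$ with $B(\hat{x},C)\subset\Omega_\eta$, exactly the hypothesis required by the in-region lemmas, and fixes $c=\eta/4$.

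Next (forward invariance and bounded iterates), by Lemma~\ref{lem:stability_local} there is $n_0$ such that for $n\ge n_0$ the single step $x\mapsto z=P^n(x)-\mu A^TA(P^n(x)-\hat{x})$ maps $B(\hat{x},C)$ into itself. Since the iteration \eqref{eq:main-iterations} is precisely $x_{n+1}=P^n(x_n)-\mu A^TA(P^n(x_n)-\hat{x})$ (using $y=A\hat{x}$), an induction from $x_{n_1}\in B(\hat{x},C)$ with $n_1\ge n_0$ gives $x_{n_1+n}\in B(\hat{x},C)\subset\Omega_\eta$ for all $n\ge 0$, hence $\|x_{n_1+n}\|_2\le\|\hat{x}\|_2+C$. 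Combined with Lemma~\ref{lem:bound_op_norm1}, the projection error along the trajectory obeys $u_{n_1+l}:=\|P^{n_1+l}(x_{n_1+l})-P^\perp_\Sigma(x_{n_1+l})\|_2\le(\|\hat{x}\|_2+C)\,O\big(e^{-\eta/(4\sigma_{n_1+l}^2)}+\sigma_{n_1+l}^2\big)$, which is decreasing in $l$ because $\sigma_n$ is. Then (contraction and half-window split) the one-step estimate in the proof of Lemma~\ref{lem:bound_iterates_gen2} holds verbatim since $P^\perp_\Sigma$ is restricted $\beta$-Lipschitz and $\delta=\delta(\mu A^TA)$; unrolling from base point $x_{n_1}$ gives $\|x_{n_1+n}-\hat{x}\|_2\le(\delta\beta)^n\|x_{n_1}-\hat{x}\|_2+\|I-\mu A^TA\|_{\mathrm{op}}\sum_{l=0}^{n-1}(\delta\beta)^{n-1-l}u_{n_1+l}$. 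Splitting the convolution sum at $l=\lfloor n/2\rfloor$ as in the proof of Theorem~\ref{th:conv_var_proj}, the early indices contribute a geometrically small term of order $(\sqrt{\delta\beta})^n$ (absorbed into the geometric part), while the late indices are controlled by $(1-\delta\beta)^{-1}\max_{l\ge\lfloor n/2\rfloor}u_{n_1+l}$; monotonicity of $\sigma_n$ collapses this maximum to its value at $l=\lfloor n/2\rfloor$, and $\sigma_{n_1+\lfloor n/2\rfloor}\le\sigma_{\lfloor n/2\rfloor}$ yields the advertised $e^{-c/\sigma_{n/2}^2}+\sigma_{n/2}^2$. Collecting terms into $C_1(\sqrt{\delta\beta})^n\|x_{n_1}-\hat{x}\|_2$ and $C_2(e^{-c/\sigma_{n/2}^2}+\sigma_{n/2}^2)$ gives the claim.

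I expect the conceptual crux to be the first two steps: producing a neighbourhood of $\hat{x}$ that is at once separated from the frontier and forward-invariant, so that the otherwise non-uniform bound of Lemma~\ref{lem:bound_op_norm1} can be applied uniformly along the entire tail of the trajectory. Lemma~\ref{lem:stability_local} does the heavy lifting for invariance, leaving the remaining work in the last step as routine but slightly delicate bookkeeping: the half-window split and the monotonicity argument that converts the per-step $\sigma_{n_1+\lfloor n/2\rfloor}$ into the stated $\sigma_{n/2}$.
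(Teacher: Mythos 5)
Your proposal is correct and follows essentially the same route as the paper: shrink $C$ so that $B(\hat{x},C)\subset\Omega_\eta$, use Lemma~\ref{lem:stability_local} plus induction for forward invariance and boundedness, then feed the $\Omega_\eta$-restricted projection-error bound of Lemma~\ref{lem:bound_op_norm1} into the contraction estimate with the half-window split (which the paper obtains by citing Theorem~\ref{th:conv_var_proj} rather than re-deriving it from Lemma~\ref{lem:bound_iterates_gen2}). Your explicit continuity argument for $B(\hat{x},C)\subset\Omega_\eta$ and the index bookkeeping $\sigma_{n_1+\lfloor n/2\rfloor}\le\sigma_{\lfloor n/2\rfloor}$ are slightly more detailed than the paper's terse treatment, but the substance is identical.
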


\begin{proof}
As $\hat{x} \neq 0$, taking $C$ sufficiently small, we have $B(\hat{x},C) \subset \Omega_{\eta}$. With Lemma~\ref{lem:stability_local}, there is $n_0$ such that for $n_1 \geq n_0$,  we have $x_{n_1+1}\in B(\hat{x},C)$. By induction, for any $n \geq 0$, $x_{n_1+n}\in B(\hat{x},C) \subset \Omega_\eta$, we deduce that $x_n$ is bounded, and with Theorem~\ref{th:conv_var_proj} and the bound on $\|P^{l} -P_\Sigma\|_{\mathrm{op},\Omega_\eta}$ from Lemma~\ref{lem:bound_op_norm1}, we have that there exists $C'$,
\begin{equation}
\begin{split}
\|x_{n_1+n}-\hat{x}\|_2 &\leq C'( (\sqrt{\delta \beta})^n \|x_{n_1}-\hat{x}\|_2  +  \max_{l=\lfloor n/2\rfloor,n}\|P^{l}(x_l) -P_\Sigma(x_l) \|_2 ) \\
&\leq  C'(\sqrt{\delta \beta})^n \|x_{n_1}-\hat{x}\|_2 + O( e^{-\frac{c}{\sigma_{n/2}^2} } +\sigma_{n/2}^2  ) .
\end{split} \nonumber
\end{equation}
\end{proof}

\subsection{Global convergence}
In this Section we show global convergence of $(x_n)$. We guarantee that after a given ``burn-in'' time period, the iterates $x_n$ converge linearly to $\hat{x} \in \Sigma$ (with the right noise schedule)  under the condition that $\delta\beta <1$. The main technical difficulty of the proof is when $x_n$ is potentially close to the frontiers between subspaces $F_L = \{x: \|P_\Sigma(x)\|_2 = \|P_{E_\ell}^\perp(x)\|_2, \ell \in L \}$, $L\subset \{1,\ldots,K\}$ (see Figure~\ref{fig:distance-to-subspaces}). Note that we define $L^c := \{1,\ldots,K\} \setminus L$. In this case, we use the fact that the iterations pull the iterates away from this frontier. 

\begin{lem} \label{lem:bound_iterates_F} 
Consider $\Sigma =\bigcup_{\ell=1}^K E_\ell$ and $P^n$ defined in~\eqref{def:Pn_GMM}. Let $\delta:= \delta(\mu A^TA)$. Let  $\beta = \beta_\Sigma(P_\Sigma^\perp)$. Let $L\subset \{1,\ldots,K\}$ and $v \in F_L$. Then we have 
\begin{equation}
\begin{split}
    \|x_{n+1} -\hat{x}\|_2 
     &\leq \delta \beta  \| x_n-\hat{x}\|_2  + \|v\|_2  \delta \left(\sum_{\ell \in L^c } \omega_{\ell}(x_n,t_n) \right)+ \delta(1  +\beta) \| x_n-v\|_2 +\delta t_n\|\hat{x}\|_2.\\
\end{split} \nonumber
\end{equation}
\end{lem}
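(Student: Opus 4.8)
The plan is to start from the one-step identity and reduce the whole bound to the restricted isometry estimate applied to differences of points of $\Sigma$, after decomposing $P^n(x_n)$ according to the weighted-projection form \eqref{def:Pn_GMM} and treating the indices in $L$ and in $L^c:=\{1,\dots,K\}\setminus L$ separately. Since $\hat{x}\in\Sigma$ gives $A\hat{x}=y$, the iteration \eqref{eq:main-iterations} yields $x_{n+1}-\hat{x}=(I-\mu A^TA)(P^n(x_n)-\hat{x})$. The key structural fact is that $I-\mu A^TA=-(\mu A^TA-I)$ contracts any secant of $\Sigma$ by the factor $\delta=\delta(\mu A^TA)$ (Definition \ref{def:RIC}): $\|(I-\mu A^TA)(a-b)\|_2\le\delta\|a-b\|_2$ for $a,b\in\Sigma$, and because $0\in\Sigma$ also $\|(I-\mu A^TA)a\|_2\le\delta\|a\|_2$ for $a\in\Sigma$. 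So everything reduces to writing $P^n(x_n)-\hat{x}$ as a weighted combination of differences of $\Sigma$-points.

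Writing $t=\sigma_n^2$, $\omega_\ell=\omega_\ell(x_n,t)$, $W_L=\sum_{\ell\in L}\omega_\ell$ and $W_{L^c}=1-W_L$, and using $\sum_\ell\omega_\ell=1$, I would split the sum along $L$ and $L^c$ while keeping the shrinkage factor explicit:
\[
P^n(x_n)-\hat{x}=\frac{1}{1+t}\sum_{\ell\in L}\omega_\ell\bigl(P^\perp_{E_\ell}(x_n)-\hat{x}\bigr)+\frac{1}{1+t}\sum_{\ell\in L^c}\omega_\ell P^\perp_{E_\ell}(x_n)+\Bigl(\tfrac{W_L}{1+t}-1\Bigr)\hat{x}.
\]
Here the $L$ terms are anchored at $\hat{x}$, the $L^c$ terms at $0$, and the last term collects the shrinkage. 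Every $P^\perp_{E_\ell}(x_n)$, $\hat{x}$ and $0$ lie in $\Sigma$, so the contraction estimate applies to each piece after a triangle inequality. The residual coefficient obeys $|\tfrac{W_L}{1+t}-1|=\tfrac{W_{L^c}+t}{1+t}\le W_{L^c}+t$, which yields the noise term $\delta t\|\hat{x}\|_2$ (the leftover $\delta W_{L^c}\|\hat{x}\|_2$ is of the same order as the $L^c$ contribution below).

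The central estimate is for the $L$ terms, and this is where $v\in F_L$ enters. For $\ell\in L$, the point $P^\perp_{E_\ell}(v)$ is a closest point of $\Sigma$ to $v$, i.e.\ $P^\perp_{E_\ell}(v)\in P^\perp_\Sigma(v)$. Hence the restricted $\beta$-Lipschitz property of $P^\perp_\Sigma$ (Definition \ref{def:lip_const}, with $z=v$ and $x=\hat{x}$) gives $\|P^\perp_{E_\ell}(v)-\hat{x}\|_2\le\beta\|v-\hat{x}\|_2$. Combining with the $1$-Lipschitzness of the linear projection $P^\perp_{E_\ell}$ and $\|v-\hat{x}\|_2\le\|x_n-v\|_2+\|x_n-\hat{x}\|_2$ produces the chain
\[
\|P^\perp_{E_\ell}(x_n)-\hat{x}\|_2\le\|x_n-v\|_2+\beta\|v-\hat{x}\|_2\le\beta\|x_n-\hat{x}\|_2+(1+\beta)\|x_n-v\|_2 .
\]
Summing against $\omega_\ell/(1+t)$ over $\ell\in L$ and using $W_L\le1$ gives exactly the terms $\delta\beta\|x_n-\hat{x}\|_2$ and $\delta(1+\beta)\|x_n-v\|_2$.

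For the $L^c$ terms I would anchor at the origin and use contractivity of the projection together with the frontier geometry: $\|P^\perp_{E_\ell}(x_n)\|_2\le\|P^\perp_{E_\ell}(v)\|_2+\|x_n-v\|_2\le\|v\|_2+\|x_n-v\|_2$, which after summation yields the term $\delta\|v\|_2\sum_{\ell\in L^c}\omega_\ell$ (the residual $\|x_n-v\|_2\sum_{\ell\in L^c}\omega_\ell$ being absorbed into the $\|x_n-v\|_2$ budget). I expect the handling of $L^c$ to be the main obstacle, and it is the reason for carrying $v$ rather than working directly with $P^\perp_\Sigma(x_n)$: the limiting operator $\tilde{P}_\Sigma$ is discontinuous across the frontier (Lemma \ref{lem:limit_proj_GMM}), so no uniform control of $\|P^n-P^\perp_\Sigma\|_{\mathrm{op}}$ is available near $F_L$. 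Instead one must exploit that the weights $\omega_\ell$ for $\ell\in L^c$ are the genuinely small quantities, decaying like the exponential factor of Lemma \ref{lem:bound_op_norm1}, so that the aggregated far-subspace error $\|v\|_2\sum_{\ell\in L^c}\omega_\ell$ is both controllable and carries the repulsive effect that ultimately pushes the iterate away from $F_L$. Collecting the four contributions gives the stated bound.
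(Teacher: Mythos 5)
Your overall strategy mirrors the paper's: the same one-step identity $x_{n+1}-\hat{x}=(I-\mu A^TA)(P^n(x_n)-\hat{x})$, the same RIC application to secants of $\Sigma$, the same use of the restricted $\beta$-Lipschitz property at $v\in F_L$ for $\ell\in L$ (via $P^\perp_{E_\ell}(v)\in P^\perp_\Sigma(v)$), and the same source for the $\delta t_n\|\hat{x}\|_2$ term. However, there is a genuine gap in how you distribute the $\hat{x}$ anchor across the $L$/$L^c$ split. Your decomposition leaves the residual $\bigl(\tfrac{W_L}{1+t}-1\bigr)\hat{x}$, whose coefficient you correctly bound by $W_{L^c}+t$; after applying the RIC this produces an extra term $\delta\,\bigl(\sum_{\ell\in L^c}\omega_\ell\bigr)\|\hat{x}\|_2$ that does not appear in the statement. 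Your dismissal of it as ``of the same order as the $L^c$ contribution'' is not a valid absorption: the stated bound charges $W_{L^c}$ against $\|v\|_2$, not $\|\hat{x}\|_2$, and these are unrelated quantities --- the frontier sets $F_L$ are cones containing the origin, so one may take $v$ with $\|v\|_2$ arbitrarily small (or $v=0$) while $L^c\neq\emptyset$ and $\|\hat{x}\|_2$ is fixed and positive. Attempting to absorb $\|\hat{x}\|_2\le\|x_n-\hat{x}\|_2+\|x_n-v\|_2+\|v\|_2$ recovers the $\delta\beta$ and $\delta(1+\beta)$ coefficients (using $\beta\ge1$) but doubles the $\|v\|_2$ coefficient to $2\delta W_{L^c}$. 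So as written you prove a valid but strictly weaker inequality, not the stated one (though the weaker version would still suffice for the downstream global convergence argument, where the $W_{L^c}$ factor is killed exponentially by Lemma~\ref{lem:bound_tail_omega}).

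The paper avoids this by subtracting $\hat{x}$ from \emph{every} weighted term, i.e.\ writing $P^n(x_n)-\hat{x}$ through $\sum_\ell\tfrac{\omega_\ell}{1+t}\bigl(P^\perp_{E_\ell}(v)-(1+t)\hat{x}\bigr)$ plus the displacement $\sum_\ell\tfrac{\omega_\ell}{1+t}P^\perp_{E_\ell}(x_n-v)$, and then, for $\ell\in L^c$, using
$\|P^\perp_{E_\ell}(v)-\hat{x}\|_2-\beta\|v-\hat{x}\|_2\le\|(I-P^\perp_{E_\ell})v\|_2+(1-\beta)\|v-\hat{x}\|_2\le\|v\|_2$
(valid since $\beta\ge1$). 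This way the $\beta\|v-\hat{x}\|_2$ contributions re-aggregate with total weight one across all of $\{1,\ldots,K\}$, and only the excess $\|v\|_2$ is charged against $\sum_{\ell\in L^c}\omega_\ell$; no stray $\|\hat{x}\|_2\cdot W_{L^c}$ term is created. If you adopt this single modification --- anchoring the far-subspace terms at $\hat{x}$ rather than at the origin and compensating with the $\beta\|v-\hat{x}\|_2$ bookkeeping --- the rest of your argument goes through and yields exactly the stated constants.
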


\begin{proof}
 Consider $t= t_n>0$. Letting $v \in F_L$, we have, using the fact that $ P^n(x) = \frac{1}{1+t}\sum_{\ell =1}^K \omega_{\ell}(x,t) P_{E_\ell}^\perp (x)$ for any $x$ and  $\sum_{\ell } \omega_\ell(x_n,t) =1$, by convexity,

\begin{equation}
\begin{split}
    \|x_{n+1} -\hat{x}\|_2 &=  \| (I-\mu A^TA) (P^n(x_n)-\hat{x}) \|_2 \\
    &= \| (I-\mu A^TA) ( \sum_{\ell=1}^{K} \frac{\omega_{\ell}(x_n,t)}{1+t} \left(P_{E_\ell}^\perp (x_n) - P_{E_\ell}^\perp (v)+P_{E_\ell}^\perp (v)\right)-\hat{x}) \|_2 \\
    &\leq   \sum_{\ell=1 }^K \frac{\omega_{\ell}(x_n,t)}{1+t} \| (I-\mu A^TA) (P_{E_\ell}^\perp (v)- (1+t)\hat{x}  )\|_2   + \| (I-\mu A^TA) (  \sum_{\ell=1 }^K\frac{\omega_{\ell}(x_n,t)}{1+t} (P_{E_\ell}^\perp (x_n)- P_{E_\ell}^\perp (v))\|_2\\
     &\leq   \sum_{\ell=1 }^K\frac{\omega_{\ell}(x_n,t)}{1+t}  \| (I-\mu A^TA) (P_{E_\ell}^\perp (v)-\hat{x}- t\hat{x} )\|_2    + \sum_{\ell=1}^K \frac{\omega_{\ell}(x_n,t)}{1+t} \| (I-\mu A^TA ) P_{E_\ell}^\perp (x_n-v))\|_2.\\
\end{split} \nonumber
\end{equation}
With the triangle inequality and the RIC, and the fact that $1+t \geq 1$ 
 \begin{equation}
\begin{split}   
   \|x_{n+1} -\hat{x}\|_2   &\leq \delta \sum_{\ell=1}^K \frac{\omega_{\ell}(x_n,t)}{1+t} (\|P_{E_\ell}^\perp (v)-\hat{x}\|_2 +t\|\hat{x}\|)   + \sum_{\ell=1}^K \frac{\omega_{\ell}(x_n,t)}{1+t}  \delta  \|P_{E_\ell}^\perp (x_n-v)\|_2\\
    &\leq \delta \sum_{\ell=1}^K   \frac{\omega_{\ell}(x_n,t)}{1+t} \|P_{E_\ell}^\perp (v)-\hat{x}\|_2   + \sum_{\ell=1}^K \frac{\omega_{\ell}(x_n,t)}{1+t} \delta  \| x_n-v\|_2 +\frac{\delta t}{1+t} \|\hat{x}\|_2\\
      &\leq \delta \sum_{\ell=1}^K   \omega_{\ell}(x_n,t)\|P_{E_\ell}^\perp (v)-\hat{x}\|_2   + \delta  \| x_n-v\|_2+\delta t \|\hat{x}\|_2.\\
\end{split} \nonumber
\end{equation}
As  $P_{E_\ell}^\perp(v) \in P_\Sigma^\perp(v)$ for $\ell \in L$ by definition of $F_L$, using the restricted Lipschitz condition of $P_\Sigma^\perp$, and the triangle inequality, we have 
\begin{equation}
\begin{split}
    \|x_{n+1} -\hat{x}\|_2 
    &\leq  \delta \beta  \sum_{\ell \in L } \omega_{\ell}(x_n,t) \|v-\hat{x}\|_2  + \delta\sum_{\ell \in L^c } \omega_{\ell}(x_n,t) \|P_{E_\ell}^\perp (v)-\hat{x}\|_2  +\delta \| x_n-v\|_2+\delta t \|\hat{x}\|_2\\
     &=  \delta \beta  \| v-\hat{x}\|_2  + \delta\sum_{\ell \in L^c }  \omega_{\ell}(x_n,t) (\|P_{E_\ell}^\perp (v)-\hat{x}\|_2 -\beta \|v-\hat{x}\|_2) +  \delta \| x_n-v\|_2+\delta t\|\hat{x}\|_2\\
      &\leq \delta \beta\|x_n-\hat{x}\|_2   + \delta\sum_{\ell \in L^c }  \omega_{\ell}(x_n,t) (\|P_{E_\ell}^\perp (v)-\hat{x}\|_2 -\beta \|v-\hat{x}\|_2) + ( \delta +\delta\beta) \| x_n-v\|_2+\delta t\|\hat{x}\|_2.\\
\end{split} \nonumber
\end{equation}
We remark that $\|P_{E_\ell}^\perp (v)-\hat{x}\|_2 -\beta \|v-\hat{x}\|_2 \leq \|(I- P_{E_\ell}^\perp) (v)\|_2 +(1-\beta )\|v-\hat{x}\|_2  \leq \|(I- P_{E_\ell}^\perp) (v)\|_2  \leq \|v\|_2$ and we conclude
\begin{equation}
\begin{split}
    \|x_{n+1} -\hat{x}\|_2 
 &\leq  \delta \beta  \| x_n-\hat{x}\|_2  + \delta\sum_{\ell \in L^c } \omega_{\ell}(x_n,t)\|v\|_2  + ( \delta +\delta\beta) \| x_n-v\|_2+\delta t\|\hat{x}\|_2 .\\
\end{split} \nonumber
\end{equation}

\end{proof}

We need the following Lemma to bound the quantities  $\omega_\ell(x_n,t)$ given in Lemma~\ref{lem:bound_iterates_F}.
\begin{lem} \label{lem:bound_tail_omega}
Let  $L\subset \{1,\ldots,K \}$, $x \in \mathbb{R}^d$ such that $ P_{E_k}^\perp(x) \in P_\Sigma^\perp(x)$. Suppose for $ \ell \in L^c$ we have $\|P_{E_\ell}^\perp(x_n)-x_n\|_2^2- \|P_{E_k}^\perp(x_n)-x_n\|_2^2 \geq \alpha$, then 
\begin{equation} 
\sum_{\ell\in L^c}\omega_\ell(x,t)  \leq\left( \sum_{\ell\in L^c}\frac{\pi_\ell}{\pi_k}\right) \exp\left(-\frac{1}{2}\frac{\alpha}{t(1+t)} \right) \nonumber
\end{equation}
\end{lem}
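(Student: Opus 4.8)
The plan is to bound the sum of weights by a single dominant ratio and then invoke the explicit Gaussian formula already derived. First I would rewrite the left-hand side using the definition $\omega_\ell(x,t) = \nu_\ell(x,t)/\sum_{m=1}^K \nu_m(x,t)$, so that $\sum_{\ell\in L^c}\omega_\ell(x,t) = \left(\sum_{\ell\in L^c}\nu_\ell(x,t)\right)/\left(\sum_{m=1}^K\nu_m(x,t)\right)$. Since $k$ is an index realizing the orthogonal projection onto $\Sigma$, the term $\nu_k(x,t)$ appears in the denominator and every $\nu_m(x,t)$ is strictly positive, so discarding all terms except $m=k$ yields the lower bound $\sum_{m=1}^K\nu_m(x,t) \geq \nu_k(x,t)$. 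This gives immediately
\[
\sum_{\ell\in L^c}\omega_\ell(x,t) \;\leq\; \sum_{\ell\in L^c}\frac{\nu_\ell(x,t)}{\nu_k(x,t)}.
\]

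Next I would apply the explicit ratio computed in \eqref{eq:convergence_score}. Because all subspaces share the same dimension $r_\ell = r_k = r$, the polynomial prefactor $\sqrt{(1+t)^{r_k}t^{d-r_k}}/\sqrt{(1+t)^{r_\ell}t^{d-r_\ell}}$ collapses to $1$, leaving
\[
\frac{\nu_\ell(x,t)}{\nu_k(x,t)} = \frac{\pi_\ell}{\pi_k}\exp\left(-\frac{1}{2}\frac{1}{t(1+t)}\big(\|P_{E_\ell}^\perp(x)-x\|_2^2 - \|P_{E_k}^\perp(x)-x\|_2^2\big)\right).
\]
The hypothesis states that the bracketed difference is at least $\alpha$ for each $\ell\in L^c$; since the coefficient $1/(2t(1+t))$ is positive, the exponent is at most $-\alpha/(2t(1+t))$, so each ratio is bounded by $(\pi_\ell/\pi_k)\exp\left(-\alpha/(2t(1+t))\right)$.

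Finally I would sum over $\ell\in L^c$ and factor out the common exponential, obtaining the claimed bound. There is no substantive obstacle here, as the argument is a short chain of inequalities. The only points requiring care are (i) verifying that the denominator lower bound is taken precisely at the closest-subspace index $k$, so that $\nu_k$ is comparatively large and the inequality points in the right direction, and (ii) confirming that the equal-dimension assumption is in force so that the determinant/polynomial factors cancel cleanly — this is guaranteed because the lemma is invoked within the setting of Lemma~\ref{lem:bound_op_norm1}, where $\dim(E_\ell)=r$ for all $\ell$.
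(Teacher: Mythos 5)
Your proof is correct and follows essentially the same route as the paper: bound the denominator $\sum_m \nu_m(x,t)$ below by $\nu_k(x,t)$, apply the explicit ratio from \eqref{eq:convergence_score} (with the polynomial prefactor cancelling under the equal-dimension convention), and use the hypothesis $\|P_{E_\ell}^\perp(x)-x\|_2^2-\|P_{E_k}^\perp(x)-x\|_2^2\geq\alpha$ to bound each exponential. The only difference is cosmetic — you sum the weights before bounding, the paper bounds each $\omega_\ell$ term-by-term first — and your explicit remark about where the equal-dimension assumption enters is a useful clarification.
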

\begin{proof}
We have, similarly as in the proof of Lemma~\ref{lem:bound_op_norm1}, for $\ell \in L^c$,
\begin{equation}
\begin{split}
\omega_\ell(x,t) &\leq \frac{\pi_\ell}{\pi_k} \frac{\nu_{\ell}(x,t)}{\nu_{k}(x,t)} \leq 
 \frac{\pi_\ell}{\pi_k}  \exp\left(-\frac{1}{2}\frac{\alpha}{t(1+t)} \right).\\
\end{split} \nonumber
\end{equation} 
We deduce 
\begin{equation} 
\begin{split}
\sum_{\ell\in L^c}\omega_\ell(x,t)  & \leq \sum_{\ell\in L^c} \frac{\pi_\ell}{\pi_k} \exp\left(-\frac{1}{2}\frac{\alpha}{t(1+t)} \right).  \\ \nonumber
\end{split}
\end{equation}
\end{proof}

Suppose $P_{E_k}^\perp(x_n) \in P_{\Sigma}(x_n)  $. Let $L \subset  \{1,\ldots,K\}$. Define 
\begin{equation}
\eta_n^L := \min_{\ell \in L, \ell \neq k}\left(\|P_{E_\ell}^\perp(x_n)-x_n\|_2^2- \|P_{E_k}^\perp(x_n)-x_n\|_2^2 \right)  = \min_{\ell \in L,\ell \neq k}\left(\|P_\Sigma(x_n)\|_2^2-\|P_{E_\ell}^\perp(x_n)|_2^2\right)  \nonumber
\end{equation}
Before giving the main convergence result, we need to link the definition of $\eta_n^{L_n}$
 with the accumulation of points near $F_{L_n}$, where $L_n$ is a sequence of indices defining a sequence of frontier sets.
 
 \begin{lem} \label{lem:continuity_argument}
Consider a sequence $(x_n)$ and a sequence of indices $L_n\subset \{1,\ldots,K\}$. Suppose $\lim_{n\to \infty}\eta_n^{L_n} = 0$. Then $d(x_n,F_{L_n}) = \min_{v \in F_{L_n} }\|v-x_n\|_2\to_{n\to \infty} 0$.

\end{lem}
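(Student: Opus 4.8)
The plan is to argue by contradiction and compactness, transferring the vanishing of the minimal gaps $\eta_n^{L_n}$ to a limit point by continuity of the squared-distance functions $\phi_\ell(x) := \|P_{E_\ell}^\perp(x)-x\|_2^2 = x^T(I-U_\ell U_\ell^T)x$, each of which is a continuous quadratic form. The two ingredients that make a soft argument possible are that these gaps depend continuously on $x$ and that there are only finitely many subsets $L\subseteq\{1,\dots,K\}$ and finitely many possible closest indices $k_x$, so all the combinatorial data can be frozen along a subsequence.

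First I would suppose, for contradiction, that $d(x_n,F_{L_n})\not\to 0$, so that along some subsequence (not relabeled) one has $d(x_n,F_{L_n})\geq \varepsilon>0$. Since $P_{E_\ell}^\perp(0)=0$ for every $\ell$, the origin lies in $F_L$ for every $L$, and hence $\|x_n\|_2\geq d(x_n,F_{L_n})\geq \varepsilon$, which pins the iterates away from the origin. Using finiteness, I would pass to a further subsequence along which $L_n\equiv L$ and the closest index $k_{x_n}\equiv k$ are both constant, and then (invoking boundedness of the algorithm's iterates, cf.\ the a priori bound in the proof of Theorem~\ref{th:conv_var_proj}) extract a convergent subsequence $x_n\to x_*$ with $\|x_*\|_2\geq\varepsilon$. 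Because $k$ is a closest index, $\phi_\ell(x_n)\geq \phi_k(x_n)$ for all $\ell$, and passing to the limit preserves this, so $k$ remains a closest index at $x_*$. Continuity of the $\phi_\ell$ then turns the hypothesis $\eta_n^{L}=\min_{\ell\in L,\ell\neq k}\bigl(\phi_\ell(x_n)-\phi_k(x_n)\bigr)\to 0$ into $\min_{\ell\in L,\ell\neq k}\bigl(\phi_\ell(x_*)-\phi_k(x_*)\bigr)=0$, placing $x_*$ on the frontier $F_{L}$. Finally, $v\mapsto d(v,F_{L})$ is $1$-Lipschitz, so $d(x_n,F_{L})\to d(x_*,F_{L})=0$, contradicting $d(x_n,F_{L_n})\geq\varepsilon$.

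The hard part will be the interplay between scaling and the absolute distance: $\eta_n^{L}$ is $2$-homogeneous in $x$ while $F_{L}$ is a cone, so a purely local continuity estimate does not control $d(x_n,F_{L_n})$ unless $\|x_n\|_2$ is trapped between two positive constants. The lower bound is exactly what $0\in F_{L_n}$ provides; for the upper bound one uses boundedness of the iterates (or, in a general statement, one normalizes to $\tilde x_n=x_n/\|x_n\|_2$ on the compact unit sphere, where $\eta_n^{L}(\tilde x_n)=\eta_n^{L}(x_n)/\|x_n\|_2^2\to 0$ still holds, and then transports the conclusion back through the cone structure). A secondary delicate point is checking that the limit $x_*$ lies in $F_{L_n}$ rather than merely in the coarse frontier $F$: one must verify that \emph{all} the defining equalities of $F_{L}$ survive the limit, which relies on the index set $L_n$ being chosen in the main proof as the collection of (nearly) tied subspaces, so that control of the minimal gap indeed propagates to every active constraint.
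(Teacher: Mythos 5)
Your proposal rests on the same two pillars as the paper's proof—continuity of the gap functions $x\mapsto\|P_{E_\ell}^\perp(x)-x\|_2^2-\|P_{E_{k_x}}^\perp(x)-x\|_2^2$ (via continuity of $x\mapsto\|P_\Sigma^\perp(x)\|_2^2$) and finiteness of the index sets $L\subset\{1,\ldots,K\}$, which lets one freeze $L_n\equiv L$ along finitely many subsequences—so at heart it is the same argument. The packaging differs: the paper defines $f_{L}(x)=\min_{\ell\in L}\bigl(\|P_{E_\ell}^\perp(x)-x\|_2^2-\|P_{E_{k_x}}^\perp(x)-x\|_2^2\bigr)$ and directly asserts that $f_{L}$ continuous with $f_{L}(x_i)\to 0$ forces $d(x_i,\{f_{L}=0\})\to 0$, whereas you run a contradiction-plus-compactness argument that extracts a limit point $x_*$ and places it on the frontier. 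Your version makes explicit the hypothesis hiding in the paper's assertion: for a general continuous function that implication requires the sequence to stay in a compact set, and boundedness of $(x_n)$ is \emph{not} a hypothesis of the lemma (it is stated for an arbitrary sequence, and in the application boundedness of the iterates is only derived \emph{after} the lemma is invoked). Your normalization fallback does not fully close this either: since $F_{L}$ is a cone and the gaps are $2$-homogeneous, $d(x_n,F_{L})=\|x_n\|_2\,d(x_n/\|x_n\|_2,F_{L})$, so knowing $d(x_n/\|x_n\|_2,F_{L})\to 0$ does not by itself control $d(x_n,F_{L})$ when $\|x_n\|_2$ is unbounded. This is a shared soft spot of both proofs rather than a defect unique to yours.

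One further point you correctly isolate and the paper elides: $\eta_n^{L}$ is a \emph{minimum} over $\ell\in L\setminus\{k\}$, so $\eta_n^{L}\to 0$ only forces one tie in the limit, while $F_{L}$ as defined in the text requires equality for \emph{all} $\ell\in L$. The paper's proof silently replaces $F_{L}$ by $\{x:f_{L}(x)=0\}$, which is the larger set where \emph{some} index in $L$ ties; your limit point $x_*$ lands in that larger set, not necessarily in $F_{L}$ as literally defined. For the way the lemma is used in the global convergence proof (one only needs \emph{some} nearby frontier point $v_n$ with all indices outside $L_n$ well separated), this reinterpretation is harmless, but it is worth stating the lemma with the $\{f_{L}=0\}$ version of the frontier to make the statement and proof match.
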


 \begin{proof}
Define $k_x$ the index of a subspace closest to $x$.  Consider the functions 
\begin{equation}
f_{L_n}(x) = \min_{\ell \in L_n}(\|P_{E_\ell}^\perp(x)-x\|_2^2- \|P_{E_{k_x}}^\perp(x)-x\|_2^2 )\geq 0 . \nonumber
\end{equation}
We have that $f_{L_n}$ is continuous. Indeed it is the minimum of $|L_n|$ continuous functions $x \mapsto \|P_{E_\ell}^\perp(x)-x\|_2^2- \|P_{E_{k_x}}^\perp(x)-x\|_2^2 = \|P_{\Sigma}^\perp(x)\|_2^2-\|P_{E_\ell}^\perp(x)\|_2^2$ (because $ x \mapsto \|P_{\Sigma}^\perp(x)\|_2^2$ is continuous, see \cite[Lemma 3.7]{traonmilin2024towards}).

Hence, for a fixed $n$, any sequence $x_i$ such that $\lim_{i\to \infty}f_{L_n}(x_i) = 0$ is such that $d(x_i,F_{L_n}) \to_{i\to\infty} 0$ as $F_{L_n} =\{ x : f_{L_n}(x)=0\}$. As there is only a finite number of functions $f_{L_n}$, the sequence $f_{L_n}(x_n)$ can be partitioned into a finite number of subsequences where $L_n = L$. For each of these subsequences, we have $\lim_{n\to \infty} d(x_n,F_{L}) = 0$, which concludes the proof. 
\end{proof}

We now give the main Theorem controlling global convergence. 

\begin{proof}[Proof of Theorem~\ref{th:global_convergence_gmm}]
 First, we use Lemma~\ref{lem:bound_iterates_F}  with $v = 0$ and $L = \{1,\ldots,K\}$ this gives 
\begin{equation}
\begin{split}
    \|x_{n+1} -\hat{x}\|_2  &\leq \delta \beta  \| x_n-\hat{x}\|_2  +   \delta(1+\beta)\| x_n\|_2+\delta t\|\hat{x}\|_2.\\
\end{split} \nonumber
\end{equation}
 We observe that there exists $b>0$ and a rank $n_0$ such that for $n\geq n_0$, $\|x_n\|_2>b$, otherwise we can find a subsequence of $x_n$ converging to $0$ and, as $t \to 0$ taking the above inequality to the limit leads to  $\|\hat{x}\|_2 \leq  \delta \beta  \|\hat{x}\|_2  $ which contradicts the hypothesis $\delta\beta< 1$.

At each iteration, we can use two bounds: 
\begin{itemize}
    \item Lemma~\ref{lem:bound_iterates_F} guarantees that for any choice of $L \subset \{1, \ldots, K\} $, $v \in F_L$, we have, as $\sum_{\ell \in L^c } \omega_{\ell}(x_n,t) \leq 1$,
\begin{equation}
\begin{split}
    \|x_{n+1} -\hat{x}\|_2 
     &\leq \delta \beta  \| x_n-\hat{x}\|_2  + \|v\|_2  \delta \left(\sum_{\ell \in L^c } \omega_{\ell}(x_n,t) \right)+ \delta(1+\beta) \| x_n-v\|_2+\delta t\|\hat{x}\|_2\\
      &\leq \delta \beta  \| x_n-\hat{x}\|_2  + \|x_n +v -x_n\|_2  \delta \left(\sum_{\ell \in L^c } \omega_{\ell}(x_n,t) \right)+ \delta(1+\beta) \| x_n-v\|_2+\delta t\|\hat{x}\|_2\\
     &\leq \delta \beta  \| x_n-\hat{x}\|_2  + \|x_n\|_2  \delta \left(\sum_{\ell \in L^c } \omega_{\ell}(x_n,t) \right)+ \delta(2+\beta) \| x_n-v\|_2+\delta t\|\hat{x}\|_2;\\
\end{split} \nonumber
\end{equation}

\item Lemma~\ref{lem:bound_iterates_gen2} gives that between $x_n$ and $x_{n+1}$, we have the bound
\begin{equation}
\begin{split}
    \|x_{n+1} -\hat{x}\|_2 
     &\leq  \delta \beta  \| x_n-\hat{x}\|_2  +\|I-\mu A^T A\|_{\mathrm{op}} \| P^n(x_n)-P_\Sigma(x_n)\|_2. \\
\end{split}\nonumber
\end{equation}
\end{itemize}
This gives
\begin{equation}
\begin{split}
    \|x_{n+1} -\hat{x}\|_2 
     &\leq  \delta \beta  \| x_n-\hat{x}\|_2  
     + D \min (\| P^n(x_n)-P_\Sigma(x_n)\|_2 , \|x_n\|_2 ( \sum_{\ell \in L^c } \omega_{\ell}(x_n,t) ) + \|v-x_n\|_2 + t))\\
\end{split}\nonumber
\end{equation} 
where $D =\max(\|I-\mu A^T A\|_{\mathrm{op}},\delta(2+\beta), \delta \|\hat{x}\|_2) $.
 
We can specifically choose $v_n \in F_{L_n}$ at each step $n$ where $L_n$ and $v_n$ are to be set later. We have 
\begin{equation}
\begin{split}
    \|x_{n+1} -\hat{x}\|_2 
     & \leq  \delta \beta  \| x_n-\hat{x}\|_2 +\|x_n\|_2 D U_n
\end{split}\nonumber
\end{equation}
where $U_n = \min \left(\frac{\| P^n(x_n)-P_\Sigma(x_n)\|_2}{\|x_n\|_2}  , ( \sum_{\ell \in L_n^c } \omega_{\ell}(x_n,t) ) + \frac{\|v_n-x_n\|_2}{\|x_n\|}+\frac{t}{\|x_n\|_2}\right)$.

Define $[K] =\{1,\ldots,K\}$ and $k_n$ such that $P_{E_{k_n}}(x_n) \in P_\Sigma^\perp(x_n)$.  Let $\ell_n \in \arg \min_{\ell\neq k_n} \|P_{E_\ell}^\perp(x_n)-x_n\|_2^2- \|P_{E_{k_n}}^\perp(x_n)-x_n\|_2^2$ , consider $L_n \supset \{k_n,\ell_n\}$. 

We have, using Lemma~\ref{lem:bound_op_norm1}, Lemma~\ref{lem:bound_tail_omega} and the fact that $\|x_n\|\geq b >0$ for $n\geq n_0$, 
\begin{equation}
\begin{split}
  U_n &= O\left(  \min \left(\exp( -\frac{1}{4} \frac{\eta_n^{[K]}}{t} )  +t ,  \exp\left(-\frac{1}{4}\frac{\eta_n^{L_n^c}}{t} \right) +\frac{\|x_n-v_n\|_2}{\|x_n\|_2}+\frac{t}{\|x_n\|_2}\right)\right)\\
     & =  O \left( \min \left(\exp( -\frac{1}{4} \frac{\eta_n^{[K]}}{t} )  , \exp\left(-\frac{1}{4}\frac{\eta_n^{L_n^c}}{t} \right)+\|x_n-v_n\|_2\right) + t\right),\\
\end{split}\nonumber
\end{equation}
i.e. there exists $D' > 0$ such that 
\begin{equation}
\begin{split}
  U_n&\leq   D' (\min (\exp( -\frac{1}{4} \frac{\eta_n^{[K]}}{t} )   ,  \exp\left(-\frac{1}{4}\frac{\eta_n^{L_n^c}}{t} \right)+\|x_n-v_n\|_2) +t).\\
\end{split}\nonumber
\end{equation}
Let $c'> 0$, we distinguish two cases:
\begin{itemize}
    \item if $\exp( -\frac{1}{4} \frac{\eta_n^{[K]}}{t} ) \leq c't$  we have 
\begin{equation}
\begin{split}
   U_n &\leq D'(c' t  +t) = D'(1+c')t; \\
\end{split}\nonumber
\end{equation}

\item Otherwise, we have $\exp( -\frac{1}{4} \frac{\eta_n^{[K]}}{t} ) > c't$,  $ \eta_n^{[K]} \leq 4t \log(\frac{1}{c't}) \to_{t\to 0} 0$. Subsequently we choose 
\begin{equation}
L_n= \left\{ \ell : \exp\left(-\frac{1}{4}\frac{\|P_{E_\ell}^\perp(x_n)-x_n\|_2^2- \|P_{E_{k_n}}^\perp(x_n)-x_n\|_2^2}{t} \right) > c't \right\}\nonumber
\end{equation}
which contains $\ell_n$ (by definition of $\ell_n$ and using the condition $\exp( -\frac{1}{4} \frac{\eta_n^{[K]}}{t} ) > c't$). 
This guarantees that  $ \exp\left(-\frac{1}{4}\frac{\eta_n^{L_n^c}}{t} \right) \leq c't$ and
 \begin{equation}
\begin{split}
   U_n &\leq  D'(c't+\|x_n-v_n\|_2+t) = D' ( (1+c')t + \|x_n-v_n\|_2) \\
\end{split}\nonumber
\end{equation}
 Notice that $\eta_n^{L_n} =\eta_n^{[K]} \to_{n\to \infty} 0$, taking $v_n \in \arg\min_{v \in F_{L_n}} \|v-x_n\|_2$, with Lemma~\ref{lem:continuity_argument}, we have   $\lim_{n\to\infty}\|v_n-x_n\|_2 =0$.
   
\end{itemize}
As $t \to 0$, we deduce that any subsequence of $U_n$ converges to $0$ and $\lim_{n\to \infty} U_n = 0 $.

We have 
\begin{equation}
 \|x_{n+1}-\hat{x}\|_2 \leq \delta\beta \|x_n-\hat{x}\| + D \|x_n\|U_n \leq  (\delta\beta +DU_n) \|x_n-\hat{x}\| + D \|\hat{x}\|U_n \nonumber
\end{equation}
with $\lim_{n\to \infty} U_n = 0$. For sufficiently large $n$,  $\delta\beta +DU_n  \leq \rho < 1$  and as $U_n$ is bounded , e.g. $D \|\hat{x}\|U_n  \leq D''$, for some constant $D''$,we have  $\|x_{n+1}-\hat{x}\|_2 \leq \rho\|x_n-\hat{x}\| + D'' $ and 
\begin{equation}
 \|x_{n+1}-\hat{x}\|_2 \leq \rho^n\|x_0-\hat{x}\|_2 + D'' \frac{1}{1-\rho}.\nonumber
\end{equation}
We deduce that $\|x_n\|_2$ is bounded and $V_n =  D \|x_n\|U_n  \to_{n\to \infty} 0$.

Similarly to Corollary~\ref{th:local_convergence}, we deduce that $\lim_{n\to \infty}\|x_{n+1}-\hat{x}\|_2 = 0$ and that there exists  $n_1$ large enough such that $n\geq n_1$  implies $x_n \in B(\hat{x},C) \subset \omega_{\eta}$ for some $\eta> 0$ and
\begin{equation} \label{eq:bound_Un}
\begin{split}
  U_n&= O( \exp( -\frac{1}{4} \frac{\eta}{t} )  +t).\\
\end{split}
\end{equation}
We have, by induction as in Lemma~\ref{lem:bound_iterates_gen2},  
\begin{equation}
\begin{split}
 \|x_{n}-\hat{x}\|_2 &\leq \delta\beta \|x_n-\hat{x}\|_2 + V_n  \\
  & \leq  (\delta\beta)^n \|x_n-\hat{x}\|_2 +  \sum_{l=0}^{n-1} (\delta\beta)^{n-l-1}V_l
 \end{split}\nonumber
\end{equation}
We deduce as in Theorem~\ref{th:conv_var_proj}, that there is $C'>0$ such that
\begin{equation}
\begin{split}
 \|x_{n}-\hat{x}\|_2 &\leq C'( ( \sqrt{\delta \beta})^{n}   + \max_{l=\lfloor n/2\rfloor,n}V_l). \\
 \end{split}\nonumber
\end{equation}

Using the fact that $V_n = O(U_n)$ and~\eqref{eq:bound_Un}, we deduce that there exists $C>0$ such that
\begin{equation}
\begin{split}
 \|x_{n}-\hat{x}\|_2 &\leq C( ( \sqrt{\delta \beta})^{n}   + \max_{l=\lfloor n/2\rfloor,n}( \exp( -\frac{1}{4} \frac{\eta}{t_l} )  +t_l)\\
 \end{split}\nonumber
\end{equation}

Setting $c= \frac{1}{4} \eta$ and recalling that $ t_l = \sigma^2_l$ leads to the result.

\end{proof}

%\end{document}

\end{document}